\documentclass{article}

\usepackage[preprint, nonatbib]{neurips_2023}

\usepackage[utf8]{inputenc} %
\usepackage[T1]{fontenc}    %
\usepackage{hyperref}       %
\usepackage{url}            %
\usepackage{booktabs}       %
\usepackage{amsfonts}       %
\usepackage{nicefrac}       %
\usepackage{microtype}      %

\usepackage{caption}
\usepackage{subcaption}
\usepackage{wrapfig}
\usepackage[bottom]{footmisc}
\usepackage{header}

\newcommand{\citep}[1]{\cite{#1}}
\newcommand{\citet}[2][]{\cite[#1]{#2}}
\newcommand{\arxivonly}[1]{{#1}}
\newcommand{\arxivonlyverbatim}[1]{#1}
\newcommand{\confonly}[1]{}

\newcommand{\img}[1]{\includegraphics[width=0.17\textwidth]{#1}}

\title{Interpreting and Improving Diffusion Models from an Optimization Perspective}

\author{%
  Frank Permenter*\\
  Toyota Research Institute\\
  Cambridge, MA 02139 \\
  \texttt{frank.permenter@tri.global} \\
   \And{}
  Chenyang Yuan*\\
  Toyota Research Institute\\
  Cambridge, MA 02139 \\
  \texttt{chenyang.yuan@tri.global} \\
}

\begin{document}
\maketitle

\begin{abstract}
  Denoising is intuitively related to projection. Indeed, under the manifold
  hypothesis, adding random noise is approximately equivalent to orthogonal
  perturbation. Hence, learning to denoise is approximately learning to project.
  In this paper, we use this observation to interpret denoising diffusion models
  as approximate gradient descent applied to the Euclidean distance function. We
  then provide straight-forward convergence analysis of the DDIM sampler under
  simple assumptions on the projection error of the denoiser. Finally, we
  propose a new gradient-estimation sampler, generalizing DDIM using insights
  from our theoretical results. In as few as 5-10 function evaluations, our
  sampler achieves state-of-the-art FID scores on pretrained CIFAR-10 and CelebA
  models and can generate high quality samples on latent diffusion models.
\end{abstract}

\section{Introduction}
Diffusion models achieve state-of-the-art quality on many image
generation tasks
\citep{ramesh2022hierarchical,rombach2022high,saharia2022photorealistic}. They
are also successful in text-to-3D generation
\citep{poole2022dreamfusion} and novel view synthesis
\citep{liu2023zero}. Outside the image domain, they have been used for robot path-planning
\citep{chi2023diffusion}, prompt-guided human animation
\citep{tevet2022human},  and text-to-audio generation
\citep{kong2020diffwave}.

Diffusion models are presented as the reversal of a stochastic process that
corrupts clean data with increasing levels of random
noise~\citep{sohl2015deep,ho2020denoising}.  This reverse process can also be
interpreted as likelihood maximization of a noise-perturbed data distribution
using learned gradients (called \emph{score functions})
\citep{song2019generative,song2020score}. While these interpretations are
inherently probabilistic, samplers widely used in practice
(e.g. \cite{song2020denoising}) are often deterministic, suggesting diffusion
can be understood using a purely deterministic analysis. In this paper, we
provide such analysis by interpreting denoising as approximate
\emph{projection}, and diffusion as \emph{distance minimization} with gradient
descent, using the denoiser output as an estimate of the gradient. This in turn
leads to novel convergence results, algorithmic extensions, and paths towards
new generalizations.

\begin{figure*}
\centering
\begin{subfigure}[b]{0.33\textwidth}
  \centering
  \def\svgwidth{\textwidth}
  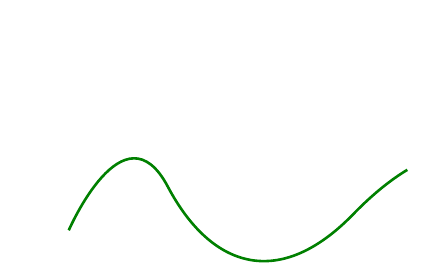
  \caption{Low noise}
  \label{fig:proj-low-noise}
\end{subfigure}%
\begin{subfigure}[b]{0.33\textwidth}
  \centering
  \def\svgwidth{\textwidth}
  \begingroup%
  \makeatletter%
  \providecommand\color[2][]{%
    \errmessage{(Inkscape) Color is used for the text in Inkscape, but the package 'color.sty' is not loaded}%
    \renewcommand\color[2][]{}%
  }%
  \providecommand\transparent[1]{%
    \errmessage{(Inkscape) Transparency is used (non-zero) for the text in Inkscape, but the package 'transparent.sty' is not loaded}%
    \renewcommand\transparent[1]{}%
  }%
  \providecommand\rotatebox[2]{#2}%
  \newcommand*\fsize{\dimexpr\f@size pt\relax}%
  \newcommand*\lineheight[1]{\fontsize{\fsize}{#1\fsize}\selectfont}%
  \ifx\svgwidth\undefined%
    \setlength{\unitlength}{112.36101646bp}%
    \ifx\svgscale\undefined%
      \relax%
    \else%
      \setlength{\unitlength}{\unitlength * \real{\svgscale}}%
    \fi%
  \else%
    \setlength{\unitlength}{\svgwidth}%
  \fi%
  \global\let\svgwidth\undefined%
  \global\let\svgscale\undefined%
  \makeatother%
  \begin{picture}(1,0.83485553)%
    \lineheight{1}%
    \setlength\tabcolsep{0pt}%
    \put(0,0){\includegraphics[width=\unitlength,page=1]{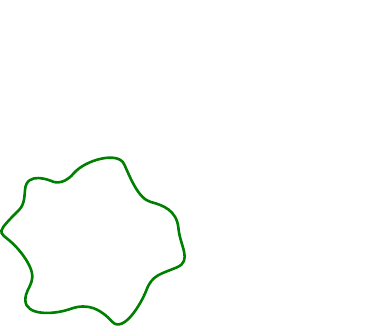}}%
    \put(0.22398777,0.17210834){\color[rgb]{0,0.50196078,0}\makebox(0,0)[lt]{\lineheight{1.25}\smash{\begin{tabular}[t]{l}$\mathcal{K}$\end{tabular}}}}%
    \put(0,0){\includegraphics[width=\unitlength,page=2]{figures/high_noise.pdf}}%
    \put(0.48161453,0.09108688){\makebox(0,0)[lt]{\lineheight{1.25}\smash{\begin{tabular}[t]{l}$x_0$\end{tabular}}}}%
    \put(0.82997489,0.80127343){\makebox(0,0)[lt]{\lineheight{1.25}\smash{\begin{tabular}[t]{l}$x_\sigma$\end{tabular}}}}%
    \put(0,0){\includegraphics[width=\unitlength,page=3]{figures/high_noise.pdf}}%
    \put(0.229652,0.60311875){\color[rgb]{1,0,0}\makebox(0,0)[lt]{\lineheight{1.25}\smash{\begin{tabular}[t]{l}$\proj_{\mathcal{K}}(x_\sigma)-x_\sigma$\end{tabular}}}}%
  \end{picture}%
\endgroup%

  \caption{High noise}
  \label{fig:proj-high-noise}
\end{subfigure}
\begin{subfigure}[b]{0.33\textwidth}
  \centering
  \def\svgwidth{\textwidth}
  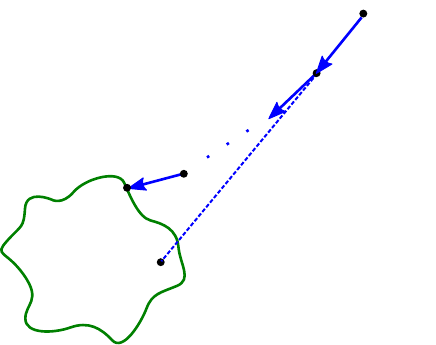
  \caption{Denoising process (DDIM)}
  \label{fig:proj-denoise}
\end{subfigure}
\caption{Denoising approximates projection: When $\sigma$ is small
  (\ref{fig:proj-low-noise}), most of the added noise lies in $\tan_\mathcal{K}(x_0)^\perp$
  with high probability under the manifold hypothesis. When $\sigma$ is large
  (\ref{fig:proj-high-noise}), both denoising and projection point in the same
  direction towards $\coneName$. We interpret the denoising process
  (\ref{fig:proj-denoise}) as minimizing $\distK^2(x)$ by iteratively taking
  gradient steps, estimating the direction of
  $\nabla \frac{1}{2}\distK^2(x) = x_t - \proj_\coneName(x_t)$ with
  $\epsilon_\theta(x_t)$.}
\label{fig:cartoons}
\end{figure*}

\paragraph{Denoising approximates projection} The core object in diffusion is a
\emph{learned denoiser} $\epsilon_\theta(x, \sigma)$, which, when given a noisy
point $x \in \mathbb{R}^n$ with noise level $\sigma > 0$, predicts the
\emph{noise direction} in $x$, i.e., it estimates $\epsilon$ satisfying
$x = x_0 + \sigma \epsilon$ for a clean datapoint $x_0$.

Prior work~\cite{rick2017one} interprets denoising as approximate
\emph{projection} onto the data manifold $\Kset \subseteq \mathbb{R}^n$.  Our
first contribution makes this interpretation rigorous by introducing a
relative-error model, which states that $x-\sigma \epsilon_{\theta}( x, \sigma)$
well-approximates the projection of $x$ onto $\Kset$ when $\sqrt{n} \sigma$
well-estimates the distance of $x$ to $\Kset$.  Specifically, we will assume
that
 \begin{align}\label{eq:relativeError}
  \| x-\sigma \epsilon_{\theta}( x, \sigma) - \projK(x)\| \le \eta \distK(x)
 \end{align}
when $(x, \sigma)$ satisfies $\frac{1}{\nu}\distK(x) \le \sqrt{n}\sigma \le \nu \distK(x)$
for constants $1 > \eta  \ge 0$ and $\nu \ge 1$.

This error model is motivated by the following theoretical observations
that hold when $\sigma \approx \distK(x)/\sqrt{n} $:
\begin{enumerate}
\item When $\sigma$ is small and the manifold hypothesis holds, denoising
  approximates projection given that most of the added noise is orthogonal to the data
  manifold; see \Cref{fig:proj-low-noise} and
  \Cref{prop:OracleDenoisingUnderReachInformal}.
\item When $\sigma$ is large, then any denoiser predicting any weighted mean of
  the data $\Kset$ has small relative error; see \Cref{fig:proj-high-noise} and
  \Cref{prop:rel-error-large-noise}.
\item Denoising with the \emph{ideal denoiser} is a $\sigma$-smoothing of
  $\projK(x)$ with relative error that can be controlled under mild assumptions;
  see \Cref{sec:ideal-denoiser-error}.
\end{enumerate}

We also empirically validate this error model on sequences
$(x_i, \sigma_i)$ generated by pretrained diffusion samplers, showing that it
holds in practice on image datasets.

\paragraph{Diffusion as distance minimization}
Our second contribution analyzes diffusion sampling under the error
model~\eqref{eq:relativeError}. In particular, we show it is equivalent to
approximate gradient descent on the \emph{squared} Euclidean distance function
$f(x) := \frac{1}{2} \distK^2(x)$, which satisfies $\nabla f(x) = x-\projK(x)$.  Indeed, in
this notation, \eqref{eq:relativeError} is equivalent to
\[
  \| \nabla f(x)-\sigma \epsilon_{\theta}( x, \sigma)\| \le \eta \| \nabla f(x)\|,
\]
a standard relative-error assumption used in gradient-descent analysis.  We also
show how the error parameters $(\eta, \nu)$ controls the schedule of noise
levels $\sigma_t$ used in diffusion sampling. \Cref{thm:DDIMWithError} shows
that with bounded error parameters, a geometric $\sigma_t$ schedule guarantees
decrease of $\distK(x_t)$ in the sampling process. Finally, we leverage
properties of the distance function to design a sampler that aggregates previous
denoiser outputs to reduce gradient-estimation error (\Cref{sec:newAlgorithms}).

We conclude with computational evaluation of our sampler
(\Cref{sec:experiments}) that demonstrates state-of-the-art FID scores on
pretrained CIFAR-10 and CelebA datasets and comparable results to the best
samplers for high-resolution latent models such as Stable Diffusion
\citep{rombach2022high} (\Cref{fig:sd-figures}). \Cref{sec:discussion} provides
novel interpretations of existing techniques under the framework of distance
functions and outlines directions for future research.

\section{Background}\label{sec:background}
Denoising diffusion models (along with all other generative models) treat
datasets as samples from a probability distribution $D$ supported on a subset
$\coneName$ of $\mathbb{R}^n$.  They are used to \emph{generate} new points in
$\coneName$ outside the training set. We overview their basic features. We then
state properties of the Euclidean distance function $\distK(x)$ that are key to
our contributions.

\subsection{Denoising Diffusion Models}
\paragraph{Denoisers}
Denoising diffusion models are trained to estimate a \emph{noise vector}
$\epsilon \in \mathbb{R}^n$ from a given noise level $\sigma > 0$ and noisy
input $x_\sigma \in \mathbb{R}^n$ such that $x_\sigma = x_0 + \sigma \epsilon$
approximately holds for some $x_0$ in the data manifold $\coneName$.  The
learned function, denoted
$\epsilon_{\theta} : \mathbb{R}^n \times \mathbb{R}_+ \rightarrow \mathbb{R}^n$,
is called a \emph{denoiser}. The trainable parameters, denoted jointly by
$\theta \in \mathbb{R}^m$, are found by (approximately) minimizing the following
loss function using stochastic gradient descent:
\begin{align}\label{eq:trainingLoss}
  L(\theta): = \Ex{}
  \norm{\epsilon_\theta(x_0 +  \sigma_t \epsilon, \sigma_t) - \epsilon}^2
\end{align}
where the expectation is taken over $x_0 \sim D$, $t \sim [N]$, and
$\epsilon \sim \mathcal{N}(0, I)$.  Given noisy $x_\sigma$ and noise level
$\sigma$, the denoiser $\epsilon_{\theta}(x_\sigma, \sigma)$ induces an estimate
of $\hat x_0 \approx x_0$ via
\begin{align}\label{eq:x0def}
  \hat x_0(x_\sigma, \sigma) := x_\sigma - \sigma \epsilon_\theta(x_\sigma, \sigma).
\end{align}

\paragraph{Ideal Denoiser} The \emph{ideal denoiser} $\eps^*(x_\sigma, \sigma)$
for a particular noise level $\sigma$ and data distribution $D$ is the minimizer
of the loss function
\begin{align*}
  \Loss(\eps^*) = \Ex_{x_0 \sim D} \Ex_{x_\sigma \sim \mathcal{N}(x_0, \sigma I)}
  \norm{(x_\sigma - x_0)/\sigma - \eps^*(x_\sigma, \sigma)}^2.
\end{align*}
Informally, $\hat x_0$ predicted by $\epsilon^*$ is an estimate of the expected
value of $x_0$ given $x_\sigma$. If $D$ is supported on a set $\Kset$, then
$\hat x_0$ lies in the \emph{convex hull} of $\Kset$.

\paragraph{Sampling}\label{sec:sampling}
Aiming to improve accuracy, sampling algorithms construct a sequence
$\hat x^t_0 := \hat x_0(x_t, \sigma_t)$ of estimates from a sequence of points
$x_t$ initialized at a given $x_N$.  Diffusion samplers iteratively construct
$x_{t-1}$ from $x_t$ and $\epsilon_\theta(x_t, \sigma_t)$, with a monotonically
decreasing \emph{$\sigma$ schedule} $\braces{\sigma_t}_{t=N}^0$. For simplicity of
notation we use $\epsilon_\theta(\cdot, \sigma_t)$ and
$\epsilon_\theta(\cdot, t)$ interchangeably based on context.

For instance, the randomized DDPM \citep{ho2020denoising} sampler uses the
update
\begin{align}\label{eq:DDPM-scaled}
  x_{t-1} &= x_t + (\sigma_{t'} - \sigma_t) \epsilon_\theta(x_t, \sigma_t) + \eta w_t,
\end{align}
where $w_t \sim \mathcal{N}(0, I)$, $\sigma_{t'} = \sigma_{t-1}^2/\sigma_t$ and
$\eta = \sqrt{\sigma_{t-1}^2 - \sigma_{t'}^2}$ (we have
$\sigma_{t'} < \sigma_{t-1} < \sigma_{t}$, as $\sigma_{t-1}$ is the geometric
mean of $\sigma_{t'}$ and $\sigma_{t}$). The deterministic DDIM
\citep{song2020denoising} sampler, on the other hand, uses the update
\begin{align}\label{eq:DDIM-scaled}
  x_{t-1} &= x_t + (\sigma_{t-1} - \sigma_t) \epsilon_\theta(x_t, \sigma_t).
\end{align}
See \Cref{fig:proj-denoise} for an illustration of this denoising process. Note
that these samplers were originally presented in variables $z_t$ satisfying
$z_t = \sqrt{\alpha_t} x_t$, where $\alpha_t$ satisfies
$\sigma^2_t = \frac{1-\alpha_t}{\alpha_t}$.  We prove equivalence of the
original definitions to \eqref{eq:DDPM-scaled} and \eqref{eq:DDIM-scaled} in
\Cref{sec:scaled} and note that the change-of-variables from $z_t$ to $x_t$
previously appears in
\cite{song2020score,karras2022elucidating,song2020denoising}.

\subsection{Distance and Projection}
The \emph{distance function}
to a set $\coneName \subseteq \mathbb{R}^n$ is defined as
\begin{align}\label{eq:dist-def}
  \distK(x) := \inf \{ \norm{x-x_0} : x_0 \in \coneName \}.
\end{align}
The \emph{projection} of $x \in \mathbb{R}^n$, denoted $\projK(x)$,  is the set
of points that attain this distance:
\begin{align}\label{eq:proj-def}
  \projK(x) := \{ x_0 \in \coneName : \distK(x) = \norm{x-x_0} \}.
\end{align}
When $\projK(x)$ is unique, i.e., when $\projK(x) = \{x_0\}$, we abuse notation
and let $\projK(x)$ denote $x_0$. Then $x - \projK(x)$ is the direction
of steepest descent of $\distK(x)$:
\begin{prop}[page 283, Theorem 3.3 of~\cite{delfour2011shapes}]\label{prop:distanceProperties}
  Suppose $\coneName \subseteq \mathbb{R}^n$ is closed and $x \notin
  \coneName$. Then $\projK(x)$ is unique for almost all $x \in \mathbb{R}^n$
  (under the Lebesgue measure). If $\projK(x)$ is unique, then
  $\nabla \distK(x)$ exists, $\|\nabla \distK(x)\| = 1$ and
  \begin{align*}
    \nabla \ihalf \distK(x)^2 &= \distK(x), \\
    \nabla \distK(x) &= x-\projK(x).
  \end{align*}
\end{prop}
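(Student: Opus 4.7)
The plan is to exploit the classical convexity trick: the function $g(x) := \tfrac{1}{2}\|x\|^2 - \tfrac{1}{2}\distK(x)^2$ is convex on all of $\mathbb{R}^n$. This follows by expanding $\|x-y\|^2 = \|x\|^2 - 2\langle x, y\rangle + \|y\|^2$ inside the infimum defining $\distK$, yielding
\begin{equation*}
  g(x) = \sup_{y \in \coneName}\bigl(\langle x, y\rangle - \tfrac{1}{2}\|y\|^2\bigr),
\end{equation*}
a pointwise supremum of affine functions of $x$. Since $g(x) \le \tfrac{1}{2}\|x\|^2$, this supremum is finite-valued, so $g$ is a finite convex function on $\mathbb{R}^n$.

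Two standard facts about finite-valued convex functions then deliver the proposition. First, such $g$ is locally Lipschitz and hence differentiable almost everywhere, and this transfers to $\distK^2$, and to $\distK = \sqrt{\distK^2}$ on the open set $\mathbb{R}^n \setminus \coneName$. Second, the subdifferential of the sup coincides with the convex hull of the active affine pieces, which here means $\partial g(x) = \mathrm{conv}\,\projK(x)$: each $y \in \projK(x)$ achieves $\langle x, y\rangle - \tfrac{1}{2}\|y\|^2 = g(x)$ (by expanding $\|x-y\|^2 = \distK(x)^2$), and the gradient in $x$ of this affine map is $y$. At any differentiability point, $\partial g(x)$ is a singleton, forcing $\projK(x)$ itself to be a singleton (giving almost-everywhere uniqueness) and yielding $\nabla g(x) = \projK(x)$. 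Rearranging gives $\nabla \tfrac{1}{2}\distK(x)^2 = x - \projK(x)$, and the chain rule applied to $\distK = \sqrt{\distK^2}$ gives $\nabla \distK(x) = (x - \projK(x))/\distK(x)$, which has unit norm by the definition of $\distK$.

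The main obstacle is rigorously justifying the subdifferential formula $\partial g(x) = \mathrm{conv}\,\projK(x)$ when $\coneName$ is closed but possibly unbounded. The easy inclusion — that every $y \in \projK(x)$ lies in $\partial g(x)$ — is immediate from the supporting-affine argument above. For the converse I would localize: since $\projK(x) \subseteq \overline{B}(x, \distK(x))$, any $y$ very far from $x$ cannot be near-optimal in the sup, so the supremum defining $g$ near $x$ can be restricted to a compact subset of $\coneName$, reducing to the standard Danskin-type theorem for compact index sets. Alternatively, one can bypass the subdifferential calculation entirely by a direct uniqueness argument: if distinct $p_1, p_2 \in \projK(x)$ existed at a differentiability point, comparing the one-sided upper bounds $\distK(x+tv) \le \|x+tv-p_i\|$ along $v = p_1 - p_2$ would force incompatible values for the directional derivative of $\tfrac{1}{2}\distK^2$ at $x$.
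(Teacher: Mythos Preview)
The paper does not prove this proposition; it is quoted verbatim as a background fact from Delfour and Zol\'esio's book and then used without further argument. So there is no in-paper proof to compare your proposal against.

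On its own merits, your convexity trick via $g(x)=\tfrac{1}{2}\|x\|^2-\tfrac{1}{2}\distK(x)^2=\sup_{y\in\coneName}\bigl(\langle x,y\rangle-\tfrac{1}{2}\|y\|^2\bigr)$ is a standard and correct route to this result, and in fact is essentially how the cited reference proceeds. One point deserves emphasis: the proposition claims not only that almost-everywhere differentiability forces almost-everywhere uniqueness of $\projK$, but also the pointwise converse --- that uniqueness of $\projK(x)$ at a given $x$ implies $\nabla\distK(x)$ exists there. Your easy inclusion $\projK(x)\subseteq\partial g(x)$, together with Rademacher, handles the first direction and the a.e.\ statement. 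For the converse you genuinely need the reverse inclusion $\partial g(x)\subseteq\mathrm{conv}\,\projK(x)$, since for a finite convex function differentiability at $x$ is equivalent to $\partial g(x)$ being a singleton. Your Danskin-with-localization sketch does deliver this; but note that your ``alternative bypass'' does not --- it again argues from differentiability to uniqueness, not the other way around, so it cannot replace the Danskin step. If you want to avoid Danskin, the cleanest substitute is to use the fact that for a finite convex function every subgradient at $x$ is a limit of gradients at nearby differentiability points $x_n$; since $\nabla g(x_n)=\projK(x_n)$ there, a short compactness argument shows any such limit lies in $\projK(x)$.
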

In addition, we define a \emph{smoothed squared-distance function} for a
smoothing parameter $\sigma > 0$ by using the $\softmin_{\sigma^2}$ operator instead
of $\min$.
\begin{align*}
  \distop^2_\Kset(x, \sigma)
  &:= \substack{\softmin_{\sigma^2} \\ x_0 \in \Kset} \norm{x_0 - x}^2 \\
  &= {\textstyle -\sigma^2  \log\paren{\sum_{x_0 \in \Kset}
    \exp\paren{-\frac{\norm{x_0 - x}^2}{2\sigma^2}}}}.
\end{align*}
In contrast to $\distK^2(x)$, $\distK^2(x, \sigma)$ is always
differentiable and lower bounds $\distK^2(x)$.

\section{Denoising as Approximate Projection}\label{sec:denoiseVsProj}

\begin{figure}
  \centering
  \confonly{
    \includegraphics[width=0.45\textwidth]{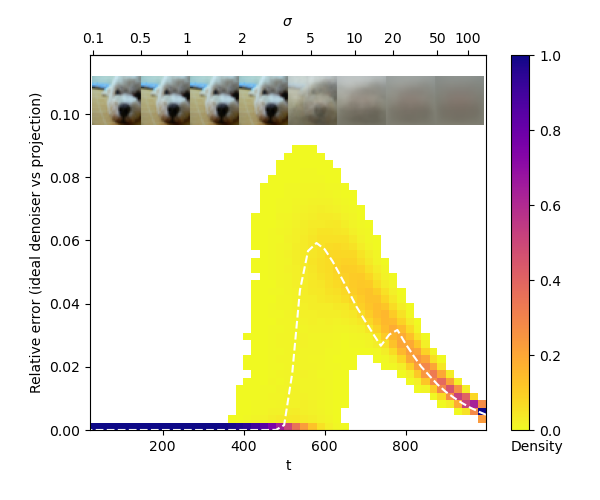}
  }
  \arxivonly{
    \includegraphics[width=0.60\textwidth]{figures/ideal_denoiser_error}
  }
  \caption{Ideal denoiser well-approximates projection onto the CIFAR-10
    dataset. Dashed line plots error for the example shown, and density plot
    shows the error distribution over 10k different DDIM sampling
    trajectories. }
  \label{fig:ideal-denoiser}
\end{figure}

In this section, we provide theoretical and empirical justifications for our
relative error model, formally stated below:
\begin{defn}\label{def:main}
  We say $\epsilon_{\theta}(x, \sigma)$ is an \emph{$(\eta, \nu)$-approximate} projection
  if there exists constants $1 > \eta \ge 0$ and $\nu \ge 1$ so that
  for all $x$ with unique $\projK(x)$ and for all $\sigma$ satisfying
  $\frac{1}{\nu}\distK(x) \le \sqrt{n}\sigma \le \nu \distK(x)$, we have
 \[
  \| x-\sigma \epsilon_{\theta}( x, \sigma) - \projK(x)\| \le \eta \distK(x).
 \]
\end{defn}
To justify this model, we will prove relative error bounds under different
assumptions on $\epsilon_{\theta}$, $(x, \sigma)$ and $\coneName$.  Analysis of
DDIM based on this model is given in \Cref{sec:sampling}. Formal statements and
proofs are deferred to \Cref{sec:formalDenoising}.
\Cref{sec:futher-experiments} contains further experiments verifying our error
model on image datasets.

\subsection{Relative Error Under the Manifold Hypothesis}
The \emph{manifold hypothesis}
\citep{bengio2013representation,fefferman2016testing,pope2021intrinsic} asserts
that ``real-world'' datasets are (approximately) contained in low-dimensional
manifolds of $\mathbb{R}^n$. Specifically, we suppose that $\coneName$ is a
manifold of dimension $d$ with $d \ll n$.  We next show that denoising is
approximately equivalent to projection, when noise is small compared to the
\emph{reach} of $\coneName$, defined as the largest $\tau > 0$ such that
$\projK(x)$ is unique when $\distK(x) < \tau$.

The following classical result tells us that for small perturbations $w$,
 the difference between $\projK(x_0 + w)$ and $x_0$ is contained
in the \emph{tangent space} $\tan_\coneName(x_0)$, a subspace of dimension $d$
associated with each $x_0 \in \coneName$.
\begin{lem}[Theorem 4.8(12) in \cite{federer1959curvature}]\label{lem:tubularProj}
  Consider $x_0 \in \coneName$ and $w \in \tan_\coneName(x_0)^{\perp}$. If
  $\|w\| < \reach(\coneName)$, then $\projK(x_0+w) = x_0$.
\end{lem}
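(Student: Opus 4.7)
The plan is to verify that $x_0$ is the unique nearest point in $\coneName$ to $y := x_0 + w$. The hypothesis $\distK(y) \leq \|w\| < \reach(\coneName)$ together with the definition of reach already guarantees that $\projK(y)$ is a singleton $\{p\}$, so it suffices to prove $p = x_0$.

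First, I would establish strict local minimality of $x_0$ via a local graph parametrization $z = x_0 + u + \nu(u)$ of $\coneName$ near $x_0$, with $u \in \tan_\coneName(x_0)$ and $\nu(u) \in \tan_\coneName(x_0)^\perp$. The hypothesis $w \in \tan_\coneName(x_0)^\perp$ annihilates the cross term $\langle u, w\rangle$, producing
\[
\|y - z\|^2 - \|y - x_0\|^2 \;=\; \|u\|^2 + \|\nu(u)\|^2 - 2\langle \nu(u), w\rangle.
\]
The classical osculating-sphere bound gives $\|\nu(u)\| = O(\|u\|^2)$ with constant controlled by $1/\reach(\coneName)$, so Cauchy--Schwarz bounds $|2\langle \nu(u), w\rangle|$ by $\|u\|^2 \|w\|/\reach(\coneName)$ to leading order. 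Since $\|w\|/\reach(\coneName) < 1$, the right-hand side is strictly positive for all sufficiently small $u \neq 0$. The same estimate applied to $y_t := x_0 + tw$ with $t\in[0,1]$ (whose effective normal-offset norm only shrinks) yields a fixed neighborhood $U$ of $x_0$ in $\coneName$ on which $x_0$ is the unique minimizer of $\|y_t - \cdot\|^2$ uniformly in $t$.

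Second, I would upgrade local minimality to global minimality by a continuation argument along $\gamma(t) := x_0 + tw$, $t\in[0,1]$. Since $\distK(\gamma(t)) \leq t\|w\| < \reach(\coneName)$, the map $t\mapsto\projK(\gamma(t))$ is single-valued and continuous on $[0,1]$ (continuity at unique-projection points is elementary from the closedness of $\coneName$). Let $T := \{t\in[0,1] : \projK(\gamma(t)) = x_0\}$: it contains $0$, it is closed by continuity, and it is open in $[0,1]$ because for any $t\in T$ continuity forces $\projK(\gamma(t'))$ to lie in $U$ for $t'$ near $t$, after which the uniform strict local minimality pins it to $x_0$. Connectedness of $[0,1]$ then gives $T = [0,1]$, and evaluating at $t = 1$ yields $\projK(y) = x_0$.

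The main obstacle is the first step: controlling the higher-order terms in $\|\nu(u)\|$ quantitatively via the reach, and in particular arranging the neighborhood $U$ so that the strict-minimality margin $(1 - \|w\|/\reach(\coneName))\|u\|^2$ dominates the $O(\|u\|^3)$ remainders uniformly in $t\in[0,1]$. Once that uniform quantitative local estimate is secured, the open/closed argument on the connected interval $[0,1]$ closes the proof almost mechanically.
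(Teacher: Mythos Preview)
The paper does not supply a proof of this lemma; it is simply cited as Theorem~4.8(12) of Federer's \emph{Curvature measures}, so there is no in-paper argument to compare against.

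Your strategy is correct in outline but considerably more elaborate than needed. Both the graph parametrization and the continuation along $\gamma(t)=x_0+tw$---as well as the ``main obstacle'' of uniformly controlling the higher-order terms in $\nu(u)$---can be bypassed entirely. Federer's companion item~4.8(7) gives, for \emph{every} $z\in\coneName$ (not only those near $x_0$), the global bound $\langle z-x_0,\,w/\|w\|\rangle\le \|z-x_0\|^2/(2\tau)$ with $\tau=\reach(\coneName)$. Plugging this directly into the expansion of $\|y-z\|^2$ yields
\[
\|y-z\|^2 \;=\; \|w\|^2-2\langle w,\,z-x_0\rangle+\|z-x_0\|^2 \;\ge\; \|w\|^2+\Bigl(1-\frac{\|w\|}{\tau}\Bigr)\|z-x_0\|^2,
\]
which strictly exceeds $\|w\|^2=\|y-x_0\|^2$ whenever $z\ne x_0$. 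This is a one-line global argument; no local chart, no open--closed argument on $[0,1]$. The obstacle you flag is an artifact of first bounding $\|\nu(u)\|$ and then applying Cauchy--Schwarz, which discards precisely the structure that makes the direct inner-product estimate sharp. Your route can be pushed through, but it is working harder than the problem requires.
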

When $n \gg d$, the orthogonal complement $\tan_\coneName(x)^{\perp}$ is large
and will contain most of the weight of a random $\epsilon$, intuitively
suggesting $\projK(x_0+\sigma \epsilon) \approx x_0$ if $\sigma$ is small. In
other words, we intuitively expect that the \emph{oracle denoiser}, which
returns $x_0$ given $x_0 + \sigma \epsilon$, approximates projection.

We formalize this intuition using Gaussian concentration
inequalities~\cite{vershynin2018high} and Lipschitz continuity of $\projK(x)$
when $\distK(x) < \reach(\coneName)$.

\begin{prop}[Oracle denoising
  (informal)]~\label{prop:OracleDenoisingUnderReachInformal} Given
  $x_0 \in \coneName$, $\sigma > 0$ and $\epsilon \sim \mathcal{N}(0, I)$, let
  $x_\sigma = x_0 + \sigma \epsilon \in \R^n$. If $n \gg d$ and
  $\sigma \sqrt{n} \lesssim \reach(\coneName)$, then with high probability,
  \begin{align*}
    \textstyle
    \norm{\projK(x_\sigma) - x_0} \lesssim \sqrt{\frac{d}{n}} \distK(x_\sigma)
  \end{align*}
  and for $\nu \lesssim 1 + \sqrt{\frac{d}{n}}$,
  $\frac{1}{\nu} \distK(x_\sigma) \le \sqrt{n} \sigma \le \nu \distK(x_\sigma)$.
\end{prop}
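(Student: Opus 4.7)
The plan is to decompose the Gaussian noise into tangent and normal components relative to $\tan_\coneName(x_0)$: write $\epsilon = \epsilon_T + \epsilon_N$ with $\epsilon_T$ the orthogonal projection onto the $d$-dimensional tangent space and $\epsilon_N$ onto its $(n-d)$-dimensional orthogonal complement. These are independent Gaussians whose norms are well-concentrated by standard $\chi^2$ tail bounds: on an event of probability at least $1 - e^{-ct}$, one has $\|\epsilon_T\| \le \sqrt{d} + t$, $\sqrt{n-d} - t \le \|\epsilon_N\| \le \sqrt{n-d} + t$, and $\|\epsilon\| = \sqrt{n}(1 + o(1))$. Choosing $t$ of order $\sqrt{d}$ (which is much smaller than $\sqrt{n}$ since $n \gg d$) gives all three relations up to a $(1 + O(\sqrt{d/n}))$ factor simultaneously.

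On this high-probability event, the hypothesis $\sqrt{n}\sigma \lesssim \reach(\coneName)$ implies $\|\sigma \epsilon_N\| < \reach(\coneName)$, so \Cref{lem:tubularProj} yields $\projK(x_0 + \sigma \epsilon_N) = x_0$. To control the projection of the full noisy point $x_\sigma = (x_0 + \sigma \epsilon_N) + \sigma \epsilon_T$, I would invoke the classical Lipschitz property of $\projK$ on the tubular neighborhood $\{x : \distK(x) < \reach(\coneName)\}$ (Federer's result: the local Lipschitz constant at a point $x$ is $\reach/(\reach - \distK(x))$). Since our hypothesis forces $\distK(x_\sigma)$ to be a constant fraction of $\reach(\coneName)$, this constant is $O(1)$, yielding
\[
  \|\projK(x_\sigma) - x_0\| = \|\projK(x_\sigma) - \projK(x_0 + \sigma \epsilon_N)\| \lesssim \|\sigma \epsilon_T\| \lesssim \sigma \sqrt{d}.
\]

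To convert this absolute bound into the advertised relative bound and to deduce the $\nu$-sandwich, I would estimate $\distK(x_\sigma)$ both above and below. The upper bound $\distK(x_\sigma) \le \|x_\sigma - x_0\| = \sigma \|\epsilon\| \le \sigma \sqrt{n}(1 + o(1))$ is immediate from $x_0 \in \coneName$. For the lower bound, the triangle inequality gives $\distK(x_\sigma) \ge \|x_\sigma - x_0\| - \|x_0 - \projK(x_\sigma)\| \ge \sigma \sqrt{n}(1 + o(1)) - O(\sigma \sqrt{d}) = \sigma \sqrt{n}(1 - O(\sqrt{d/n}))$. Together these give $\distK(x_\sigma) = \sqrt{n}\sigma \cdot (1 + O(\sqrt{d/n}))$, so both $\frac{1}{\nu}\distK(x_\sigma) \le \sqrt{n}\sigma \le \nu \distK(x_\sigma)$ with $\nu \lesssim 1 + \sqrt{d/n}$ and, after substitution, $\|\projK(x_\sigma) - x_0\| \lesssim \sigma \sqrt{d} \lesssim \sqrt{d/n}\,\distK(x_\sigma)$.

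The main obstacle is the Lipschitz step: the informal $\lesssim$ symbols in the statement have to absorb the Lipschitz constant $\reach/(\reach - \distK(x_\sigma))$ as well as the concentration-tail slack, and the implicit constant in $\sqrt{n}\sigma \lesssim \reach(\coneName)$ must be chosen strictly below $1$ so that $\distK(x_\sigma)$ stays bounded away from $\reach(\coneName)$ on the conditioning event. Everything else is a clean combination of Gaussian concentration with the two geometric facts (tubular projection and Lipschitz continuity of $\projK$) that positive reach provides.
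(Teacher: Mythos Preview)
Your proposal is correct and follows essentially the same approach as the paper: decompose the noise into tangent/normal components, use Gaussian concentration on each, invoke \Cref{lem:tubularProj} for the normal part, and then the Lipschitz continuity of $\projK$ on the tubular neighborhood (the paper cites Delfour--Zol\'esio rather than Federer, but it is the same constant $\reach/(\reach-h)$) to control the effect of the tangential perturbation. The only minor deviation is how you obtain the lower bound on $\distK(x_\sigma)$: the paper uses the $1$-Lipschitzness of $\distK$ to get $\distK(x_\sigma)\ge \|w_N\|-\|w_T\|$ directly, whereas you route through the already-established bound on $\|\projK(x_\sigma)-x_0\|$; both arguments yield the same $\sigma\sqrt{n}(1-O(\sqrt{d/n}))$ estimate.
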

Observe this motivates both constants $\eta$ and $\nu$ used by our relative error model.
 We note
prior analysis of diffusion under the manifold hypothesis is given
by~\cite{de2022convergence}.

\subsection{Relative Error in Large Noise Regime}
We next analyze denoisers in the large-noise regime, when $\distK(x)$ is much
larger than $\diam(\Kset) := \sup \{ \|x-y\| : x, y \in \coneName\}$. In this
regime (see \Cref{fig:proj-high-noise} for an illustration), any denoiser that
predicts a point in the convex hull of $\Kset$
approximates projection with  error small compared to $\distK(x)$.
\begin{prop}~\label{prop:rel-error-large-noise} Suppose
  $x-\sigma \epsilon_{\theta}(x, \sigma) \in \conv(\coneName)$. If $\sqrt{n} \sigma \le \nu \distK(x)$, then
  $\norm{x - \sigma\epsilon_{\theta}(x, \sigma) - \projK(x)} \le\nu  \frac{\diam(\Kset)}{\sqrt{n} \sigma} \distK(x)$.
\end{prop}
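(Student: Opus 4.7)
My plan is to reduce the statement to a pair of elementary observations: both the predicted denoised point and $\projK(x)$ lie in $\conv(\coneName)$, and any bound by $\diam(\coneName)$ immediately converts into the claimed relative form using the hypothesis on $\sqrt{n}\sigma$.

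First I would set $y := x - \sigma\,\epsilon_{\theta}(x,\sigma)$ and record that, by hypothesis, $y \in \conv(\coneName)$, while $\projK(x) \in \coneName \subseteq \conv(\coneName)$ by definition of projection. So the quantity I need to bound is simply the distance between two points of $\conv(\coneName)$; none of the optimization structure of $\projK$ is actually required beyond membership in $\coneName$.

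Next I would invoke the classical identity $\diam(\conv(\coneName)) = \diam(\coneName)$. This is a one-line computation: for $u,v \in \conv(\coneName)$ written as convex combinations $u = \sum_i \alpha_i a_i$ and $v = \sum_j \beta_j b_j$ with $a_i, b_j \in \coneName$, the triangle inequality gives $\|u-v\| \le \sum_{i,j}\alpha_i \beta_j \|a_i - b_j\| \le \diam(\coneName)$. Specialized to $u = y$ and $v = \projK(x)$, this yields $\|y - \projK(x)\| \le \diam(\coneName)$.

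Finally I would multiply through by $1 \le \nu\,\distK(x)/(\sqrt{n}\sigma)$, which follows immediately from the hypothesis $\sqrt{n}\sigma \le \nu\,\distK(x)$, to land on the stated bound $\nu\,\diam(\coneName)\,\distK(x)/(\sqrt{n}\sigma)$. There is no real obstacle; the only subtlety worth flagging is the convex-hull diameter identity, which is exactly what lets the hypothesis $y \in \conv(\coneName)$ suffice (rather than the stronger $y \in \coneName$). The content of the proposition really lies in its regime of use: when $\sqrt{n}\sigma$ exceeds $\diam(\coneName)$, the relative constant $\nu\,\diam(\coneName)/(\sqrt{n}\sigma)$ becomes small, formalizing that in the large-noise regime any denoiser whose estimate $\hat x_0$ lands in $\conv(\coneName)$ (for instance, the ideal denoiser) is automatically an approximate projection.
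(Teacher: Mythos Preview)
Your argument is correct. The paper does not actually include a proof of this proposition in the appendix (proofs are given for Propositions~3.1, 3.3, and 3.4, but this one is evidently deemed elementary enough to omit), so there is nothing to compare against directly; your line of reasoning---bound the distance by $\diam(\coneName)$ via the convex-hull diameter identity, then multiply by $1 \le \nu\,\distK(x)/(\sqrt{n}\sigma)$---is exactly the natural argument and almost certainly what the authors intended. One minor simplification: since $\projK(x) \in \coneName$ (not merely $\conv(\coneName)$), you only need the one-sided fact that any point of $\conv(\coneName)$ is within $\diam(\coneName)$ of any point of $\coneName$, which follows from a single application of the triangle inequality to $y = \sum_i \alpha_i a_i$; the full identity $\diam(\conv(\coneName)) = \diam(\coneName)$ is not required.
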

Thus any denoiser that predicts any weighted mean of the data, for instance the
ideal denoiser, well approximates projection when
$\sqrt{n} \sigma \gg \diam(\Kset)$. For most diffusion models used in practice,
$\sqrt{n} \sigma_N$ is usually 50-100 times the diameter of the training set,
with $\sqrt{n} \sigma_t$ in this regime for a significant proportion of
timesteps.

\subsection{Relative Error of Ideal Denoisers} \label{sec:ideal-denoiser-error}

We now consider the setting where $\coneName$ is a finite set and
$\eps_{\theta}(x, \sigma)$ is the ideal denoiser $\eps^*(x, \sigma)$. We first
show that predicting $\hat{x}_0$ with the ideal denoiser is equivalent to
projection using the $\sigma$-smoothed distance function:
\begin{prop} \label{prop:smoothed-ideal-denoiser}
  For all $\sigma > 0$ and $x \in \R^n$, we have
  \begin{align*}
    \nabla_x \ihalf \distop^2_\Kset(x, \sigma) = \sigma \eps^*(x, \sigma).
  \end{align*}
\end{prop}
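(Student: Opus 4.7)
My plan is to compute each side of the identity explicitly and verify they coincide. Both reduce to a weighted combination of the points in $\Kset$ with softmax weights
\[
  w_{x_0}(x) \;:=\; \frac{\exp\!\bigl(-\|x_0 - x\|^2/(2\sigma^2)\bigr)}{\sum_{x_0' \in \Kset}\exp\!\bigl(-\|x_0' - x\|^2/(2\sigma^2)\bigr)},
\]
which serve as the common bridge between the smoothed distance function and the ideal denoiser.

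For the gradient side, I would differentiate the explicit log-sum-exp formula for $\distop^2_\Kset(x,\sigma)$ by the chain rule. Setting $S(x) := \sum_{x_0 \in \Kset} \exp(-\|x_0 - x\|^2/(2\sigma^2))$, each summand of $\nabla_x S(x)$ equals $\exp(-\|x_0-x\|^2/(2\sigma^2))\,(x_0-x)/\sigma^2$; dividing by $S(x)$ and accounting for the outer $-\sigma^2 \log$ prefactor, the $\sigma^2$ factors cancel and the gradient collapses to a weighted average of the form $x - \sum_{x_0 \in \Kset} w_{x_0}(x)\, x_0$.

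For the denoiser side, I would use Bayes optimality of the $L^2$ minimizer that defines $\epsilon^*$: the minimizer of the training loss is the conditional expectation $\epsilon^*(x,\sigma) = \mathbb{E}\!\left[(x_\sigma - x_0)/\sigma \,\middle|\, x_\sigma = x\right]$. When $D$ is uniform on the finite set $\Kset$, Bayes's rule together with the Gaussian likelihood $p(x_\sigma \mid x_0) \propto \exp(-\|x - x_0\|^2/(2\sigma^2))$ shows that the posterior over $x_0$ has probabilities exactly equal to $w_{x_0}(x)$. Substituting gives $\sigma\,\epsilon^*(x,\sigma) = x - \sum_{x_0 \in \Kset} w_{x_0}(x)\, x_0$, matching the expression obtained above.

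The conceptual content is simply the observation that the log-sum-exp appearing in the definition of $\distop^2_\Kset$ and the Bayesian posterior over $\Kset$ under the Gaussian denoising likelihood are the same object up to normalization; once this is recognized the proof reduces to bookkeeping. The main thing requiring care is tracking the scalar prefactors — the powers of $\sigma$ and the $\tfrac{1}{2}$ — through both derivations so that the softmin normalization chosen in the definition of $\distop^2_\Kset$ produces exactly the $\sigma$ appearing on the right-hand side.
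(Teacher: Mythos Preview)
Your proposal is correct and follows essentially the same approach as the paper: the paper first derives the explicit form of the ideal denoiser on a uniform finite $\Kset$ by taking the pointwise minimizer of the loss (equivalently, your conditional-expectation argument via Bayes' rule), and then observes that differentiating the log-sum-exp in the definition of $\distop^2_\Kset(x,\sigma)$ reproduces exactly this expression. The only cosmetic difference is that the paper phrases the denoiser derivation as a direct pointwise minimization of a convex integrand rather than invoking the Bayes-optimality characterization, but the computations are identical.
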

This shows our relative error model is in fact a bound on
\[
\|\nabla_x  \distop^2_\Kset(x, \sigma) -  \nabla_x \distop^2_\Kset(x)\|,
\]
the error between the gradient of the smoothed distance function and that of the
true distance function.  Since the amount of smoothing is directly determined by
$\sigma$, it is therefore natural to bound this error using $\distK(x)$ when
$\sqrt{n} \sigma \approx \distK(x)$. In other words,
\Cref{prop:smoothed-ideal-denoiser} directly motivates our error-model.

Towards a rigorous bound, let
$N_\alpha(x)$ denote the subset of $x_0 \in \Kset$
satisfying $\norm{x - x_0} \le \alpha \distK(x)$
for $\alpha \ge 1$.
Consider the following.
\begin{prop}\label{prop:ideal-rel-error}
  If
  $\alpha \ge 1 + \frac{2 \nu^2}{n} \paren{\frac{1}{e} +
    \log\pfrac{\abs{\Kset}}{\eta}}$ and
  $\frac{1}{\nu}\distK(x) \le \sqrt{n} \sigma$, then
  \begin{align*}
    \norm{x - \sigma \eps^*(x, \sigma) - \projK(x)} \le \eta \distK(x) + C_{x, \alpha},
  \end{align*}
where $C_{x, \alpha} := \sup_{x_0 \in N_{\alpha}}\| x_0-\projK(x)\|$.
\end{prop}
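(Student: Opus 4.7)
Because $\Kset$ is finite, the posterior-mean expression for the ideal denoiser gives
\[
\hat x_0(x,\sigma) := x - \sigma \eps^*(x,\sigma) = \sum_{x_0 \in \Kset} w_{x_0}(x)\, x_0,\qquad w_{x_0}(x) := \frac{\exp(-\|x_0 - x\|^2/(2\sigma^2))}{Z(x)},
\]
where $Z(x) = \sum_{x_0 \in \Kset}\exp(-\|x_0-x\|^2/(2\sigma^2))$. Since the weights are nonnegative and sum to one, $\hat x_0 - \projK(x) = \sum_{x_0 \in \Kset} w_{x_0}(x)(x_0 - \projK(x))$ is a convex combination. My plan is to split this sum along the partition $\Kset = N_\alpha \sqcup (\Kset \setminus N_\alpha)$, absorb the near piece into $C_{x,\alpha}$, and show that the stated lower bound on $\alpha$ drives the far piece below $\eta\,\distK(x)$.

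The near part is immediate: for every $x_0 \in N_\alpha$, $\|x_0 - \projK(x)\| \le C_{x,\alpha}$ by definition, and $\sum_{x_0 \in N_\alpha} w_{x_0}(x) \le 1$, so the near contribution is at most $C_{x,\alpha}$. Writing $d := \distK(x)$, I handle the far part using $\|x_0 - \projK(x)\| \le \|x_0 - x\| + d$ together with the weight estimate $w_{x_0}(x) \le \exp(-(\|x_0-x\|^2 - d^2)/(2\sigma^2))$, which follows from $Z(x) \ge \exp(-d^2/(2\sigma^2))$ since $\projK(x) \in \Kset$ contributes at least this much to the normalizer.

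The crucial monotonicity observation is that $r \mapsto r\exp(-r^2/(2\sigma^2))$ is decreasing on $[\sigma,\infty)$. In the regime of interest---where the upper bound from \Cref{def:main}, $\sqrt n\,\sigma \le \nu\,d$, applies and $\nu \le \sqrt n$ so that $\sigma \le d$---we have $\alpha d \ge \sigma$ for every $\alpha \ge 1$, so monotonicity lets me bound each far summand by $(\alpha+1)\,d\,\exp(-(\alpha^2-1)d^2/(2\sigma^2))$. Summing over the at-most-$\abs{\Kset}$ far points and applying $d^2/(2\sigma^2) \ge n/(2\nu^2)$ then yields the far-part bound $\abs{\Kset}(\alpha+1)\,d\,\exp(-(\alpha^2-1)n/(2\nu^2))$.

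The main obstacle, and the only genuine calculation, is the final calibration: verifying that the threshold $\alpha \ge 1 + \tfrac{2\nu^2}{n}(1/e + \log(\abs{\Kset}/\eta))$ really does make the far-part bound $\le \eta d$. After taking logs this reduces to $(\alpha^2-1)\,n/(2\nu^2) \ge \log(\alpha+1) + \log(\abs{\Kset}/\eta)$, which I plan to deduce by combining $\alpha^2 - 1 \ge 2(\alpha - 1)$ (valid for $\alpha \ge 1$) with the pointwise inequality $\log(\alpha+1) \le (\alpha+1)/e$ (attained at $\alpha+1 = e$, which is where the $1/e$ in the stated constant originates) and then substituting the assumed lower bound on $\alpha - 1$. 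Everything preceding this step is soft, so the precise form of the stated threshold is pinned down entirely by this calibration.
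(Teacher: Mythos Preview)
Your decomposition into near and far parts, the bound $w_{x_0}(x)\le \exp\bigl(-(\|x-x_0\|^2-d^2)/(2\sigma^2)\bigr)$, and the use of $\log(a)\le a/e$ all match the paper exactly. The paper differs only in that it never uniformizes via monotonicity: it works per far point with $\delta=\delta(x_0):=\|x-x_0\|/d\ge\alpha$ and shows each summand $(1+\delta)\,d\,\exp\bigl(-(\delta^2-1)d^2/(2\sigma^2)\bigr)$ is at most $\eta d/|\Kset|$ directly. This avoids your side hypothesis $\nu\le\sqrt n$ (needed only to make $r\mapsto (r+d)e^{-r^2/(2\sigma^2)}$ decreasing from $r=\alpha d$), so your monotonicity detour is harmless but not free.

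The step that will not close as written is the calibration. Replacing $\alpha^2-1$ by $2(\alpha-1)$ discards exactly the factor of $(\alpha+1)$ you need to absorb $\log(\alpha+1)$: after that substitution and the lower bound on $\alpha-1$ you get $(\alpha^2-1)\,n/(2\nu^2)\ge 2/e+2\log(|\Kset|/\eta)$, which fails to dominate $\log(\alpha+1)+\log(|\Kset|/\eta)$ once $\alpha$ is large. Keep the full factorization $\alpha^2-1=(\alpha-1)(\alpha+1)$ instead: then the hypothesis on $\alpha-1$ gives
\[
(\alpha^2-1)\,\frac{n}{2\nu^2}\;\ge\;(\alpha+1)\Bigl(\tfrac{1}{e}+\log\tfrac{|\Kset|}{\eta}\Bigr),
\]
and now $(\alpha+1)/e\ge\log(\alpha+1)$ together with $(\alpha+1)\log(|\Kset|/\eta)\ge\log(|\Kset|/\eta)$ finishes the inequality. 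This is precisely the paper's maneuver, carried out there with $\delta(x_0)$ in place of $\alpha$ (writing $\delta-1\ge\alpha-1$ and then multiplying through by $\delta+1$).
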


We can guarantee $C_{x, \alpha}$ is zero when $\distK(x)$ is small compared to
the minimum pairwise distance of points in $\Kset$.  Combined with
\Cref{prop:rel-error-large-noise}, this shows the ideal denoiser has low
relative error in both the large and small noise regimes.  We cannot guarantee
that $C_{x, \alpha}$ is small relative to $\distK(x)$ in all regimes, however,
due to pathological cases, e.g., $x$ is exactly between two points in
$N_{\alpha}$.  Nevertheless, \Cref{fig:ideal-denoiser} empirically shows that
the relative error of the ideal denoiser
$\norm{x_t - \sigma \eps^*(x, \sigma_t) - \projK(x_t)}/\distK(x_t)$ is small at
\emph{all pairs} $(x_t, \sigma_t)$ generated by the DDIM sampler, suggesting
these pathologies do not appear in practice.

\section{Gradient Descent Analysis of Sampling}\label{sec:sampling}
Having justified the relative error model in \Cref{sec:denoiseVsProj}, we now
use it to study the DDIM sampler. As a warmup, we first consider the limiting
case of zero error, where we see DDIM is precisely gradient descent on the
squared-distance function with step-size determined by $\sigma_t$. We then
generalize this result to arbitrary $(\eta, \nu)$, showing DDIM is equivalent to
gradient-descent with relative error. Proofs are postponed to
\Cref{sec:proj-proofs}.

\subsection{Warmup: Exact Projection and Gradient Descent} \label{sec:exact-projection-gd}

We state our zero-error assumption in terms of the error-model as follows.
\begin{ass}\label{ass:denoiseIsProjection}
$\epsilon_{\theta}$ is a $(0, 1)$-approximate projection.
\end{ass}

We can now characterize DDIM as follows.
\begin{thm}\label{thm:mainDDIM}
  Let $x_N, \ldots, x_0$ denote a sequence~\eqref{eq:DDIM-scaled} generated by
  DDIM on a schedule $\braces{\sigma_t}_{t=N}^0$ and
  $f(x):= \frac{1}{2}\distK(x)^2$.  Suppose that \Cref{ass:denoiseIsProjection}
  holds, $\nabla f(x_t)$ exists for all $t$ and
  $\distK(x_N) = \sqrt n \sigma_N$. Then $x_t$ is a sequence generated by
  gradient descent with step-size $\beta_t: = 1 - \sigma_{t-1}/\sigma_t$:
  \begin{align*}
    x_{t-1} = x_t -  \beta_t \nabla f(x_t),
  \end{align*}
  and $\distK(x_t) = \sqrt{n}\sigma_t$ for all $t$.
\end{thm}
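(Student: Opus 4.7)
The plan is to proceed by downward induction on $t$, establishing simultaneously that the DDIM update reduces to the claimed gradient-descent step and that the invariant $\distK(x_t)=\sqrt{n}\sigma_t$ is preserved. The base case $t=N$ is given in the hypothesis, so everything comes down to the inductive step.

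For the inductive step, first I would extract the consequence of \Cref{ass:denoiseIsProjection} at the iterate $x_t$. With $\nu=1$, the admissible range $\tfrac{1}{\nu}\distK(x)\le\sqrt{n}\sigma\le\nu\distK(x)$ collapses to the equality $\sqrt{n}\sigma=\distK(x)$, which is exactly the inductive hypothesis $\distK(x_t)=\sqrt{n}\sigma_t$. The zero-error bound then forces $x_t-\sigma_t\epsilon_\theta(x_t,\sigma_t)=\projK(x_t)$, and \Cref{prop:distanceProperties} (invoked since $\nabla f(x_t)$ exists) identifies $x_t-\projK(x_t)$ with $\nabla f(x_t)$. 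Substituting $\epsilon_\theta(x_t,\sigma_t)=\nabla f(x_t)/\sigma_t$ into~\eqref{eq:DDIM-scaled} yields
\[
  x_{t-1} = x_t + \frac{\sigma_{t-1}-\sigma_t}{\sigma_t}\,\nabla f(x_t) = x_t - \beta_t\,\nabla f(x_t),
\]
which is the first conclusion.

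To propagate the invariant, I would rewrite this update as $x_{t-1}=(1-\beta_t)x_t+\beta_t\projK(x_t)$, which displays $x_{t-1}$ as a convex combination (using $\beta_t\in[0,1]$ from the monotonically decreasing schedule) of $x_t$ and $p:=\projK(x_t)$. The key geometric fact is that if $p$ attains $\distK(x_t)$, then $p$ also attains $\distK(q)$ for every $q$ on the segment $[x_t,p]$. I would prove this by a one-line triangle inequality: any competitor $y\in\coneName$ with $\|q-y\|<\|q-p\|$ would give
\[
  \|x_t-y\|\le\|x_t-q\|+\|q-y\|<\|x_t-q\|+\|q-p\|=\|x_t-p\|,
\]
contradicting $p=\projK(x_t)$. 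Applying this with $q=x_{t-1}$ yields $\distK(x_{t-1})=\|x_{t-1}-p\|=(1-\beta_t)\|x_t-p\|=(\sigma_{t-1}/\sigma_t)\distK(x_t)=\sqrt{n}\sigma_{t-1}$, closing the induction.

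The only subtle point, and the place I would be most careful, is ensuring the relative-error definition fires at every step: this is exactly why the invariant $\distK(x_t)=\sqrt{n}\sigma_t$ must be maintained simultaneously with the gradient-descent identity rather than proved afterward. The hypothesis that $\nabla f(x_t)$ exists at each iterate guarantees $\projK(x_t)$ is unique, so \Cref{def:main} applies unambiguously and \Cref{prop:distanceProperties} gives the gradient formula; the rest is bookkeeping.
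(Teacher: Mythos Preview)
Your proposal is correct and follows essentially the same approach as the paper: both argue by downward induction on $t$, use the inductive hypothesis $\distK(x_t)=\sqrt{n}\sigma_t$ together with \Cref{ass:denoiseIsProjection} to identify $\sigma_t\epsilon_\theta(x_t,\sigma_t)$ with $\nabla f(x_t)$, rewrite the DDIM update as $x_{t-1}=x_t-\beta_t\nabla f(x_t)$, and then show $\distK(x_{t-1})=(1-\beta_t)\distK(x_t)$ to close the loop. The only cosmetic difference is that the paper isolates the last step as a standalone lemma (\Cref{lem:DampedGradientStep}) and proves it by direct computation of $\|x_{t-1}-\projK(x_t)\|$, whereas you inline an explicit triangle-inequality argument that $\projK$ is preserved along the segment $[x_t,\projK(x_t)]$; your version is in fact slightly more careful on this point.
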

We remark that the existence of $\nabla f(x_t)$ is a weak assumption as it is
generically satisfied by almost all $x \in \R^n$.

\subsection{Approximate Projection and Gradient Descent with Error}\label{sec:samplingInexact}

We next establish upper and lower-bounds of distance under approximate gradient
descent iterations. Given $\braces{\sigma_t}_{t=N}^0$, let
$\beta_{t} := 1 - \sigma_{t-1}/\sigma_t$ and
\begin{align*}
  L_t^{\sigma, \eta} :=\prod^{N}_{i=t} (1- \beta_i(\eta+1)),
  U_t^{\sigma, \eta} :=\prod^{N}_{i=t} (1+ \beta_i(\eta-1)).
\end{align*}

\begin{lem}~\label{lem:gradErrorBounds}
For $\coneName \subseteq \mathbb{R}^n$,
let $f(x) := \frac{1}{2} \distK(x)^2$. If $x_{t-1} = x_t - \beta_t (\nabla f(x_t) + e_t)$
for $e_t$ satisfying $\|e_t\| \le \eta \distK(x_t)$ and $0 \le \beta_t \le 1$, then
\begin{align*}
  L_t^{\sigma, \eta} \distK(x_N) \le  \distK(x_{t-1}) \le U_t^{\sigma, \eta} \distK(x_N).
\end{align*}
\end{lem}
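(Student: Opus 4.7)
The plan is to derive a one-step recursion $a_t \distK(x_t) \le \distK(x_{t-1}) \le b_t \distK(x_t)$ with $a_t = 1 - \beta_t(1+\eta)$ and $b_t = 1 + \beta_t(\eta-1)$, and then chain these inequalities by induction from $N$ down to $t$.

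The key identity from \Cref{prop:distanceProperties} is $\nabla f(x_t) = x_t - \projK(x_t)$, so $\|\nabla f(x_t)\| = \distK(x_t)$. Rewriting the update then gives
\[
  x_{t-1} - \projK(x_t) = (1 - \beta_t)(x_t - \projK(x_t)) - \beta_t e_t.
\]
For the \emph{upper bound}, since $\projK(x_t) \in \coneName$, I would bound $\distK(x_{t-1}) \le \|x_{t-1} - \projK(x_t)\|$, then apply the triangle inequality together with $\|e_t\| \le \eta \distK(x_t)$ and $0 \le \beta_t \le 1$ to conclude
\[
  \distK(x_{t-1}) \le \bigl((1-\beta_t) + \beta_t \eta\bigr) \distK(x_t) = \bigl(1 + \beta_t(\eta - 1)\bigr) \distK(x_t).
\]

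For the \emph{lower bound}, I would instead use $\projK(x_{t-1}) \in \coneName$ as a witness, so $\distK(x_t) \le \|x_t - \projK(x_{t-1})\|$. Writing $x_t - \projK(x_{t-1}) = (x_{t-1} - \projK(x_{t-1})) + \beta_t(\nabla f(x_t) + e_t)$, the triangle inequality together with $\|\nabla f(x_t) + e_t\| \le (1+\eta) \distK(x_t)$ gives
\[
  \distK(x_t) \le \distK(x_{t-1}) + \beta_t(1+\eta)\distK(x_t),
\]
which rearranges to $\distK(x_{t-1}) \ge (1 - \beta_t(1+\eta))\distK(x_t)$. If the factor on the right is negative, the inequality is trivial because $\distK(x_{t-1}) \ge 0$; in either case it is valid.

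Finally, I would iterate both per-step bounds from $i = N$ down to $i = t$. Since the upper-bound factors $1+\beta_i(\eta-1)$ lie in $[0,1]$ (as $\eta < 1$ and $\beta_i \in [0,1]$), the inductive multiplication is monotone and yields $\distK(x_{t-1}) \le U_t^{\sigma,\eta} \distK(x_N)$. The lower bound follows by the same telescoping as long as the factors $1 - \beta_i(1+\eta)$ remain nonnegative (otherwise the claim holds trivially since $L_t^{\sigma,\eta} \distK(x_N) \le 0 \le \distK(x_{t-1})$). The only subtlety to watch is this sign issue in the lower bound; everything else reduces to triangle inequalities with the right choice of witness point in $\coneName$ at each end.
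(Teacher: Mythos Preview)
Your proposal is correct and matches the paper's argument. The paper obtains the per-step bounds by invoking the $1$-Lipschitz property of $\distK$ (its \Cref{lem:distBounds}) at $u=x_{t-1}$ and $v=x_t-\beta_t\nabla f(x_t)$, together with $\distK(v)=(1-\beta_t)\distK(x_t)$; unwinding that Lipschitz inequality is exactly your choice of witnesses $\projK(x_t)$ for the upper bound and $\projK(x_{t-1})$ for the lower bound, and the induction step is identical.
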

Observe that the distance upper bound decreases
only if $\beta_i < 1$ when $\eta > 0$.  This conforms with our intuition that step sizes are limited
by the error in our gradient estimates.

The challenge in applying \Cref{lem:gradErrorBounds} to DDIM lies in the
specifics of our relative error model, which states that
$\epsilon_{\theta}(x_t, \sigma_t)$ provides an $\eta$-accurate estimate of
$\nabla f(x_t)$ \emph{only if} $\sigma_t$ provides a $\nu$-accurate estimate of
$\distK(x_t)$.  Hence we must first control the difference between $\sigma_t$
and distance $\distK(x_t)$ by imposing the following conditions on
$\sigma_t$.
\begin{defn}
We say that parameters $\braces{\sigma_t}_{t=N}^0$ are $(\eta, \nu)$-admissible
if, for all $t \in \{1, \ldots, N\}$,
\begin{align}\label{eq:DDIMParamCond}
  \textstyle
  \frac{1}{\nu}   U_t^{\sigma, \eta}
  \le \prod^N_{i=t} (1-\beta_i)
  \le \nu  L_t^{\sigma, \eta}.
\end{align}
\end{defn}
Intuitively, an admissible schedule decreases $\sigma_t$ slow enough
(corresponding to taking smaller gradient steps) to ensure
$\frac{1}{\nu} \distK(x_t) \le \sqrt{n} \sigma_t \le \nu \distK(x_t)$ holds at
each iteration. Our analysis assumes admissibility of the noise
schedule and our relative-error model (\Cref{def:main}):
\begin{ass}\label{ass:main}
  For $\eta > 0$ and $\nu \ge 1$,
   $\braces{\sigma_t}_{t=N}^0$ is $(\eta, \nu)$-admissible
   and $\epsilon_{\theta}$ is an $(\eta, \nu)$-approximate projection.
\end{ass}
Our main result follows. In simple terms, it states  that DDIM is approximate
gradient descent, admissible schedules $\sigma_t$ are good
estimates of distance, and the error bounds of \Cref{lem:gradErrorBounds} hold.
\begin{thm}[DDIM with relative error]\label{thm:DDIMWithError}
  Let $x_t$ denote the sequence generated by DDIM. Suppose \Cref{ass:main}
  holds, the gradient of $f(x):= \frac{1}{2}\distK(x)^2$ exists for all $x_t$
  and $\distK(x_N) = \sqrt n \sigma_N$. Then:
  \begin{itemize}
    \item $x_t$ is generated by
      approximate gradient descent iterations of the form  in \Cref{lem:gradErrorBounds} with $\beta_t = 1 - \sigma_{t-1}/\sigma_{t}$.
   \item $\frac{1}{\nu} \distK(x_{t})  \le \sqrt{n} \sigma_{t} \le \nu \distK(x_{t})$
   for all $t$.
   \item  $\distK(x_N) L_t^{\sigma, \eta}  \le \distK(x_{t-1}) \le \distK(x_N)  U_t^{\sigma, \eta}$
   \end{itemize}
\end{thm}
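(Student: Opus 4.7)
The plan is a downward induction on $t$ that couples all three claims. First, I would algebraically rewrite the DDIM update \eqref{eq:DDIM-scaled} as
\[
x_{t-1} = x_t - \beta_t\bigl(\sigma_t\, \epsilon_\theta(x_t,\sigma_t)\bigr),
\]
using $\beta_t = 1-\sigma_{t-1}/\sigma_t$, and define the error $e_t := \sigma_t \epsilon_\theta(x_t,\sigma_t) - \nabla f(x_t)$. Because $\nabla f(x_t)$ exists by hypothesis, \Cref{prop:distanceProperties} gives $\nabla f(x_t) = x_t - \projK(x_t)$, so the relative-error model \eqref{eq:relativeError} is literally the bound $\|e_t\| \le \eta\,\distK(x_t)$---but crucially only at those $x_t$ for which $\frac{1}{\nu}\distK(x_t) \le \sqrt{n}\sigma_t \le \nu\distK(x_t)$.

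Second, I would establish this range condition at every index $s$ by induction decreasing from $N$. The base case $s = N$ is immediate from $\distK(x_N) = \sqrt{n}\sigma_N$. Assuming the range condition at every $s \ge t$, the error model applies at each such $s$, so the DDIM recursion is of the form assumed in \Cref{lem:gradErrorBounds} with $0 \le \beta_s \le 1$ (immediate since $0 < \sigma_{s-1} < \sigma_s$). Applying the lemma yields
\[
L_t^{\sigma,\eta}\,\distK(x_N) \;\le\; \distK(x_{t-1}) \;\le\; U_t^{\sigma,\eta}\,\distK(x_N).
\]

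Third, I would invoke admissibility \eqref{eq:DDIMParamCond} to extend the range condition to $t-1$. The key algebraic observation is the telescoping identity
\[
\prod_{i=t}^{N}(1-\beta_i) \;=\; \prod_{i=t}^{N}\frac{\sigma_{i-1}}{\sigma_i} \;=\; \frac{\sigma_{t-1}}{\sigma_N},
\]
which rewrites \eqref{eq:DDIMParamCond} as $\frac{1}{\nu}U_t^{\sigma,\eta}\sqrt{n}\sigma_N \le \sqrt{n}\sigma_{t-1} \le \nu L_t^{\sigma,\eta}\sqrt{n}\sigma_N$. Combined with the distance sandwich above and $\distK(x_N)=\sqrt{n}\sigma_N$, this gives $\frac{1}{\nu}\distK(x_{t-1}) \le \sqrt{n}\sigma_{t-1} \le \nu\distK(x_{t-1})$, closing the induction. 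The three conclusions of the theorem then drop out in sequence: the gradient-descent form from the first paragraph, the range condition from the induction, and the distance bound directly from \Cref{lem:gradErrorBounds}.

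The main obstacle is not an analytic one but rather the bookkeeping of pairing inequalities in the correct direction: to derive $\sqrt{n}\sigma_{t-1} \le \nu\distK(x_{t-1})$ one must pair $\sigma_{t-1}/\sigma_N \le \nu L_t^{\sigma,\eta}$ with the \emph{lower} bound $L_t^{\sigma,\eta}\distK(x_N) \le \distK(x_{t-1})$, and symmetrically for the other side. Once this pairing is fixed, everything reduces to rearrangement of \eqref{eq:DDIMParamCond} via the telescoping identity; no analytic input beyond \Cref{prop:distanceProperties}, \Cref{def:main}, and \Cref{lem:gradErrorBounds} is required.
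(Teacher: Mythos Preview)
Your proposal is correct and follows essentially the same approach as the paper: a downward induction on the range condition $\frac{1}{\nu}\distK(x_t)\le\sqrt{n}\sigma_t\le\nu\distK(x_t)$, using \Cref{lem:gradErrorBounds} for the distance sandwich and the admissibility condition \eqref{eq:DDIMParamCond} (via the telescoping identity $\prod_{i=t}^N(1-\beta_i)=\sigma_{t-1}/\sigma_N$) to close the induction. The paper packages the induction as an auxiliary theorem stated under the hypothesis~\eqref{eq:parameterAssumption} and then notes that admissibility plus the DDIM step rule $\sigma_{t-1}=(1-\beta_t)\sigma_t$ yields~\eqref{eq:parameterAssumption}; you fold this observation directly into the argument, but the logical content is identical.
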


\subsection{Admissible Log-Linear Schedules for DDIM}~\label{sec:admissible-params}
We next characterize admissible $\sigma_t$ of
the form $\sigma_{t-1} = (1-\beta) \sigma_t$
where $\beta$ denotes a \emph{constant} step-size.
This illustrates  that admissible $\sigma_t$-sequences
not only \emph{exist}, they can also
be explicitly constructed from $(\eta, \nu)$.

\begin{thm}\label{thm:DDIMStepSize}
Fix $\beta \in \mathbb{R}$ satisfying $0 \le \beta < 1$
and suppose that  $\sigma_{t-1} = (1-\beta) \sigma_t$.
Then $\sigma_t$ is $(\eta, \nu)$-admissible
if and only if $\beta \le  \beta_{*, N}$ where $\beta_{*, N}:=\frac{c}{\eta+c}$ for $c := 1-\nu^{-1/N}$.
\end{thm}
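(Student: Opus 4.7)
The plan is to specialize the admissibility condition~\eqref{eq:DDIMParamCond} to a constant step-size $\beta_i = \beta$ and reduce it to a closed-form constraint on $\beta$. Under $\sigma_{t-1} = (1-\beta)\sigma_t$, one has $\beta_t = 1 - \sigma_{t-1}/\sigma_t = \beta$ for every $t$, so each of the three products appearing in~\eqref{eq:DDIMParamCond} collapses to a single power. Letting $k := N - t + 1 \in \{1, \ldots, N\}$, taking $k$-th roots turns~\eqref{eq:DDIMParamCond} into the pair of inequalities
\[
\nu^{-1/k}\bigl(1 - \beta(1-\eta)\bigr) \;\le\; 1-\beta \;\le\; \nu^{1/k}\bigl(1-\beta(1+\eta)\bigr),
\]
required to hold for every $k \in \{1,\ldots, N\}$.

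Next, I would determine which value of $k$ is binding. Since $\nu \ge 1$, the factor $\nu^{-1/k}$ is nondecreasing in $k$, so the left inequality tightens as $k$ grows and is tightest at $k = N$; likewise $\nu^{1/k}$ is nonincreasing in $k$, so the right inequality is also tightest at $k = N$. The full system is therefore equivalent to the same two inequalities at $k = N$ alone. I then compare them: solving each for $\beta$ yields the upper bound $\beta \le (\nu^{1/N}-1)/(\eta + \nu^{1/N}-1)$ from the left inequality and $\beta \le (\nu^{1/N}-1)/(\nu^{1/N}(1+\eta)-1)$ from the right inequality. Because $\eta \ge 0$ and $\nu^{1/N} \ge 1$ together imply $\nu^{1/N}(1+\eta)-1 \ge \eta + \nu^{1/N}-1$, the right bound is the smaller and hence binding.

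Finally, I would rewrite the binding bound in the stated form: dividing numerator and denominator of $(\nu^{1/N}-1)/(\nu^{1/N}(1+\eta)-1)$ by $\nu^{1/N}$ yields
\[
\frac{1-\nu^{-1/N}}{\eta + 1 - \nu^{-1/N}} \;=\; \frac{c}{\eta+c},
\]
with $c := 1-\nu^{-1/N}$, matching $\beta_{*,N}$. The converse (``only if'') direction is automatic, since the same reduction shows any $\beta > \beta_{*,N}$ already violates the $k = N$ instance of the right inequality. The main obstacle is the bookkeeping: one has to verify that the ``right'' inequality truly dominates the ``left'' and that all denominators remain positive in the admissible range (noting that at $\beta = \beta_{*,N}$ one has $\beta(1+\eta) < 1$, so $1-\beta(1+\eta) > 0$ throughout), but these are short algebraic checks rather than conceptual hurdles.
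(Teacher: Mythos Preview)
Your proposal is correct and follows essentially the same route as the paper: specialize~\eqref{eq:DDIMParamCond} to constant $\beta$, reduce the two inequalities to bounds on $\beta$, show the ``right'' inequality is the binding one (the paper does this via the substitution $a=\eta\beta/(1-\beta)$, $b=\nu^{1/N}$ and the identity $b-1\ge 1-b^{-1}$, while you compare the two $\beta$-bounds directly), and rewrite the binding bound as $c/(\eta+c)$. You are in fact more explicit than the paper about why the $k=N$ instance is the tightest across $t\in\{1,\ldots,N\}$; the paper jumps straight to exponent $N$ without comment.
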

Suppose we fix $(\eta, \nu)$ and choose, for a given $N$, the step-size
$\beta_{*, N}$. It is natural to ask how the
error bounds of \Cref{thm:DDIMWithError} change as
$N$ increases.  We establish
the limiting behavior of the \emph{final} output $(\sigma_0, x_0)$
of DDIM\@.
\begin{thm}\label{thm:DDIMStepSizeLimit}
Let $x_N, \ldots, x_1, x_0$ denote the sequence generated by DDIM
with $\sigma_t$ satisfying $\sigma_{t-1} =  (1-\beta_{*, N})\sigma_t$
for $\nu \ge 1$ and $\eta > 0$.
Then
\begin{itemize}
 \item $\Lim{N\rightarrow \infty} \frac{\sigma_0}{\sigma_N} = \Lim{N\rightarrow \infty}(1-\beta_{*,N})^N = \nu^{-1/\eta}$.
 \item $\Lim{N\rightarrow \infty} \frac{\distK(x_0)}{\distK(x_N)} \le \Lim{N\rightarrow \infty}(1+ (\eta-1)\beta_{*,N})^N = \nu^{\frac{\eta-1}{\eta}}$.
\end{itemize}
\end{thm}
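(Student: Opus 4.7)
}
The plan is to reduce both statements to direct computations of $N$--th power limits, once we identify the relevant products from \Cref{thm:DDIMWithError}. By construction $\sigma_{t-1} = (1-\beta_{*,N})\sigma_t$ for every $t$, so a telescoping product gives $\sigma_0/\sigma_N = (1-\beta_{*,N})^N$, which establishes the equality in the first bullet. For the second bullet, \Cref{thm:DDIMWithError} yields $\distK(x_0) \le U_1^{\sigma,\eta}\distK(x_N)$, and with the constant step size this product collapses to $U_1^{\sigma,\eta} = (1+(\eta-1)\beta_{*,N})^N$, giving the claimed inequality before taking the limit. Thus the theorem is reduced to evaluating two limits of the form $\lim_{N\to\infty}(1+a_N)^N$.

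The key computational step is to expand $c = 1 - \nu^{-1/N}$ using $\nu^{-1/N} = e^{-(\log \nu)/N}$. A first-order Taylor expansion yields
\[
c = \tfrac{\log \nu}{N} + O(1/N^2),
\]
so that $c \to 0$ as $N \to \infty$. Plugging into $\beta_{*,N} = c/(\eta+c)$, I would write
\[
\beta_{*,N} = \tfrac{c}{\eta+c} = \tfrac{\log\nu}{\eta N} + O(1/N^2),
\]
and likewise
\[
1 - \beta_{*,N} = \tfrac{\eta}{\eta+c} = 1 - \tfrac{\log\nu}{\eta N} + O(1/N^2).
\]

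Now I apply the standard identity $\lim_{N\to\infty}(1+ a/N + O(1/N^2))^N = e^{a}$, which follows from taking logarithms and using $\log(1+x) = x + O(x^2)$. For the first bullet this gives
\[
\lim_{N\to\infty}(1-\beta_{*,N})^N = e^{-(\log\nu)/\eta} = \nu^{-1/\eta}.
\]
For the second bullet, set $a_N := (\eta-1)\beta_{*,N}$ and observe $a_N = \frac{(\eta-1)\log\nu}{\eta N} + O(1/N^2)$, yielding
\[
\lim_{N\to\infty}(1+(\eta-1)\beta_{*,N})^N = e^{(\eta-1)(\log\nu)/\eta} = \nu^{(\eta-1)/\eta}.
\]

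I do not expect a genuine obstacle here; the only mild care is to confirm that the $O(1/N^2)$ remainder in the Taylor expansion of $\nu^{-1/N}$ propagates through the rational function $c \mapsto c/(\eta+c)$ (which is smooth at $c=0$ since $\eta > 0$) without changing the leading $1/N$ coefficient, so that the standard exponential limit applies. Everything else is bookkeeping on top of \Cref{thm:DDIMWithError}.
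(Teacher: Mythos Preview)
Your proposal is correct and complete. The reduction to $(1-\beta_{*,N})^N$ and $(1+(\eta-1)\beta_{*,N})^N$ via the telescoping product and \Cref{thm:DDIMWithError} matches the paper exactly, but the evaluation of the limits differs. The paper introduces a parameter $\mu$, writes the general form $(1-\mu\beta_{*,N})^N$, and computes the limit by rewriting it as $\exp\bigl(\lim_{N\to\infty}\log(\cdots)/(1/N)\bigr)$ and applying L'H\^opital's rule, then specializes to $\mu=1$ and $\mu=1-\eta$. You instead Taylor-expand $\nu^{-1/N}$ to extract the leading $1/N$ coefficient of $\beta_{*,N}$ and invoke the elementary limit $(1+a/N+O(1/N^2))^N\to e^a$. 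Your route is slightly more elementary in that it avoids L'H\^opital and the attendant check of indeterminate form, while the paper's $\mu$-parametrization has the minor advantage of handling both bullets with a single computation. Either way the analysis is routine; your remark about the smoothness of $c\mapsto c/(\eta+c)$ at $c=0$ is exactly the right justification for propagating the $O(1/N^2)$ term.
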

This theorem illustrates that final error, while bounded, need not converge
to zero under our error model. This motivates
heuristically updating the step-size from $\beta_{*, N}$ to a full step $(\beta = 1)$
during the final DDIM iteration. We adopt this approach in our
experiments~(\Cref{sec:experiments}).

Next we demonstrate an explicit construction of an admissible schedule using
numerical estimates of the error parameters on an image dataset.

\begin{ex}[Construction of admissible schedule]
  Let the CIFAR-10 training set be $\mathcal{K}$ and the ideal denoiser be
  $\epsilon_\theta$. From \Cref{fig:ideal-denoiser}, which plots the relative
  projection error relative to the training set, we see that $\eta \le 0.1$.
  Our experiments comparing $\distK(x_t)$ with $\sqrt{n} \sigma_t$ suggest that
  $\nu = 2$ is a conservative estimate, as the error in $\distK(x_t)$ is bounded
  by this amount throughout the sampling trajectories. Theorem 4.3 shows that if
  $\sigma_{t-1}/\sigma_t \ge \frac{\eta}{\eta+1-\nu^{-1/N}}$, then
  $\sigma_0, \ldots, \sigma_N$ is an admissible schedule. With $\eta=0.1$,
  $\nu=2$ and $N=50$, we obtain $\sigma_{t-1}/\sigma_t \ge 0.88$. This is very
  close to the value of $\sigma_{t-1}/\sigma_t = 0.85$ in the schedule used in
  our sampler in \Cref{sec:exp-noise-schedule}.
\end{ex}

\confonly{
\begin{figure*}
  \centering
  \begin{tabular}{ccccc}
    \textbf{Ours}
    & \textbf{UniPC}
    & \textbf{DPM++}
    & \textbf{PNDM}
    & \textbf{DDIM} \\
    & \tiny{\citep{zhao2023unipc}} & \tiny{\citep{lu2022dpmpp}} & \tiny{\citep{liu2022pseudo}} & \tiny{\citep{song2020denoising}} \\
    \textbf{FID 13.77} & 15.59 & 15.43 & 19.43 & 14.06 \\[0.2cm]
    \img{figures/sd_1_2nd} & \img{figures/sd_1_unipc} &
    \img{figures/sd_1_dpm} & \img{figures/sd_1_pndm} & \img{figures/sd_1_ddim} \\
    \img{figures/sd_2_2nd} & \img{figures/sd_2_unipc} &
    \img{figures/sd_2_dpm} & \img{figures/sd_2_pndm} & \img{figures/sd_2_ddim} \\
    \img{figures/sd_3_2nd} & \img{figures/sd_3_unipc} &
    \img{figures/sd_3_dpm} & \img{figures/sd_3_pndm} & \img{figures/sd_3_ddim}
  \end{tabular}
  \caption{Outputs of our gradient-estimation sampler on text-to-image Stable Diffusion compared to
    other commonly used samplers, when limited to $N = 10$ function
    evaluations. We also report FID scores for text-to-image generation on
    MS-COCO 30K.}
  \label{fig:sd-figures}
\end{figure*}
}

\arxivonly{
  \begin{figure}
  \centering
  \begin{minipage}[t]{.48\linewidth}
    \begin{algorithm}[H]
      \caption{DDIM sampler \citep{song2020denoising}}
      \label{alg:ddim}
      \begin{algorithmic}
        \Require $(\sigma_N, \ldots, \sigma_0)$, $x_N \sim \mathcal{N}(0, I)$, $\epsilon_\theta$
        \Ensure Compute $x_0$ with $N$ evaluations of $\epsilon_\theta$
        \For{$t = N, \ldots, 1$}
        \State $x_{t-1} \gets x_t + (\sigma_{t-1} - \sigma_t) \epsilon_\theta(x_t, \sigma_t)$
        \EndFor
        \State \Return $x_0$
      \end{algorithmic}
    \end{algorithm}
  \end{minipage}\,\,\,\,
  \begin{minipage}[t]{.48\linewidth}
    \begin{algorithm}[H]
      \caption{Our gradient-estimation sampler}
      \label{alg:second-order}
      \begin{algorithmic}
        \Require $(\sigma_N, \ldots, \sigma_0)$, $x_N \sim \mathcal{N}(0, I)$, $\epsilon_\theta$
        \Ensure Compute $x_0$ with $N$ evaluations of $\epsilon_\theta$
        \State $x_{N-1} \gets x_N + (\sigma_{N-1} - \sigma_N)\epsilon_\theta(x_N, \sigma_N)$
        \For{$t = N-1, \ldots, 1$}
        \State $\bar{\epsilon}_t \gets 2\epsilon_\theta(x_t, \sigma_t) - \epsilon_\theta(x_{t+1}, \sigma_{t+1})$
        \State $x_{t-1} \gets x_t + (\sigma_{t-1} - \sigma_t) \bar{\epsilon}_t$
        \EndFor
        \State \Return $x_0$
      \end{algorithmic}
  \end{algorithm}
  \end{minipage}

\end{figure}
}

\confonly{
\begin{algorithm}[H]
  \caption{DDIM sampler \citep{song2020denoising}}
  \label{alg:ddim}
  \begin{algorithmic}
    \Require $(\sigma_N, \ldots, \sigma_0)$, $x_N \sim \mathcal{N}(0, I)$, $\epsilon_\theta$
    \Ensure Compute $x_0$ with $N$ evaluations of $\epsilon_\theta$
    \For{$t = N, \ldots, 1$}
    \State $x_{t-1} \gets x_t + (\sigma_{t-1} - \sigma_t) \epsilon_\theta(x_t, \sigma_t)$
    \EndFor
    \State \Return $x_0$
  \end{algorithmic}
\end{algorithm}

\begin{algorithm}[H]
  \caption{Our gradient-estimation sampler}
  \label{alg:second-order}
  \begin{algorithmic}
    \Require $(\sigma_N, \ldots, \sigma_0)$, $x_N \sim \mathcal{N}(0, I)$, $\epsilon_\theta$
    \Ensure Compute $x_0$ with $N$ evaluations of $\epsilon_\theta$
    \State $x_{N-1} \gets x_N + (\sigma_{N-1} - \sigma_N)\epsilon_\theta(x_N, \sigma_N)$
    \For{$t = N-1, \ldots, 1$}
    \State $\bar{\epsilon}_t \gets 2\epsilon_\theta(x_t, \sigma_t) - \epsilon_\theta(x_{t+1}, \sigma_{t+1})$
    \State $x_{t-1} \gets x_t + (\sigma_{t-1} - \sigma_t) \bar{\epsilon}_t$
    \EndFor
    \State \Return $x_0$
  \end{algorithmic}
\end{algorithm}
}

\section{Improving Deterministic Sampling Algorithms via Gradient Estimation}\label{sec:newAlgorithms}

\Cref{sec:denoiseVsProj} establishes that $\epsilon_{\theta}(x, \sigma)
\approx \sqrt{n} \nabla \distK(x)$ when $\distK(x) \approx \sqrt{n} \sigma$.
We next exploit  an invariant property of $\nabla \distK(x)$ to reduce the
prediction error of $\epsilon_{\theta}$ via \emph{gradient estimation}.

\arxivonlyverbatim{
\begin{wrapfigure}{r}{0.4\textwidth}
  \centering
  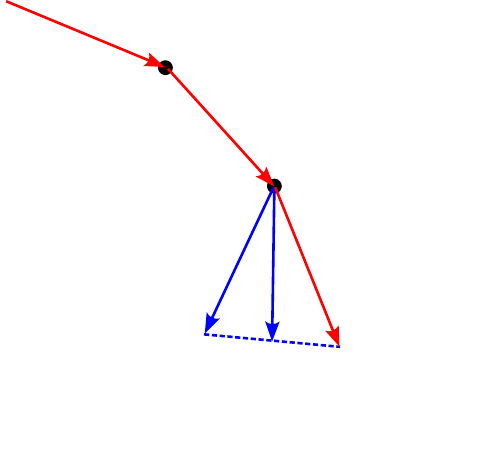
  \caption{Illustration of our choice of $\bar{\epsilon}_t$}
  \label{fig:denoiser-step}
\end{wrapfigure}
}

The gradient $\nabla \distK(x)$ is \emph{invariant} along line segments between
a point $x$ and its projection $\projK(x)$, i.e., letting $\hat x = \projK(x)$, for all $\theta \in (0, 1]$ we have
\begin{align}\label{eq:distConst}
\nabla \distK(\theta x +(1-\theta) \hat x) = \nabla \distK(x). %
\end{align}
Hence, $\epsilon_{\theta}(x, \sigma)$ should be
(approximately) constant on this line-segment
under our assumption that $\epsilon_{\theta}(x, \sigma) \approx \sqrt{n} \nabla \distK(x)$
when $\distK(x) \approx \sqrt{n} \sigma$.
 Precisely, for $x_1$ and $x_2$ on this line-segment,
we should have
\begin{align}\label{eq:constantGrad}
\epsilon_{\theta}(x_1, \sigma_{t_1}) \approx \epsilon_{\theta}(x_2, \sigma_{t_2})
\end{align}
if $t_i$ satisfies $\distK(x_i) \approx \sqrt{n} \sigma_{t_i}$.  This property
suggests combining previous denoiser outputs
$\{\epsilon_{\theta}(x_i, \sigma_i)\}^N_{i=t+1}$ to estimate
$\epsilon_t := \sqrt n \nabla \distK(x_t)$.  We next propose a practical
\emph{second-order method} \footnote{This method is second-order in the sense
  that the update step uses previous values of $\epsilon_\theta$, and should not
  be confused with second-order derivatives.}  for this estimation that combines
the current denoiser output with the previous.  Recently introduced
\emph{consistency models} \citep{song2023consistency} penalize violation
of~\eqref{eq:constantGrad} during \emph{training}.  Interpreting denoiser output
as $\nabla \distK(x)$ and invoking~\eqref{eq:distConst} offers an alternative
justification for these models.
\confonly{
\begin{figure}[h]
  \centering
  \input{figures/drawing.pdf_tex}
  \caption{Illustration of our choice of $\bar{\epsilon}_t$}
  \vspace{-0.5cm}
  \label{fig:denoiser-step}
\end{figure}
}

Let $e_t(\epsilon_t) = \epsilon_t - \epsilon_{\theta}(x_t, \sigma_t)$ be the error
of $\epsilon_{\theta}(x_t, \sigma_t)$ when predicting $\epsilon_t$. To estimate $\epsilon_t$
from $\epsilon_{\theta}(x_t, \sigma_t)$, we minimize the norm of this error concatenated
over two time-steps.
Precisely, letting $y_t(\epsilon) = (e_t(\epsilon), e_{t+1}(\epsilon))$,
we compute
\begin{align}\label{eq:estimatorGeneralForm}
  \bar{\epsilon}_t := \argmin_{\epsilon} \norm{y_t(\epsilon)}^2_W,
\end{align}
where $W$ is a specified positive-definite weighting matrix. In
\Cref{sec:second-order-weighting} we show that this error model, for a particular family of weight
matrices, results in the update rule
\begin{align} \label{eq:eps-bar-gamma}
  \bar\epsilon_t = \gamma \epsilon_{\theta}(x_t, \sigma_t) + (1-\gamma) \epsilon_{\theta}(x_{t+1}, \sigma_{t+1}),
\end{align}
where we can search over $W$ by searching over $\gamma$.

\section{Experiments}\label{sec:experiments}

\confonly{
\begin{figure}[H]
  \centering
  \includegraphics[width=0.3\textwidth]{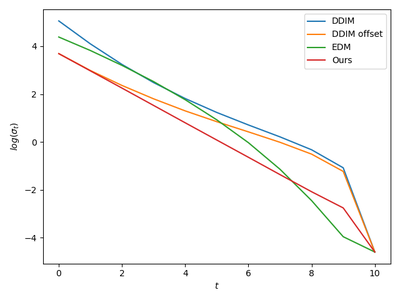}
  \caption{Plot of different choices of $\log(\sigma_t)$ for $N=10$.}
  \label{fig:sigma_schedule}
\end{figure}
\vspace{-.5cm}
\begin{table}[H]
  \centering
  \begin{tabular}[t]{lll}
    \toprule
    Schedule    & CIFAR-10  & CelebA \\
    \midrule
    DDIM        & 16.86 & 18.08 \\
    DDIM Offset & 14.18 & 15.38 \\
    EDM         & 20.85 & 16.72 \\
    Ours        & \textbf{13.25} & \textbf{13.55} \\
    \bottomrule
  \end{tabular}
  \vspace{0.5cm}
  \caption{FID scores of the DDIM sampler (\Cref{alg:ddim}) with different
    $\sigma_t$ schedules on the CIFAR-10 model for $N=10$ steps.}
  \label{table:schedule-fid}
\end{table}
}

\arxivonly{
\begin{figure*}
  \centering
  \begin{tabular}{ccccc}
    \textbf{Ours}
    & \textbf{UniPC}
    & \textbf{DPM++}
    & \textbf{PNDM}
    & \textbf{DDIM} \\
    & \tiny{\citep{zhao2023unipc}} & \tiny{\citep{lu2022dpmpp}} & \tiny{\citep{liu2022pseudo}} & \tiny{\citep{song2020denoising}} \\
    \textbf{FID 13.77} & 15.59 & 15.43 & 19.43 & 14.06 \\[0.2cm]
    \img{figures/sd_1_2nd} & \img{figures/sd_1_unipc} &
    \img{figures/sd_1_dpm} & \img{figures/sd_1_pndm} & \img{figures/sd_1_ddim} \\
    \img{figures/sd_2_2nd} & \img{figures/sd_2_unipc} &
    \img{figures/sd_2_dpm} & \img{figures/sd_2_pndm} & \img{figures/sd_2_ddim} \\
    \img{figures/sd_3_2nd} & \img{figures/sd_3_unipc} &
    \img{figures/sd_3_dpm} & \img{figures/sd_3_pndm} & \img{figures/sd_3_ddim}
  \end{tabular}
  \caption{Outputs of our gradient-estimation sampler on text-to-image Stable Diffusion compared to
    other commonly used samplers, when limited to $N = 10$ function
    evaluations. We also report FID scores for text-to-image generation on
    MS-COCO 30K.}
  \label{fig:sd-figures}
\end{figure*}
}

We evaluate modifications of DDIM (\Cref{alg:ddim}) that leverage insights from
\Cref{sec:newAlgorithms} and \Cref{sec:admissible-params}.  Following
\Cref{sec:newAlgorithms} we modify DDIM to use a second-order update that
corrects for error in the denoiser output (\Cref{alg:second-order}).
Specifically, we use the~\Cref{eq:eps-bar-gamma} update with an empirically
tuned $\gamma$. We found that setting $\gamma=2$ works well for $N < 20$; for
larger $N$ slightly increasing $\gamma$ also improves sample quality (see
\Cref{sec:futher-experiments} for more details). A comparison of this update
with DDIM is visualized in \Cref{fig:denoiser-step}. Following
\Cref{sec:admissible-params}, we select a noise schedule
$(\sigma_N, \ldots, \sigma_0)$ that decreases at a log-linear (geometric)
rate. The specific rate is determined by an initial and target noise level.  Our
$\sigma_t$ schedule is illustrated in \Cref{fig:sigma_schedule}, along with
other commonly used schedules.  We note that log-linear schedules have been
previously proposed for SDE-samplers \citep{song2020score}; to our knowledge we
are the first to propose and analyze their use for DDIM\footnote{DDIM is usually
  presented using not $\sigma_t$ but parameters $\alpha_t$ satisfying
  $\sigma^2_t = (1-\alpha_t)/\alpha_t$.  Linear updates of $\sigma_t$ are less
  natural when expressed in terms of $\alpha_t$.}. All the experiments were run
on a single Nvidia RTX 4090 GPU. Code for the experiments is available at
\url{https://github.com/ToyotaResearchInstitute/gradient-estimation-sampler}

\subsection{Evaluation of Noise Schedule}\label{sec:exp-noise-schedule}

\arxivonly{
  \begin{figure}[t]
  \centering
  \begin{minipage}[b]{.45\linewidth}
    \begin{figure}[H]
      \centering
      \includegraphics[width=0.8\textwidth]{figures/sigma_schedule}
      \caption{Plot of different choices of $\log(\sigma_t)$ for $N=10$.}
      \label{fig:sigma_schedule}
    \end{figure}
  \end{minipage}\,\,\,\,
  \begin{minipage}[b]{.45\linewidth}
    \begin{table}[H]
      \centering
      \begin{tabular}[t]{lll}
        \toprule
        Schedule    & CIFAR-10  & CelebA \\
        \midrule
        DDIM        & 16.86 & 18.08 \\
        DDIM Offset & 14.18 & 15.38 \\
        EDM         & 20.85 & 16.72 \\
        Ours        & \textbf{13.25} & \textbf{13.55} \\
        \bottomrule
      \end{tabular}
      \vspace{0.5cm}
      \caption{FID scores of the DDIM sampler (\Cref{alg:ddim}) with different
        $\sigma_t$ schedules on the CIFAR-10 model for $N=10$ steps.}
      \label{table:schedule-fid}
    \end{table}
  \end{minipage}
\end{figure}
}
In \Cref{fig:sigma_schedule} we plot our schedule (with our choices
of $\sigma_t$ detailed in \Cref{sec:exp-details}) with three other commonly
used schedules on a log scale. The first is the evenly spaced
subsampling of the training noise levels used by DDIM. The second ``DDIM
Offset'' uses the same even spacing but starts at a smaller $\sigma_N$, the same
as that in our schedule. This type of schedule is typically used for guided
image generation such as SDEdit \citep{meng2021sdedit}. The third ``EDM'' is the
schedule used in \citet[Eq. 5]{karras2022elucidating}, with
$\sigma_{\max} = 80, \sigma_{\min}=0.002$ and $\rho=7$.

We then test these schedules on the DDIM sampler \Cref{alg:ddim} by sampling
images with $N=10$ steps from the CIFAR-10 and CelebA models. We see that in
\Cref{table:schedule-fid} that our schedule improves the FID of the DDIM sampler
on both datasets even without the second-order updates. This is in part due to
choosing a smaller $\sigma_N$ so the small number of steps can be better spent
on lower noise levels (the difference between ``DDIM'' and ``DDIM Offset''), and
also because our schedule decreases $\sigma_t$ at a faster rate than DDIM (the
difference between ``DDIM Offset'' and ``Ours'').

\subsection{Evaluation of Full Sampler}
\begin{table*}
  \centering
  \setlength\tabcolsep{3pt}
  \small
  \confonly{
    \caption{FID scores of our gradient-estimation sampler compared to that of
      other samplers for pretrained CIFAR-10 and CelebA models with a discrete
      linear schedule. The first half of the table shows our computational
      results whereas the second half of the table show results taken from the
      respective papers. *Results for $N=25$}
  }
  \label{table:sampler-fid}
  \begin{tabular}[t]{lllllllll}
    \toprule
    & \multicolumn{4}{c}{CIFAR-10 FID} & \multicolumn{4}{c}{CelebA FID} \\
    Sampler & $N=5$ & $N=10$ & $N=20$ & $N=50$ & $N=5$ & $N=10$ & $N=20$ & $N=50$ \\
    \midrule
    Ours & \textbf{12.57} & \textbf{3.79} & \textbf{3.32} & \textbf{3.41} & \textbf{10.76} & \textbf{4.41} & 3.19 & 3.04 \\
    DDIM \tiny{\citep{song2020denoising}} & 47.20 & 16.86 & 8.28 & 4.81 & 32.21 & 18.08 & 11.81 & 7.39  \\
    \midrule
    PNDM \tiny{\citep{liu2022pseudo}} & 13.9 & 7.03 & 5.00 & 3.95 & 11.3 & 7.71 & 5.51 & 3.34  \\
    DPM \tiny{\citep{lu2022dpm}} &  & 6.37 & 3.72 & 3.48 &  & 5.83 & \textbf{2.82} & \textbf{2.71}  \\
    DEIS \tiny{\citep{zhang2022fast}} & 18.43 & 7.12 & 4.53 & 3.78 & 25.07 & 6.95 & 3.41 & 2.95 \\
    UniPC \tiny{\citep{zhao2023unipc}} & 23.22 & 3.87 &   &   &   &   &   &   \\
    A-DDIM \tiny{\citep{bao2022analytic}} &   & 14.00 & 5.81* & 4.04 &   & 15.62 & 9.22* & 6.13 \\
    \bottomrule
  \end{tabular}
  \arxivonly{
    \caption{FID scores of our gradient-estimation sampler compared to that of
      other samplers for pretrained CIFAR-10 and CelebA models with a discrete
      linear schedule. The first half of the table shows our computational
      results whereas the second half of the table show results taken from the
      respective papers. *Results for $N=25$}
  }
\end{table*}

We quantitatively evaluate our gradient-estimation sampler
(\Cref{alg:second-order}) by computing the Fr\'echet Inception Distance (FID)
\citep{heusel2017gans} between all the training images and 50k generated
images. We use denoisers from \cite{ho2020denoising, song2020denoising} that
were pretrained on the CIFAR-10 (32x32) and CelebA (64x64)
datasets~\citep{krizhevsky2009learning, liu2015faceattributes}. We compare our
results with other samplers using the same denoisers. The FID scores are
tabulated in \Cref{table:sampler-fid}, showing that our sampler achieves better
performance on both CIFAR-10 (for $N=5,10,20,50$) and CelebA (for $N=5,10$).

We also incorporated our sampler into Stable Diffusion (a latent diffusion
model). We change the noise schedule $\sigma_t$ as described in
\Cref{sec:exp-details}. In \Cref{fig:sd-figures}, we show some example results
for text to image generation in $N=10$ function evaluations, as well as FID
results on 30k images generated from text captions drawn the MS COCO
\citep{lin2014microsoft} validation set. From these experiments we can see that
our sampler performs comparably to other commonly used samplers, but with the
advantage of being much simpler to describe and implement.

\begin{figure}
  \centering
  \includegraphics[width=0.5\textwidth]{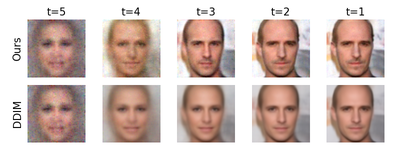}
  \caption{A comparison of our gradient-estimation sampler with DDIM on the CelebA dataset with
    $N=5$ steps. }
  \label{fig:sampler_comparison}
  \vspace{-0.3cm}
\end{figure}

\section{Related Work and Discussion}\label{sec:discussion}
\paragraph{Learning diffusion models}
Diffusion was originally introduced as a
variational inference method that learns to reverse a noising process~\cite{sohl2015deep}. This approach
was empirically improved by \citep{ho2020denoising,nichol2021improved} by
introducing the training loss~\eqref{eq:trainingLoss}, which is different from the original variational lower
bound. This improvement is justified from the perspective of denoising score
matching \citep{song2019generative,song2020score}, where the $\epsilon_\theta$
is interpreted as $\nabla \log(p(x_t, \sigma_t))$, the gradient of the log
density of the data distribution perturbed by noise. Score matching is also
shown to be equivalent to denoising autoencoders with Gaussian noise
\citep{vincent2011connection}.

\paragraph{Sampling from diffusion models}
Samplers for diffusion models started with probabilistic methods
(e.g. \cite{ho2020denoising}) that formed the reverse process by conditioning on
the denoiser output at each step. In parallel, score based models
\citep{song2019generative,song2020score} interpret the forward noising process
as a stochastic differential equation (SDE), so SDE solvers based on Langevian
dynamics \citep{welling2011bayesian} are employed to reverse this process. As
models get larger, computational constraints motivated the development of more
efficient samplers. \cite{song2020denoising} then discovered that for smaller
number of sampling steps, deterministic samplers perform better than stochastic
ones. These deterministic samplers are constructed by reversing a non-Markovian
process that leads to the same training objective, which is equivalent to
turning the SDE into an ordinary differential equation (ODE) that matches its
marginals at each sampling step.

This led to a large body of work focused on developing ODE and SDE solvers for
fast sampling of diffusion models, a few of which we have evaluated in
\Cref{table:sampler-fid}. Most notably, \cite{karras2022elucidating} put
existing samplers into a common framework and isolated components that can be
independently improved. Our gradient-estimation sampler \Cref{alg:second-order}
bears most similarity to linear multistep methods, which can also be interpreted
as accelerated gradient descent \citep{scieur2017integration}. What differs is
the error model: ODE solvers aim to minimize discretization error whereas we aim
to minimize gradient estimation error, resulting in different ``optimal''
samplers.

\paragraph{Linear-inverse problems and conditioning}
Several authors \citep{kadkhodaie2020solving, chung2022improving,
  kawar2022denoising} have devised samplers for finding images that satisfy
linear equations $Ax=b$.  Such linear inverse problems generalize inpainting,
colorization, and compressed sensing.  In our framework, we can interpret this
samplers as algorithms for equality constraint minimization of the distance
function, a classical problem in optimization. Similarly, the widely used
technique of \emph{conditioning} \citep{dhariwal2021diffusion} can be interpreted
as multi-objective optimization, where minimization of distance is replaced with
minimization of $\distK(x)^2 + g(x)$ for an auxiliary objective function $g(x)$.

\paragraph{Score distillation sampling}
We illustrate the potential of our framework for discovering new applications of
diffusion models by deriving Score Distillation Sampling (SDS), a method for
parameterized optimization introduced in~\cite{poole2022dreamfusion} in the
context of text to 3D object generation. At a high-level, this technique finds
$(x, \theta)$ satisfying non-linear equations $x = g(\theta)$ subject to the
constraint $x \in \coneName$, where $\coneName$ denotes the image manifold. It
does this by iteratively updating $x$ with a direction proportional to
$(\epsilon_\theta(x+\sigma \epsilon, \sigma) - \epsilon)\nabla g(\theta)$, where
$\sigma$ is a randomly chosen noise level and $\epsilon \sim \mathcal{N}(0,
I)$. Under our framework, this iteration can be interpreted as gradient descent
on the squared-distance function with gradient
$\frac{1}{2}\nabla_\theta \distK(g(\theta))^2=(x-\projK(x))\nabla g(\theta)$,
with the assumption that $\projK(x) \approx \projK(x + \sigma \epsilon)$, along
with our \Cref{sec:denoiseVsProj} denoising approximation
$\projK(x + \sigma \epsilon) \approx x + \sigma\epsilon -
\sigma\epsilon_\theta(x + \sigma \epsilon, \sigma)$.

\paragraph{Flow matching}
Flow matching \cite{lipman2022flow} offers a different interpretation and
generalization of diffusion models and deterministic sampling. Under this
interpretation, the learned $\epsilon_\theta$ represents a time-varying vector
field, defining probability paths that transport the initial Gaussian
distribution to the data distribution. For $\epsilon_\theta$ learned with the
denoising objective, we can interpret this vector field as the gradient of the
smoothed squared-distance function $\distop^2_\Kset(x, \sigma)$ (where $\sigma$
changes as a function of $t$), thus moving along a probability path in this
vector field minimizes the distance to the manifold.

\paragraph{Learning the distance function}
Reinterpreting denoising as projection, or equivalently
gradient descent on the distance function, has a few immediate implications.
First, it suggests generalizations that draw upon the
literature for computing distance functions and projection operators.
Such techniques include Fast
Marching Methods \citep{sethian1996fast},  kd-trees,
and neural-network approaches, e.g.,~\cite{park2019deepsdf, rick2017one}.
Using concentration inequalities, we can
also interpret training a denoiser as learning
a solution to the \emph{Eikonal PDE},
given by $\|\nabla d(x)\| = 1$.  Other techniques for solving this PDE with
deep neural nets include~\cite{smith2020eikonet,lichtenstein2019deep,bin2021pinneik}.

\section{Conclusion and Future Work} \label{sec:futurework}

We have presented a simple framework for analyzing and generalizing
diffusion models that has led to a new sampling approach and new interpretations
of pre-existing techniques. Moreover, the key objects in our analysis ---the
distance function and the projection operator---are canonical objects in
constrained optimization. We believe our work can lead to new
generative models that incorporate sophisticated objectives and constraints
for a variety of applications. We also believe this work can be leveraged to
incorporate existing denoisers into optimization algorithms in a plug-in-play
fashion, much like the work in \cite{chan2016plug, le2023preconditioned, rick2017one}.

Combining the multi-level noise paradigm of diffusion with distance function
learning~\citep{park2019deepsdf} is an interesting direction, as are
diffusion-models that carry out projection using analytic formulae or simple
optimization routines.

{\small
\bibliographystyle{acm}
\bibliography{../tex_common/bib}

\begin{thebibliography}{10}

\bibitem{bao2022analytic}
{\sc Bao, F., Li, C., Zhu, J., and Zhang, B.}
\newblock Analytic-dpm: an analytic estimate of the optimal reverse variance in
  diffusion probabilistic models.
\newblock {\em arXiv preprint arXiv:2201.06503\/} (2022).

\bibitem{bengio2013representation}
{\sc Bengio, Y., Courville, A., and Vincent, P.}
\newblock Representation learning: A review and new perspectives.
\newblock {\em IEEE transactions on pattern analysis and machine intelligence
  35}, 8 (2013), 1798--1828.

\bibitem{bin2021pinneik}
{\sc bin Waheed, U., Haghighat, E., Alkhalifah, T., Song, C., and Hao, Q.}
\newblock Pinneik: Eikonal solution using physics-informed neural networks.
\newblock {\em Computers \& Geosciences 155\/} (2021), 104833.

\bibitem{chan2016plug}
{\sc Chan, S.~H., Wang, X., and Elgendy, O.~A.}
\newblock Plug-and-play admm for image restoration: Fixed-point convergence and
  applications.
\newblock {\em IEEE Transactions on Computational Imaging 3}, 1 (2016), 84--98.

\bibitem{chi2023diffusion}
{\sc Chi, C., Feng, S., Du, Y., Xu, Z., Cousineau, E., Burchfiel, B., and Song,
  S.}
\newblock Diffusion policy: Visuomotor policy learning via action diffusion.
\newblock {\em arXiv preprint arXiv:2303.04137\/} (2023).

\bibitem{chung2022improving}
{\sc Chung, H., Sim, B., Ryu, D., and Ye, J.~C.}
\newblock Improving diffusion models for inverse problems using manifold
  constraints.
\newblock {\em arXiv preprint arXiv:2206.00941\/} (2022).

\bibitem{de2022convergence}
{\sc De~Bortoli, V.}
\newblock Convergence of denoising diffusion models under the manifold
  hypothesis.
\newblock {\em arXiv preprint arXiv:2208.05314\/} (2022).

\bibitem{delfour2011shapes}
{\sc Delfour, M.~C., and Zol{\'e}sio, J.-P.}
\newblock {\em Shapes and geometries: metrics, analysis, differential calculus,
  and optimization}.
\newblock SIAM, 2011.

\bibitem{dhariwal2021diffusion}
{\sc Dhariwal, P., and Nichol, A.}
\newblock Diffusion models beat {{GANs}} on image synthesis.
\newblock {\em Advances in Neural Information Processing Systems 34\/} (2021),
  8780--8794.

\bibitem{federer1959curvature}
{\sc Federer, H.}
\newblock Curvature measures.
\newblock {\em Transactions of the American Mathematical Society 93}, 3 (1959),
  418--491.

\bibitem{fefferman2016testing}
{\sc Fefferman, C., Mitter, S., and Narayanan, H.}
\newblock Testing the manifold hypothesis.
\newblock {\em Journal of the American Mathematical Society 29}, 4 (2016),
  983--1049.

\bibitem{heusel2017gans}
{\sc Heusel, M., Ramsauer, H., Unterthiner, T., Nessler, B., and Hochreiter,
  S.}
\newblock Gans trained by a two time-scale update rule converge to a local nash
  equilibrium.
\newblock {\em Advances in neural information processing systems 30\/} (2017).

\bibitem{ho2020denoising}
{\sc Ho, J., Jain, A., and Abbeel, P.}
\newblock Denoising diffusion probabilistic models.
\newblock {\em Advances in Neural Information Processing Systems 33\/} (2020),
  6840--6851.

\bibitem{kadkhodaie2020solving}
{\sc Kadkhodaie, Z., and Simoncelli, E.~P.}
\newblock Solving linear inverse problems using the prior implicit in a
  denoiser.
\newblock {\em arXiv preprint arXiv:2007.13640\/} (2020).

\bibitem{karras2022elucidating}
{\sc Karras, T., Aittala, M., Aila, T., and Laine, S.}
\newblock Elucidating the design space of diffusion-based generative models.
\newblock {\em arXiv preprint arXiv:2206.00364\/} (2022).

\bibitem{kawar2022denoising}
{\sc Kawar, B., Elad, M., Ermon, S., and Song, J.}
\newblock Denoising diffusion restoration models.
\newblock {\em arXiv preprint arXiv:2201.11793\/} (2022).

\bibitem{kong2020diffwave}
{\sc Kong, Z., Ping, W., Huang, J., Zhao, K., and Catanzaro, B.}
\newblock Diffwave: A versatile diffusion model for audio synthesis.
\newblock {\em arXiv preprint arXiv:2009.09761\/} (2020).

\bibitem{krizhevsky2009learning}
{\sc Krizhevsky, A., Hinton, G., et~al.}
\newblock Learning multiple layers of features from tiny images.

\bibitem{le2023preconditioned}
{\sc Le~Pendu, M., and Guillemot, C.}
\newblock Preconditioned plug-and-play admm with locally adjustable denoiser
  for image restoration.
\newblock {\em SIAM Journal on Imaging Sciences 16}, 1 (2023), 393--422.

\bibitem{lichtenstein2019deep}
{\sc Lichtenstein, M., Pai, G., and Kimmel, R.}
\newblock Deep eikonal solvers.
\newblock In {\em Scale Space and Variational Methods in Computer Vision: 7th
  International Conference, SSVM 2019, Hofgeismar, Germany, June 30--July 4,
  2019, Proceedings 7\/} (2019), Springer, pp.~38--50.

\bibitem{lin2014microsoft}
{\sc Lin, T.-Y., Maire, M., Belongie, S., Hays, J., Perona, P., Ramanan, D.,
  Doll{\'a}r, P., and Zitnick, C.~L.}
\newblock Microsoft coco: Common objects in context.
\newblock In {\em Computer Vision--ECCV 2014: 13th European Conference, Zurich,
  Switzerland, September 6-12, 2014, Proceedings, Part V 13\/} (2014),
  Springer, pp.~740--755.

\bibitem{lipman2022flow}
{\sc Lipman, Y., Chen, R.~T., Ben-Hamu, H., Nickel, M., and Le, M.}
\newblock Flow matching for generative modeling.
\newblock {\em arXiv preprint arXiv:2210.02747\/} (2022).

\bibitem{liu2022pseudo}
{\sc Liu, L., Ren, Y., Lin, Z., and Zhao, Z.}
\newblock Pseudo numerical methods for diffusion models on manifolds.
\newblock {\em arXiv preprint arXiv:2202.09778\/} (2022).

\bibitem{liu2023zero}
{\sc Liu, R., Wu, R., Van~Hoorick, B., Tokmakov, P., Zakharov, S., and
  Vondrick, C.}
\newblock Zero-1-to-3: Zero-shot one image to 3d object.
\newblock {\em arXiv preprint arXiv:2303.11328\/} (2023).

\bibitem{liu2015faceattributes}
{\sc Liu, Z., Luo, P., Wang, X., and Tang, X.}
\newblock Deep learning face attributes in the wild.
\newblock In {\em Proceedings of International Conference on Computer Vision
  (ICCV)\/} (December 2015).

\bibitem{lu2022dpm}
{\sc Lu, C., Zhou, Y., Bao, F., Chen, J., Li, C., and Zhu, J.}
\newblock Dpm-solver: A fast ode solver for diffusion probabilistic model
  sampling in around 10 steps.
\newblock {\em arXiv preprint arXiv:2206.00927\/} (2022).

\bibitem{lu2022dpmpp}
{\sc Lu, C., Zhou, Y., Bao, F., Chen, J., Li, C., and Zhu, J.}
\newblock Dpm-solver++: Fast solver for guided sampling of diffusion
  probabilistic models.
\newblock {\em arXiv preprint arXiv:2211.01095\/} (2022).

\bibitem{meng2021sdedit}
{\sc Meng, C., He, Y., Song, Y., Song, J., Wu, J., Zhu, J.-Y., and Ermon, S.}
\newblock Sdedit: Guided image synthesis and editing with stochastic
  differential equations.
\newblock In {\em International Conference on Learning Representations\/}
  (2021).

\bibitem{nichol2021improved}
{\sc Nichol, A.~Q., and Dhariwal, P.}
\newblock Improved denoising diffusion probabilistic models.
\newblock In {\em International Conference on Machine Learning\/} (2021), PMLR,
  pp.~8162--8171.

\bibitem{park2019deepsdf}
{\sc Park, J.~J., Florence, P., Straub, J., Newcombe, R., and Lovegrove, S.}
\newblock Deepsdf: Learning continuous signed distance functions for shape
  representation.
\newblock In {\em Proceedings of the IEEE/CVF conference on computer vision and
  pattern recognition\/} (2019), pp.~165--174.

\bibitem{poole2022dreamfusion}
{\sc Poole, B., Jain, A., Barron, J.~T., and Mildenhall, B.}
\newblock Dreamfusion: Text-to-3d using 2d diffusion.
\newblock {\em arXiv preprint arXiv:2209.14988\/} (2022).

\bibitem{pope2021intrinsic}
{\sc Pope, P., Zhu, C., Abdelkader, A., Goldblum, M., and Goldstein, T.}
\newblock The intrinsic dimension of images and its impact on learning.
\newblock {\em arXiv preprint arXiv:2104.08894\/} (2021).

\bibitem{ramesh2022hierarchical}
{\sc Ramesh, A., Dhariwal, P., Nichol, A., Chu, C., and Chen, M.}
\newblock Hierarchical text-conditional image generation with clip latents.
\newblock {\em arXiv preprint arXiv:2204.06125\/} (2022).

\bibitem{rick2017one}
{\sc Rick~Chang, J., Li, C.-L., Poczos, B., Vijaya~Kumar, B., and
  Sankaranarayanan, A.~C.}
\newblock One network to solve them all--solving linear inverse problems using
  deep projection models.
\newblock In {\em Proceedings of the IEEE International Conference on Computer
  Vision\/} (2017), pp.~5888--5897.

\bibitem{rombach2022high}
{\sc Rombach, R., Blattmann, A., Lorenz, D., Esser, P., and Ommer, B.}
\newblock High-resolution image synthesis with latent diffusion models.
\newblock In {\em Proceedings of the IEEE/CVF Conference on Computer Vision and
  Pattern Recognition\/} (2022), pp.~10684--10695.

\bibitem{saharia2022photorealistic}
{\sc Saharia, C., Chan, W., Saxena, S., Li, L., Whang, J., Denton, E.,
  Ghasemipour, S. K.~S., Ayan, B.~K., Mahdavi, S.~S., Lopes, R.~G., et~al.}
\newblock Photorealistic text-to-image diffusion models with deep language
  understanding.
\newblock {\em arXiv preprint arXiv:2205.11487\/} (2022).

\bibitem{scieur2017integration}
{\sc Scieur, D., Roulet, V., Bach, F., and d'Aspremont, A.}
\newblock Integration methods and optimization algorithms.
\newblock {\em Advances in Neural Information Processing Systems 30\/} (2017).

\bibitem{sethian1996fast}
{\sc Sethian, J.~A.}
\newblock A fast marching level set method for monotonically advancing fronts.
\newblock {\em Proceedings of the National Academy of Sciences 93}, 4 (1996),
  1591--1595.

\bibitem{smith2020eikonet}
{\sc Smith, J.~D., Azizzadenesheli, K., and Ross, Z.~E.}
\newblock Eikonet: Solving the eikonal equation with deep neural networks.
\newblock {\em IEEE Transactions on Geoscience and Remote Sensing 59}, 12
  (2020), 10685--10696.

\bibitem{sohl2015deep}
{\sc Sohl-Dickstein, J., Weiss, E., Maheswaranathan, N., and Ganguli, S.}
\newblock Deep unsupervised learning using nonequilibrium thermodynamics.
\newblock In {\em International Conference on Machine Learning\/} (2015), PMLR,
  pp.~2256--2265.

\bibitem{song2020denoising}
{\sc Song, J., Meng, C., and Ermon, S.}
\newblock Denoising diffusion implicit models.
\newblock {\em arXiv preprint arXiv:2010.02502\/} (2020).

\bibitem{song2023consistency}
{\sc Song, Y., Dhariwal, P., Chen, M., and Sutskever, I.}
\newblock Consistency models.
\newblock {\em arXiv preprint arXiv:2303.01469\/} (2023).

\bibitem{song2019generative}
{\sc Song, Y., and Ermon, S.}
\newblock Generative modeling by estimating gradients of the data distribution.
\newblock {\em Advances in neural information processing systems 32\/} (2019).

\bibitem{song2020score}
{\sc Song, Y., Sohl-Dickstein, J., Kingma, D.~P., Kumar, A., Ermon, S., and
  Poole, B.}
\newblock Score-based generative modeling through stochastic differential
  equations.
\newblock {\em arXiv preprint arXiv:2011.13456\/} (2020).

\bibitem{tevet2022human}
{\sc Tevet, G., Raab, S., Gordon, B., Shafir, Y., Cohen-Or, D., and Bermano,
  A.~H.}
\newblock Human motion diffusion model.
\newblock {\em arXiv preprint arXiv:2209.14916\/} (2022).

\bibitem{vershynin2018high}
{\sc Vershynin, R.}
\newblock {\em High-dimensional probability: An introduction with applications
  in data science}, vol.~47.
\newblock Cambridge university press, 2018.

\bibitem{vincent2011connection}
{\sc Vincent, P.}
\newblock A connection between score matching and denoising autoencoders.
\newblock {\em Neural computation 23}, 7 (2011), 1661--1674.

\bibitem{welling2011bayesian}
{\sc Welling, M., and Teh, Y.~W.}
\newblock Bayesian learning via stochastic gradient langevin dynamics.
\newblock In {\em Proceedings of the 28th international conference on machine
  learning (ICML-11)\/} (2011), Citeseer, pp.~681--688.

\bibitem{zhang2022fast}
{\sc Zhang, Q., and Chen, Y.}
\newblock Fast sampling of diffusion models with exponential integrator.
\newblock {\em arXiv preprint arXiv:2204.13902\/} (2022).

\bibitem{zhao2023unipc}
{\sc Zhao, W., Bai, L., Rao, Y., Zhou, J., and Lu, J.}
\newblock Unipc: A unified predictor-corrector framework for fast sampling of
  diffusion models.
\newblock {\em arXiv preprint arXiv:2302.04867\/} (2023).

\end{thebibliography}
}

\confonly{\newpage}
\appendix
\confonly{\onecolumn}

\section{Equivalent Definitions of DDIM and DDPM} \label{sec:scaled}

The DDPM and DDIM samplers are usually described
in a different coordinate system $z_t$
defined by parameters $\bar \alpha_t$ and the following relations
, where the noise model is defined by a schedule $\abt$:
\begin{align}\label{eq:noise-model-scaled}
  y \approx \sqrt{\abt} z + \sqrt{1-\abt} \epsilon,
\end{align}
with the estimate $\zest := \hat z_0(z_t, t)$ given by
\begin{align}\label{eq:z0def}
   \hat z_0(y, t) := \frac{1}{\sqrt{\abt}}(y - \sqrt{1-\abt} \epsilon'_\theta(y, t)).
\end{align}
We have the following conversion identities between the $x$ and $z$ coordinates:
\begin{align} \label{eq:conversion}
  x_0 = z_0 ,\quad x_t = z_t/\sqrt{\abt}
  ,\quad \sigma_t = \sqrt{\frac{1-\abt}{\abt}}
  ,\quad \epsilon_\theta(y, \sigma_t) = \epsilon'_\theta(y/\sqrt{\abt}, t).
\end{align}
While this change-of-coordinates is used in~\citet[Section
4.3]{song2020denoising} and in~\cite{karras2022elucidating}--and hence not
new-- we rigorously prove equivalence of the DDIM and DDPM samplers given in
\Cref{sec:background} with their original definitions.

\paragraph{DDPM}
Given initial $z_N$, the DDPM sampler constructs the sequence
\begin{align}\label{eq:DDPM}
 z_{t-1} =
\frac{\sqrt{\ab{t-1}}( 1 - \alpha_t)}{1-\abt}  \hat z^t_{0}  +
\frac{\sqrt \alpha_t(1-\ab{t-1})}{1-\abt} z_t
+ \sqrt{\frac{1-\ab{t-1}}{1-\abt}(1-\alpha_t)} w_t,
\end{align}
where $\alpha_t := \abt/\ab{t-1}$ and $w_t \sim \mathcal{N}(0, I)$.  This is
interpreted as sampling $z_{t-1}$ from a Gaussian distribution conditioned on
$z_t$ and $\zest$~\citep{ho2020denoising}.
\begin{prop}[DDPM change of coordinates] \label{prop:ddpm-conversion}
  The sampling update \eqref{eq:DDPM-scaled} is equivalent to the update
  \eqref{eq:DDPM} under the change of coordinates \eqref{eq:conversion}.
  \begin{proof}
    First we write \eqref{eq:DDPM-scaled} in terms of $z_t$,
    $\epsilon'_\theta(z_t,t)$ and $w_t$ using \eqref{eq:z0def}:
    \begin{align*}
      z_{t-1} &= \frac{\sqrt{\ab{t-1}}( 1 - \alpha_t)}{\sqrt{\abt}(1-\abt)}
      \paren{z_t - \sqrt{1-\abt} \epsilon'_\theta(z_t, t)}
      + \frac{\sqrt \alpha_t(1-\ab{t-1})}{1-\abt} z_t
                + \sqrt{\frac{1-\ab{t-1}}{1-\abt}(1-\alpha_t)} w_t \\
              &= \frac{z_t}{\sqrt{\alpha_t}}
                + \frac{\alpha_t-1}{\sqrt{\alpha_t (1-\abt))}} \epsilon'_\theta(z_t, t)
                + \sqrt{\frac{1-\ab{t-1}}{1-\abt}(1-\alpha_t)} w_t.
    \end{align*}
    Next we divide both sides by $\sqrt{\ab{t-1}}$ and change $z_t$ and
    $z_{t-1}$ to $x_t$ and $x_{t-1}$:
    \begin{align*}
      x_{t-1} = x_t + \frac{\alpha_t-1}{\sqrt{\abt (1-\abt)}} \epsilon_\theta(x_t, \sigma_t)
      + \sqrt{\frac{1-\ab{t-1}}{\ab{t-1}} \frac{1-\alpha_t}{1-\abt}} w_t.
    \end{align*}
    Now if we define
    \begin{align*}
      \eta &:= \sqrt{\frac{1-\ab{t-1}}{\ab{t-1}} \frac{1-\alpha_t}{1-\abt}}
             = \sigma_{t-1} \sqrt{\frac{1-\abt/\ab{t-1}}{1-\abt}} , \\
      \sigma_{t'} &:= \sqrt{\sigma_{t-1}^2 - \eta^2}
                    = \sigma_{t-1}\sqrt{\frac{\abt(1/\ab{t-1}-1)}{1-\abt}}
                    = \frac{\sigma_{t-1}^2}{\sigma_t},
    \end{align*}
    it remains to check that
    \begin{align*}
      \sigma_{t'} - \sigma_t
      = \frac{\sigma_{t-1}^2 - \sigma_t^2}{\sigma_t}
      = \frac{1/\ab{t-1} - 1/\abt}{\sqrt{1-\abt}/\sqrt{\abt}}
      = \frac{\alpha_t - 1}{\sqrt{\abt(1-\abt)}}.
    \end{align*}
  \end{proof}
\end{prop}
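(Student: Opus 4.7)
The plan is to start from the $z$-coordinate update \eqref{eq:DDPM}, eliminate $\hat z_0^t$ in favor of $z_t$ and $\epsilon'_\theta(z_t,t)$ using \eqref{eq:z0def}, and then push everything through the change-of-variables \eqref{eq:conversion}, finally checking the result agrees term-by-term with \eqref{eq:DDPM-scaled}.

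First I would substitute $\hat z_0^t = \frac{1}{\sqrt{\abt}}\bigl(z_t - \sqrt{1-\abt}\,\epsilon'_\theta(z_t,t)\bigr)$ into \eqref{eq:DDPM}. The two resulting $z_t$ coefficients combine: using $\alpha_t = \abt/\ab{t-1}$, the sum $\frac{\sqrt{\ab{t-1}}(1-\alpha_t)}{\sqrt{\abt}(1-\abt)} + \frac{\sqrt{\alpha_t}(1-\ab{t-1})}{1-\abt}$ should collapse to $1/\sqrt{\alpha_t}$, giving an update of the form $z_{t-1} = z_t/\sqrt{\alpha_t} + c_t \epsilon'_\theta(z_t,t) + \eta w_t$, where $c_t = (\alpha_t - 1)/\sqrt{\alpha_t(1-\abt)}$. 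Next I would divide through by $\sqrt{\ab{t-1}}$ so that $z_{t-1}/\sqrt{\ab{t-1}} = x_{t-1}$ and $z_t/(\sqrt{\alpha_t}\sqrt{\ab{t-1}}) = z_t/\sqrt{\abt} = x_t$, and rewrite $\epsilon'_\theta(z_t,t) = \epsilon_\theta(x_t,\sigma_t)$, obtaining an update of the form $x_{t-1} = x_t + c'_t\epsilon_\theta(x_t,\sigma_t) + \eta' w_t$.

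The remaining step is to verify the scalar identities: $c'_t = \sigma_{t'}-\sigma_t$ and $\eta' = \sqrt{\sigma_{t-1}^2-\sigma_{t'}^2}$ with $\sigma_{t'}=\sigma_{t-1}^2/\sigma_t$. Both reduce to substituting $\sigma_t^2 = (1-\abt)/\abt$ and $\sigma_{t-1}^2 = (1-\ab{t-1})/\ab{t-1}$. For $\eta'$, the square root inside the noise term should factor into exactly $\sigma_{t-1}\sqrt{1 - \abt/\ab{t-1}}\big/\sqrt{1-\abt}$ which, after simplification, matches $\sqrt{\sigma_{t-1}^2 - \sigma_{t-1}^4/\sigma_t^2}$. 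For $c'_t$, I would multiply numerator and denominator of $(\sigma_{t-1}^2-\sigma_t^2)/\sigma_t$ by $\sqrt{\abt}/\sqrt{1-\abt}$ and use $1/\ab{t-1} - 1/\abt = (\alpha_t-1)/\abt$ after clearing fractions.

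The main obstacle is purely bookkeeping in the square-root-of-$\bar\alpha$ algebra: it is easy to drop a $\sqrt{\alpha_t}$ or miscollect the $(1-\abt)$ versus $(1-\ab{t-1})$ terms. I expect the cleanest route is to introduce the shorthand $a = \ab{t-1}$, $b = \abt$ (so $\alpha_t = b/a$) at the outset, carry out the $z_t$-coefficient combination symbolically in these letters, and only translate back to $\sigma$-variables at the very end when verifying $c'_t$ and $\eta'$. Everything else is a direct substitution chain, so once the scalar identities check out the equivalence follows.
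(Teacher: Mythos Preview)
Your proposal is correct and follows essentially the same route as the paper: substitute $\hat z_0^t$ via \eqref{eq:z0def} into \eqref{eq:DDPM}, collapse the $z_t$ coefficient to $1/\sqrt{\alpha_t}$, divide by $\sqrt{\ab{t-1}}$ to pass to $x$-coordinates, then verify the two scalar identities for the drift and noise coefficients. The paper's proof is line-for-line the same computation, without the $a,b$ shorthand you suggest.
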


\paragraph{DDIM}
Given initial $z_N$, the DDIM sampler constructs the sequence
\begin{align} \label{eq:DDIM-eps}
  z_{t-1} &= \sqrt{\ab{t-1}} \zest + \sqrt{1-\ab{t-1}} \epsilon'_\theta(z_t, t),
\end{align}
i.e., it estimates $\zest$ from $z_t$ and then constructs $z_{t-1}$ by simply
updating $\ab{t}$ to $\ab{t-1}$.  This sequence can be equivalently expressed in
terms of $\zest$ as
\begin{align}\label{eq:DDIM}
  z_{t-1} &= \sqrt{\ab{t-1}} \zest + \sqrt{\frac{1-\ab{t-1}}{1-\abt}} (z_t - \sqrt{\abt}\zest).
\end{align}

\begin{prop}[DDIM change of coordinates] \label{prop:ddim-conversion}
  The sampling update \eqref{eq:DDIM-scaled} is equivalent to the update
  \eqref{eq:DDIM} under the change of coordinates \eqref{eq:conversion}.
  \begin{proof}
    First we write \eqref{eq:DDIM-eps} in terms of $z_t$ and
    $\epsilon'_\theta(z_t,t)$ using \eqref{eq:z0def}:
    \begin{align*}
      z_{t-1} = \sqrt{\frac{\ab{t-1}}{\abt}} z_t
      + \paren{\sqrt{1-\ab{t-1}} - \sqrt{\frac{\ab{t-1}}{\abt}} \sqrt{1-\abt}}\epsilon'_\theta(z_t, t).
    \end{align*}
    Next we divide both sides by $\sqrt{\ab{t-1}}$ and change $z_t$ and
    $z_{t-1}$ to $x_t$ and $x_{t-1}$:
    \begin{align*}
      x_{t-1} &= x_t + \paren{\sqrt{\frac{1-\ab{t-1}}{\ab{t-1}}}
                - \sqrt{\frac{\ab{t-1}}{1-\abt}}} \epsilon_\theta(x_t, \sigma_t)\\
      &=  x_t + (\sigma_{t-1} - \sigma_t) \epsilon_\theta(x_t, \sigma_t).
    \end{align*}
  \end{proof}
\end{prop}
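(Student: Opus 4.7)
The plan is to start from the DDIM update in the $z$-coordinates, eliminate $\zest$ in favor of $z_t$ and $\epsilon'_\theta(z_t,t)$, and then change variables to $(x_t, \sigma_t, \epsilon_\theta)$ using the identities in~\eqref{eq:conversion}. Concretely, I would first substitute the definition~\eqref{eq:z0def} of $\zest$ into~\eqref{eq:DDIM-eps} (which is the form of~\eqref{eq:DDIM} after using~\eqref{eq:z0def}) so that the update becomes
\begin{align*}
  z_{t-1} = \sqrt{\tfrac{\ab{t-1}}{\abt}}\, z_t + \Bigl(\sqrt{1-\ab{t-1}} - \sqrt{\tfrac{\ab{t-1}}{\abt}}\sqrt{1-\abt}\Bigr)\epsilon'_\theta(z_t, t).
\end{align*}
This is the form that matches the DDPM proof in \Cref{prop:ddpm-conversion}, so it can be obtained by exactly the same substitution.

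Next I would divide both sides by $\sqrt{\ab{t-1}}$. On the left this yields $x_{t-1}$, and on the right $\sqrt{\ab{t-1}/\abt}\, z_t/\sqrt{\ab{t-1}} = z_t/\sqrt{\abt} = x_t$, using $x_t = z_t/\sqrt{\abt}$. The coefficient of $\epsilon'_\theta(z_t, t)$ becomes
\begin{align*}
  \sqrt{\tfrac{1-\ab{t-1}}{\ab{t-1}}} - \sqrt{\tfrac{1-\abt}{\abt}}.
\end{align*}
Finally, using $\epsilon_\theta(x_t, \sigma_t) = \epsilon'_\theta(z_t,t)$ from~\eqref{eq:conversion} and $\sigma_s = \sqrt{(1-\ab{s})/\ab{s}}$, this coefficient equals exactly $\sigma_{t-1} - \sigma_t$, giving~\eqref{eq:DDIM-scaled}.

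The key obstacle is purely bookkeeping: correctly tracking the $\sqrt{\alpha}$ factors when eliminating $\zest$ and when dividing through by $\sqrt{\ab{t-1}}$, so that the hybrid expression $\sqrt{\ab{t-1}/\abt}\sqrt{1-\abt}$ collapses cleanly into a difference $\sigma_{t-1}-\sigma_t$. Since the $\sigma$-definition is multiplicative in $\sqrt{\abt}$, this simplification is routine once one writes both square roots over a common factor of $\sqrt{\ab{t-1}\abt}$. No approximation or probabilistic argument is needed, and the equivalence follows by identity substitution alone.
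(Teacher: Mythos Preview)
Your proposal is correct and follows essentially the same approach as the paper: substitute~\eqref{eq:z0def} into~\eqref{eq:DDIM-eps}, divide through by $\sqrt{\ab{t-1}}$, and identify the resulting coefficient as $\sigma_{t-1}-\sigma_t$ via~\eqref{eq:conversion}. In fact, your intermediate coefficient $\sqrt{(1-\ab{t-1})/\ab{t-1}} - \sqrt{(1-\abt)/\abt}$ is written correctly, whereas the paper's displayed version contains a typographical slip in the second square root.
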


\section{Formal Comparison of Denoising and Projection}\label{sec:formalDenoising}

\subsection{Proof of \Cref{prop:OracleDenoisingUnderReachInformal}}
First, we state the formal version of
\Cref{prop:OracleDenoisingUnderReachInformal}
\begin{prop}[Oracle denoising] \label{prop:OracleDenoisingUnderReach}
  Fix $\sigma > 0$, $t > 0$ and let $\kappa(t) := \sqrt{(\sqrt{d} + t)^2 + (\sqrt{n-d} + t)^2}$.
   Given $x_0 \in \coneName$ and
  $\epsilon \sim \mathcal{N}(0, I)$, let $x_\sigma = x_0 + \sigma \epsilon$.
  Suppose that $\reach(\coneName) > \sigma \kappa(t)$ and
$\sqrt{n-d}-\sqrt{d} - 2t > 0$.
Then, for an absolute constant $\alpha > 0$,
we have,
with probability at least $(1-\exp(-\alpha t^2))^2$, that
\[
\sigma(\sqrt{n-d} - \sqrt{d} - 2t) \le \dist(x_\sigma) \le  \sigma(\sqrt{n-d} + \sqrt{d} + 2t)
\]
and
\[
 \|\projK(x_\sigma) - x_0\| \le \frac{C(t)  (\sqrt{d}+t)}{ \sqrt{n-d}-\sqrt{d} - 2t} \distK(x_\sigma)
 \]
  where $C(t) := \frac{\reach(\coneName)}{\reach(\coneName) - \sigma \kappa(t)}$.
\end{prop}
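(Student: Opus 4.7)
The plan is to combine Gaussian concentration on the noise with the reach-based regularity of the projection operator, using a well-chosen auxiliary point.

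First, I would decompose $\epsilon = \epsilon_T + \epsilon_N$ into its components in $\tan_\coneName(x_0)$ and $\tan_\coneName(x_0)^\perp$. Projecting a standard Gaussian onto orthogonal subspaces of dimensions $d$ and $n-d$ yields independent standard Gaussians in those dimensions, and the maps $\epsilon \mapsto \|\epsilon_T\|$ and $\epsilon \mapsto \|\epsilon_N\|$ are $1$-Lipschitz in $\epsilon$. Gaussian concentration then gives, for an absolute constant $\alpha>0$, that $\|\epsilon_T\| \le \sqrt{d}+t$ and $\sqrt{n-d}-t \le \|\epsilon_N\| \le \sqrt{n-d}+t$ each hold with probability at least $1-\exp(-\alpha t^2)$. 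A union bound places us in the intersection of these events with probability at least $(1-\exp(-\alpha t^2))^2$. Inside this event, $\|\epsilon\|^2 = \|\epsilon_T\|^2+\|\epsilon_N\|^2 \le \kappa(t)^2$, so $\sigma\|\epsilon\| \le \sigma\kappa(t) < \reach(\coneName)$.

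Next, I would introduce the auxiliary point $x_N := x_0 + \sigma\epsilon_N$. Since $\sigma\epsilon_N \in \tan_\coneName(x_0)^\perp$ and $\sigma\|\epsilon_N\| \le \sigma\kappa(t) < \reach(\coneName)$, \Cref{lem:tubularProj} yields $\projK(x_N) = x_0$ and hence $\distK(x_N) = \sigma\|\epsilon_N\|$. The distance function is $1$-Lipschitz and $\|x_\sigma - x_N\| = \sigma\|\epsilon_T\|$, so
\[
\sigma\|\epsilon_N\| - \sigma\|\epsilon_T\| \;\le\; \distK(x_\sigma) \;\le\; \sigma\|\epsilon_N\| + \sigma\|\epsilon_T\|.
\]
Plugging in the concentration bounds on $\|\epsilon_T\|$ and $\|\epsilon_N\|$ delivers the claimed bracketing $\sigma(\sqrt{n-d}-\sqrt{d}-2t) \le \distK(x_\sigma) \le \sigma(\sqrt{n-d}+\sqrt{d}+2t)$, with the lower bound being positive by the hypothesis $\sqrt{n-d}-\sqrt{d}-2t>0$.

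Finally, to bound $\|\projK(x_\sigma)-x_0\|$, I would invoke the classical reach-based Lipschitz estimate: on $\{x : \distK(x) \le r\}$ with $r < \reach(\coneName)$, the map $\projK$ is $\tfrac{\reach(\coneName)}{\reach(\coneName)-r}$-Lipschitz. Taking $r = \sigma\kappa(t)$ gives Lipschitz constant $C(t)$, and both $x_\sigma$ and $x_N$ lie inside this tube by the distance bounds established above. Since $\projK(x_N)=x_0$,
\[
\|\projK(x_\sigma) - x_0\| \;=\; \|\projK(x_\sigma) - \projK(x_N)\| \;\le\; C(t)\,\sigma\|\epsilon_T\| \;\le\; C(t)\,\sigma(\sqrt{d}+t),
\]
and dividing by the lower bound $\distK(x_\sigma) \ge \sigma(\sqrt{n-d}-\sqrt{d}-2t)$ yields the stated relative inequality.

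The main obstacle is the reach-Lipschitz bound on $\projK$ with the explicit constant $\reach/(\reach-r)$: uniqueness of the projection on the tube (which follows directly from the definition of reach) is not enough; the quantitative $C(t)$ requires a finer classical argument (e.g., Federer's normal-bundle estimates). Everything else amounts to a union bound, the triangle inequality for the $1$-Lipschitz function $\distK$, and the elementary observation that $\sqrt{n-d}+t \le \kappa(t)$, which is what makes the hypothesis $\reach(\coneName)>\sigma\kappa(t)$ strong enough to apply \Cref{lem:tubularProj} at $x_N$.
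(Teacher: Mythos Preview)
Your proposal is correct and follows essentially the same route as the paper: decompose the noise into tangent and normal components, apply Gaussian concentration to each, use \Cref{lem:tubularProj} at the auxiliary point $x_0+\sigma\epsilon_N$ together with the $1$-Lipschitz property of $\distK$ for the distance bracketing, and then invoke the reach-based Lipschitz bound on $\projK$ (the paper cites this as \Cref{prop:Lipschitz}, from Delfour--Zol\'esio) with $h=\sigma\kappa(t)$. One small terminological slip: the bound $(1-\exp(-\alpha t^2))^2$ comes from \emph{independence} of $\epsilon_T$ and $\epsilon_N$, not a union bound; the paper exploits this independence explicitly in its \Cref{prop:concenIneq}.
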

Our proof uses local Lipschitz continuity of the projection operator, stated
formally as follows.
\begin{prop}[Theorem 6.2(vi), Chapter 6 of~\cite{delfour2011shapes}]\label{prop:Lipschitz}
Suppose $0 < \reach(\coneName) < \infty$.
Consider $h > 0$ and $x,y \in \mathbb{R}^n$
satisfying $0 < h < \reach(\coneName)$
and $\distK(x) \le h$ and $\distK(y) \le h$.
Then the projection map satisfies $\|\projK(y) - \projK(x)\| \le \frac{\reach(\coneName)}{\reach(\coneName) - h}\|y-x\|$.
\end{prop}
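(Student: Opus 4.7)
The plan is to prove this bound via the standard \emph{proximal normal inequality} for sets of positive reach, which I will establish first and then symmetrize.

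First, I would prove the following key inequality: if $p = \projK(x)$ with $\|x-p\| < \reach(\coneName) =: \tau$, then for every $q \in \coneName$,
\begin{align*}
\langle x - p,\, q - p\rangle \le \frac{\|x-p\|}{2\tau}\, \|q-p\|^2.
\end{align*}
To see this, let $v := (x-p)/\|x-p\|$ and consider the ray $\{p + s v : 0 \le s < \tau\}$. For each such $s$, the point $p + sv$ has distance $s$ to $p \in \coneName$, so its distance to $\coneName$ is at most $s < \tau$. Since reach is $\tau$, the projection is unique in this regime, and an elementary continuity/monotonicity argument (the projection of $p + sv$ cannot change as $s$ moves continuously from $\|x-p\|$ to some $s' < \tau$ without violating uniqueness somewhere) gives $\projK(p + sv) = p$ for all $s \in [0, \tau)$. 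Taking $s \uparrow \tau$, the \emph{open} ball $B(p + \tau v, \tau)$ contains no point of $\coneName$. Thus for every $q \in \coneName$, $\|q - (p + \tau v)\|^2 \ge \tau^2$, which expands to $\|q-p\|^2 - 2\tau \langle q-p, v\rangle \ge 0$. Multiplying by $\|x-p\|/(2\tau)$ and using $v = (x-p)/\|x-p\|$ yields the claimed proximal inequality.

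Next, let $p := \projK(x)$ and $q := \projK(y)$; both exist uniquely since $\distK(x), \distK(y) \le h < \tau$. Applying the proximal inequality twice, once with $p$ as the projection of $x$ testing against $q \in \coneName$, and once with $q$ as the projection of $y$ testing against $p \in \coneName$, gives
\begin{align*}
\langle x - p,\, q - p\rangle &\le \frac{\|x-p\|}{2\tau}\|q-p\|^2 \le \frac{h}{2\tau}\|p-q\|^2, \\
\langle y - q,\, p - q\rangle &\le \frac{\|y-q\|}{2\tau}\|p-q\|^2 \le \frac{h}{2\tau}\|p-q\|^2.
\end{align*}
Adding the two and using the algebraic identity
\begin{align*}
\langle x - p,\, q-p\rangle + \langle y - q,\, p-q\rangle = \|p-q\|^2 + \langle x - y,\, q - p\rangle,
\end{align*}
I obtain $\|p-q\|^2 + \langle x-y, q-p\rangle \le \tfrac{h}{\tau}\|p-q\|^2$, i.e.
\begin{align*}
\Bigl(1 - \tfrac{h}{\tau}\Bigr) \|p-q\|^2 \le \langle y - x,\, q - p\rangle \le \|y-x\|\,\|p-q\|.
\end{align*}
Dividing by $(1-h/\tau)\|p-q\|$ (the case $p=q$ being trivial) yields $\|p-q\| \le \tfrac{\tau}{\tau - h}\|x-y\|$, which is the desired bound.

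The main obstacle is the first step: rigorously passing from the definition of reach (uniqueness of projection inside the tubular neighborhood) to the existence of a tangent supporting ball of radius $\tau$ at $p$. The continuity/monotonicity argument that $\projK(p + sv) = p$ persists for all $s \in [0, \tau)$ is the technical core; it can be handled by noting that $s \mapsto \projK(p+sv)$ is continuous wherever the projection is unique, starts at $p$, and cannot jump to another point of $\coneName$ without first passing through a point with non-unique projection, which the reach assumption forbids for $s < \tau$. Everything else is the algebraic symmetrization, which is routine.
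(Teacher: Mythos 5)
The paper does not prove this proposition; it is quoted verbatim as Theorem~6.2(vi), Chapter~6 of Delfour--Zol\'esio and used as a black box. So there is no in-paper proof to compare against, and your proposal is a from-scratch derivation of a cited textbook result.

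The overall structure you choose --- establish the proximal normal inequality (no point of $\coneName$ lies inside the open ball $B(p+\tau v,\tau)$ tangent at $p$), then apply it at both $\projK(x)$ and $\projK(y)$ and symmetrize --- is the standard route to this Lipschitz bound, and your symmetrization algebra checks out: $\langle x-p,q-p\rangle + \langle y-q,p-q\rangle = \|p-q\|^2 + \langle x-y,q-p\rangle$ is correct, and the rest of the manipulation is routine. Where the argument has a real gap is exactly where you flag it, but your proposed fix does not work. You claim $\projK(p+sv)=p$ for all $s\in[0,\tau)$ because $s\mapsto\projK(p+sv)$ is continuous on the uniqueness region and ``cannot jump to another point of $\coneName$ without first passing through a point with non-unique projection.'' That reasoning only rules out \emph{discontinuous} jumps. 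A continuous single-valued map can perfectly well drift away from $p$ without ever being multi-valued --- ruling out jumps is not the same as proving the function is constant. The drift is indeed impossible for positive-reach sets, but proving it is the technical heart of Federer's Theorem~4.8(6)/(12) (equivalently the supporting lemmas in Delfour--Zol\'esio, Chapter~6), and the known proofs use a quantitative self-improvement/iteration along the normal ray, or $C^{1}$ regularity of $\distK$ in the tube plus an ODE argument, not a bare topological connectedness argument. As written, your step ``an elementary continuity/monotonicity argument gives $\projK(p+sv)=p$ for all $s\in[0,\tau)$'' is a claim, not a proof, and the sketch you offer for it is logically incomplete. Since the paper itself does exactly what you may want to do here --- cite Federer's Theorem~4.8(12) for the normal-ray fact (the paper's Lemma labeled \texttt{lem:tubularProj}) and cite Delfour--Zol\'esio for the Lipschitz bound --- the cleanest fix is to invoke Federer~4.8(6)/(12) for the ray property and keep your symmetrization as the derivation of the constant $\tau/(\tau-h)$.
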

We also use the following concentration inequalities.
\begin{prop}\label{prop:concenIneq}
Let $w \sim \mathcal{N}(0, \sigma^2 I_n)$.
Let $S$ be a fixed subspace of dimension $d$ and denote by $w_S$ and $w_{S^{\perp}}$ the projections
onto $S$ and $S^{\perp}$ respectively.
Then for an absolute constant $\alpha$, the following statements hold
\begin{itemize}
\item With probability at least $1-\exp(-\alpha t^2)$,
\[
 \sigma(\sqrt{n} - t)  \le \norm{w} \le \sigma(\sqrt{n} + t)
 \]
 \item With probability at least $(1-\exp(-\alpha t^2))^2$,
 \[
 \sigma(\sqrt{d} - t)  \le \norm{w_S} \le \sigma(\sqrt{d} + t),\qquad
 \sigma(\sqrt{n-d} - t)  \le \norm{w_{S^{\perp}}} \le \sigma(\sqrt{n-d} + t),
 \]
\end{itemize}
 \begin{proof}
   The first statement is proved in ~\citep[page 44, Equation
   3.3]{vershynin2018high}.  For the second, let $B \in \mathbb{R}^{n \times d}$
   denote an orthonormal basis for $S$ and define $y = B^T w$ Then
   $\|y\|= \|w_S\|$.  Further, $y\sim \mathcal{N}(0, \sigma^2 I_{d\times d})$
   given that $\cov(y) = \sigma^2 B^T B = \sigma^2 I_{d \times d}$. Hence, the
   bounds on $\|w_S\|$ hold with probability at least
   $p : = 1-\exp(-\alpha t^2)$ given the first statement.  By similar argument,
   the bounds on $\|w_{S^{\perp}}\|$ also hold with probability $p$.  Since
   $w_{S}$ and $w_{S^{\perp}}$ are independent, we deduce that both sets of
   bounds simultaneously hold with probability at least $p^2$.
 \end{proof}
\end{prop}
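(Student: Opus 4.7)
The plan is to establish each bullet separately: for the first, invoke a standard Gaussian norm concentration bound off-the-shelf; for the second, reduce each subspace norm to a lower-dimensional Gaussian norm by an orthonormal change of variables and then combine via independence.

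For the first bullet, I would simply cite the classical concentration inequality for the norm of an isotropic Gaussian: if $g \sim \mathcal{N}(0, I_n)$, then $\bigl|\|g\| - \sqrt{n}\bigr| \le t$ with probability at least $1 - \exp(-\alpha t^2)$ for an absolute constant $\alpha$. Scaling by $\sigma$ gives the two-sided bound on $\|w\|$. This is Equation (3.3) in Vershynin's book, so no calculation is needed beyond invoking it.

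For the second bullet, the key reduction is that projections of an isotropic Gaussian onto orthogonal subspaces are themselves independent isotropic Gaussians of the appropriate dimension. Concretely, let $B \in \mathbb{R}^{n \times d}$ be an orthonormal basis for $S$ and $B^\perp \in \mathbb{R}^{n \times (n-d)}$ an orthonormal basis for $S^\perp$. Define $y := B^\top w$ and $y^\perp := (B^\perp)^\top w$. Since orthonormal transformations preserve norms, $\|y\| = \|w_S\|$ and $\|y^\perp\| = \|w_{S^\perp}\|$. Computing covariances gives $\cov(y) = \sigma^2 B^\top B = \sigma^2 I_d$ and similarly $\cov(y^\perp) = \sigma^2 I_{n-d}$, so $y$ and $y^\perp$ are isotropic Gaussians in their respective dimensions. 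Applying the first bullet's bound in dimensions $d$ and $n-d$ separately yields the individual bounds on $\|w_S\|$ and $\|w_{S^\perp}\|$, each with probability at least $p := 1 - \exp(-\alpha t^2)$.

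To finish, I need the bounds to hold \emph{simultaneously}, which is where the $(1-\exp(-\alpha t^2))^2$ comes from. The fact I would appeal to is that $y$ and $y^\perp$ are jointly Gaussian with cross-covariance $\cov(y, y^\perp) = \sigma^2 B^\top B^\perp = 0$, so they are independent; hence the two events are independent and their intersection has probability at least $p^2$. The only minor subtlety is confirming that the absolute constant $\alpha$ from Vershynin is dimension-free so that the same $\alpha$ applies to both the $d$- and $(n-d)$-dimensional invocations; this is standard and I would just remark on it rather than re-derive it. There is no real obstacle in this proof — the entire content is the orthogonal-decomposition trick plus a citation — so the proposal is essentially a write-up of these two observations.
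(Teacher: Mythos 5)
Your proposal is correct and follows essentially the same route as the paper's proof: cite Vershynin for the first bullet, reduce each subspace projection to a lower-dimensional isotropic Gaussian via an orthonormal change of basis, then combine by independence of the two projections. The only additions you make — explicitly computing the cross-covariance $B^\top B^\perp = 0$ to justify independence, and noting that the constant $\alpha$ is dimension-free — are minor clarifications of steps the paper leaves implicit.
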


To prove \Cref{prop:OracleDenoisingUnderReach},
we decompose random noise $\sigma \epsilon$ as
\begin{align}\label{eq:noiseDecomp}
\sigma\epsilon = w_N + w_T
\end{align}
for $w_T \in \tan_{\coneName}(x_0)$ and $w_N \in \tan_{\coneName}(x_0)^{\perp}$ and use
\Cref{lem:tubularProj}. The proof follows.

  \begin{proof}[Proof of \Cref{prop:OracleDenoisingUnderReach}]
  Let $p:=1-\exp(-\alpha t^2)$.
  \Cref{prop:concenIneq} asserts that,
with probability at least $p^2$,
 \begin{align}\label{eq:noiseNormBounds}
 \sigma(\sqrt{d} - t)  \le \norm{w_T} \le \sigma(\sqrt{d} + t),\qquad
 \sigma(\sqrt{n-d} - t)  \le \norm{w_N} \le \sigma(\sqrt{n-d} + t),
 \end{align}
 These inequalities imply the claimed bounds on $\distK(x_{\sigma})$, given that
 \[
\|w_N\|  - \|w_T\|\le \distK(x_{\sigma}) \le \|w_N\|  + \|w_T\|
 \]
 by \Cref{lem:distBounds} and the fact $\dist(x_0+w_N) = \|w_N\|$
 under the reach assumption and \Cref{lem:tubularProj}.

    Using  $\proj(x_0+w_N) = x_0$, we observe that
    \begin{align*}
      \|\proj(x_\sigma)-x_0\| &= \|\proj(x_0 + w_N + w_T) - x_0\| \\
                     &= \|\proj(x_0+w_N) - x_0 + \proj(x_0+w_N + w_T) - \proj(x_0+w_N)\| \\
                     &= \|\proj(x_0+w_N) - \proj(x_0+w_N+w_T)\| \\
                     &\le C \|w_T\| \\
                     &\le C \sigma(\sqrt{d} + t)
    \end{align*}
    where the second-to-last inequality comes from \Cref{prop:Lipschitz}
    using the fact that $\reach(\coneName) > w$ and the inequalities
    $\distK(x_0+w_N) = \|w_N\| \le \|w\|$ and
    $\distK(x_0+w_N+w_T) \le \|w\|$.
  The proof is completed by dividing by our lower bound of $\distK(x_{\sigma})$.

\end{proof}

\subsection{Proof of \Cref{prop:smoothed-ideal-denoiser}} \label{sec:ideal-denoiser-derivation}
First we derive an explicit expression for the ideal denoiser for a uniform
distribution over a finite set.
\begin{lem} \label{lem:ideal-denoiser}
  When $D$ is a discrete uniform distribution over a set $\Kset$, the ideal
  denoiser $\eps^*$ is given by
  \begin{align*}
   \eps^*(x_\sigma, \sigma) = \frac{\sum_{x_0 \in \Kset} (x_\sigma-x_0) \exp(-\norm{x_\sigma-x_0}^2/2\sigma^2)}
    {\sigma \sum_{x_0 \in \Kset} \exp(-\norm{x_\sigma-x_0}^2/2\sigma^2)}.
  \end{align*}
\end{lem}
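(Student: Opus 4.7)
The plan is to derive the formula by recognizing that the ideal denoiser is the pointwise minimizer of the squared loss, which is the conditional expectation of the noise direction given the noisy observation, and then to compute this posterior explicitly using Bayes' rule for the discrete uniform prior.

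First I would rewrite the expected loss by swapping the order of expectations. Using the joint density $p(x_0, x_\sigma) = p(x_0)\, p(x_\sigma \mid x_0)$ with $p(x_0) = 1/|\Kset|$ for $x_0 \in \Kset$ and $p(x_\sigma \mid x_0) = (2\pi\sigma^2)^{-n/2}\exp(-\|x_\sigma - x_0\|^2/(2\sigma^2))$, the loss becomes
\begin{align*}
\Loss(\eps^*) = \int_{\R^n} \E\!\left[\,\|(x_\sigma - x_0)/\sigma - \eps^*(x_\sigma,\sigma)\|^2 \,\Big|\, x_\sigma\right] p(x_\sigma)\, dx_\sigma.
\end{align*}
Because $\eps^*(x_\sigma,\sigma)$ may be chosen independently at each $x_\sigma$, the pointwise minimizer of the inner conditional expectation equals the conditional mean:
\begin{align*}
\eps^*(x_\sigma, \sigma) = \E\!\left[\,(x_\sigma - x_0)/\sigma \,\Big|\, x_\sigma\right] = \frac{1}{\sigma}\sum_{x_0 \in \Kset} (x_\sigma - x_0)\, p(x_0 \mid x_\sigma).
\end{align*}

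Next I would compute $p(x_0 \mid x_\sigma)$ via Bayes' rule. Since the prior is uniform on $\Kset$, the $1/|\Kset|$ factor cancels between numerator and denominator, and the Gaussian normalization constant $(2\pi\sigma^2)^{-n/2}$ also cancels, leaving
\begin{align*}
p(x_0 \mid x_\sigma) = \frac{\exp(-\|x_\sigma - x_0\|^2/(2\sigma^2))}{\sum_{x_0' \in \Kset} \exp(-\|x_\sigma - x_0'\|^2/(2\sigma^2))}.
\end{align*}
Substituting this into the expression for $\eps^*(x_\sigma, \sigma)$ above yields the claimed formula directly.

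There is no real obstacle here; the only subtlety is justifying the pointwise minimization step, which is valid because $\eps^*$ is an unconstrained function of $x_\sigma$ (so we may optimize inside the outer integral at each point) and because $p(x_\sigma) > 0$ everywhere under a Gaussian noise model, so the minimizer is determined uniquely almost everywhere. The derivation is essentially the classical Tweedie/denoising-as-posterior-mean identity specialized to a discrete uniform prior.
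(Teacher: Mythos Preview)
Your proposal is correct and follows essentially the same approach as the paper: write the loss as an integral over $x_\sigma$ of a weighted squared error and minimize pointwise. The paper phrases the pointwise step as minimizing a convex quadratic, while you phrase it as taking a conditional expectation and applying Bayes' rule; these are the same computation, and your version is in fact more explicit about how the claimed formula actually falls out.
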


\begin{proof}
  Writing the loss explicitly as
  \begin{align*}
    \Loss_\sigma(\eps^*) =
    \int \sum_{x_0\in \Kset} \frac{1}{\abs{\Kset} \sigma \sqrt{2\pi}}
    \exp\paren{-\frac{\norm{x_\sigma-x_0}^2}{2\sigma^2}}
    \norm{(x_\sigma - x_0)/\sigma - \eps^*(x_\sigma, \sigma)}^2 d(x_\sigma),
  \end{align*}
  It suffices to take the point-wise minima of the expression inside the
  integral, which is convex in terms of $\eps^*$.
\end{proof}
From this expression of the ideal denoiser $\eps^*$, we see that $\hat x_0$ can
be written as a convex combination of points in $\Kset$:
\begin{align*}
   \hat x_0(x, \sigma) = \sigma \eps^*(x, \sigma) - x = \sum_{x_0 \in \Kset} w(x, x_0) x_0,
\end{align*}
where $\sum_{x_0 \in \Kset} w(x, x_0) = 1$.

By taking the gradient of the log-sum-exp function in the definition of
$\distop^2_\Kset(x, \sigma)$ then applying \Cref{lem:ideal-denoiser}, it is clear that
\begin{align*}
  \nabla_x \ihalf \distop^2_\Kset(x, \sigma) = \sigma \eps^*(x, \sigma).
\end{align*}
\subsection{Proof of \Cref{prop:ideal-rel-error}}
We wish to bound the error between the gradient of the true distance function
and the result of the ideal denoiser. Precisely, we want to upper-bound the
following in terms of $\distop_\Kset(x) = \norm{x - x_0^*}$, where
$x_0^* = \projK(x)$. We first define
\begin{align*}
   w(x, x') := \frac{\exp(-\norm{x-x'}^2/2\sigma^2)}{\sum_{x_0 \in \Kset} \exp(-\norm{x-x_0}^2/2\sigma^2)}.
\end{align*}
Note that by definition, we have $\sum_{x_0 \in \Kset} w(x, x_0) = 1$.
Letting $\bar N_\alpha$ denote the complement of $N_{\alpha}$ in $\Kset$, we have
\begin{align*}
  \norm{\nabla \ihalf\distop_\Kset(x)^2 - \sigma \epsilon^*(x, \sigma)}
  &= \norm{\nabla \ihalf \distop_\Kset(x)^2 - \nabla \ihalf \distop^2_\Kset(x, \sigma)} \\
  &= \textstyle{\norm{x_0^* - \sum_{x_0 \in \Kset} w(x, x_0) x_0}}\\
  &= \textstyle{\norm{\sum_{x_0 \in \Kset} w(x, x_0)(x^*_0- x_0)}}\\
     &\le \| \sum_{x_0 \in \bar N_{\alpha} } w(x, x_0)(x^*_0-x_0 ) \| + \| \sum_{x_0 \in N_{\alpha} } w(x, x_0)(x^*_0-x_0) \|\\
     &\le \| \sum_{x_0 \in \bar N_{\alpha} } w(x, x_0)(x^*_0-x_0 ) \| + C_{x, \alpha}
\end{align*}
The claim then follows from the following theorem.
\begin{thm}~\label{thm:ideal-rel-error}
Suppose $\coneName$ is a finite-set and let $x_0^* = \projK(x)$.
  Suppose we have
  \begin{align} \label{eq:alpha-cond-dist}
    \alpha \ge 1 + \frac{2\sigma^2}{\distK(x)^2} \paren{\frac{1}{e} + \log\pfrac{m}{\eta}},
  \end{align}
  then $\textstyle  \norm{\sum_{x_0 \in \bar N_\alpha} w(x, x_0) (x_0^* - x_0)} \le \eta \distK(x)$.
\end{thm}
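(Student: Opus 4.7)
The natural approach is to bound the vector sum by the scalar sum $\sum_{x_0 \in \bar N_\alpha} w(x, x_0) \|x_0^* - x_0\|$. For each $x_0 \in \bar N_\alpha$, the triangle inequality gives $\|x_0^* - x_0\| \le \|x - x_0\| + \distK(x) \le (1 + 1/\alpha)\|x - x_0\|$, since $\|x - x_0\| > \alpha \distK(x)$. Meanwhile, lower-bounding the partition function defining $w$ by keeping only the term at $x_0^*$ yields $w(x, x_0) \le \exp((\distK(x)^2 - \|x - x_0\|^2)/(2\sigma^2))$. Combining these, it suffices to show
\[
(1 + 1/\alpha)\, e^{\distK(x)^2/(2\sigma^2)} \sum_{x_0 \in \bar N_\alpha} g(\|x - x_0\|) \le \eta\, \distK(x),
\]
where $g(r) := r\, e^{-r^2/(2\sigma^2)}$.

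The function $g$ satisfies $g'(r) = (1 - r^2/\sigma^2)\, e^{-r^2/(2\sigma^2)}$, so $g$ is decreasing on $[\sigma, \infty)$. To apply monotonicity at $r = \alpha\, \distK(x)$, I first verify that $\alpha\, \distK(x) \ge \sigma$: the hypothesis gives $\alpha - 1 \ge 2\sigma^2/(e\, \distK(x)^2)$ from the $1/e$ term alone, so AM-GM yields
\[
\alpha\, \distK(x) \ge \distK(x) + \frac{2\sigma^2}{e\, \distK(x)} \ge 2\sqrt{2/e}\,\sigma > \sigma.
\]
Since $\|x - x_0\| \ge \alpha\, \distK(x) \ge \sigma$ for every $x_0 \in \bar N_\alpha$, monotonicity gives $g(\|x - x_0\|) \le g(\alpha\, \distK(x)) = \alpha\, \distK(x)\, e^{-\alpha^2 \distK(x)^2/(2\sigma^2)}$. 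Bounding $|\bar N_\alpha| \le m$, the problem reduces to
\[
m\,(\alpha + 1)\, e^{-(\alpha^2 - 1)\, \distK(x)^2/(2\sigma^2)} \le \eta.
\]

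Taking logs, this becomes $(\alpha - 1)(\alpha + 1)\distK(x)^2/(2\sigma^2) \ge \log(\alpha + 1) + \log(m/\eta)$. Using the assumed $(\alpha - 1)\distK(x)^2/(2\sigma^2) \ge 1/e + \log(m/\eta)$, the left side is at least $(\alpha + 1)(1/e + \log(m/\eta))$, so it suffices to prove
\[
\alpha \log(m/\eta) + (\alpha + 1)/e \ge \log(\alpha + 1).
\]
The first term is nonnegative, and the residual inequality $(\alpha + 1)/e \ge \log(\alpha + 1)$ is the classical calculus fact $u/e \ge \log u$ for $u > 0$ (equality at $u = e$), applied to $u = \alpha + 1$. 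The main obstacle is this final algebraic step: it is tight and relies essentially on the exact $1/e$ constant in the hypothesis, which does double duty by both forcing $\alpha \distK(x) \ge \sigma$ via AM-GM and producing the clean bound $(\alpha+1)/e \ge \log(\alpha+1)$. Without that constant, neither reduction would close.
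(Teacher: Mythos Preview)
Your proof is correct and takes essentially the same approach as the paper: triangle inequality on $\|x_0^*-x_0\|$, single-term lower bound on the partition function, and the elementary inequality $u/e \ge \log u$ to close. The only difference is organizational---the paper works pointwise in $\delta(x_0) := \|x-x_0\|/\distK(x)$ and shows each summand is at most $\eta\,\distK(x)/m$ directly, whereas you first reduce to the boundary case $\|x-x_0\| = \alpha\,\distK(x)$ via monotonicity of $r \mapsto r\,e^{-r^2/(2\sigma^2)}$ (hence the extra AM--GM check that $\alpha\,\distK(x) \ge \sigma$, which the paper's route sidesteps).
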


\begin{proof}
  Applying the triangle inequality, it suffices to upper-bound each of
  $w(x, x_0) \norm{x_0^* - x_0}$. For convenience of notation let
  $\delta(x_0) := \norm{x - x_0}/\norm{x - x_0^*}$. Note that by construction
  $\delta(x_0) \ge 1$ for all $x_0 \in \Kset$, and $\delta(x_0) \ge \alpha$
  for all $x_0 \in \bar N_\alpha$. Then
  \begin{align*}
    \norm{x_0^* - x_0} &\le \norm{x_0^* - x} + \norm{x - x_0} = (1 + \delta(x_0))\norm{x - x_0^*}, \\
    w(x, x_0) &\le \exp\paren{-\frac{\norm{x-x_0}^2 - \norm{x-x_0^*}^2}{2\sigma^2}}
                \le \exp\paren{-\frac{(\delta(x_0)^2 - 1)\norm{x-x_0^*}^2}{2\sigma^2}}.
  \end{align*}
  From \eqref{eq:alpha-cond-dist} and the fact that $1/e \ge \log(a)/a$ for
  $a = \delta(x_0) + 1 \ge 1$, we have
  \begin{align*}
    \delta(x_0) - 1
    &\ge \alpha - 1
      \ge \frac{2\sigma^2}{a \norm{x-x_0^*}^2} \paren{\log(a) + \log\pfrac{m }{\epsilon}}
      = \frac{2\sigma^2}{ (\delta(x_0) + 1)\norm{x-x_0^*}^2} \log\pfrac{m (\delta(x_0) + 1)}{\epsilon} \\
    \delta(x_0)^2 - 1 &\ge \frac{2\sigma^2}{\norm{x-x_0^*}^2} \log\pfrac{m (\delta(x_0) + 1)}{\epsilon}
  \end{align*}
  Putting these together, we have:
  \begin{align*}
    { \textstyle  \norm{\sum_{x_0 \in \bar N_\alpha} w(x, x_0) (x_0^* - x_0)}}
    &\le \sum_{x_0 \in \bar N_\alpha}
      (1 + \delta(x_0))\norm{x - x_0^*}
      \exp\paren{-\frac{(\delta(x_0)^2 - 1)\norm{x-x_0^*}^2}{2\sigma^2}} \\
    &\le \sum_{x_0 \in \bar N_\alpha} \frac{\epsilon \norm{x - x_0^*}}{m} \\
    &\le \eps \distK(x)
  \end{align*}
\end{proof}

\section{DDIM with Projection Error Analysis}~\label{sec:proj-proofs}

\subsection{Proof of \Cref{thm:mainDDIM}}
We use the following lemma for gradient descent applied to the squared-distance
function $f(x)$.

\begin{lem}\label{lem:DampedGradientStep}
  Fix $x \in \mathbb{R}^n$ and suppose that $\nabla f(x)$ exists. For
  step-size $0 < \beta \le 1$ consider the gradient descent iteration applied to
  $f(x)$:
  \[
    x_{+} := x - \beta \nabla f(x)
  \]
  Then,  $\distK(x_+) = \paren{1-\beta} \distK(x) < \distK(x) $.
\end{lem}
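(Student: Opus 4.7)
The plan is to exploit the closed-form expression $\nabla f(x) = x - \projK(x)$ guaranteed by \Cref{prop:distanceProperties} (whose hypotheses are satisfied since $\nabla f(x)$ is assumed to exist, forcing $\projK(x)$ to be unique). Writing $p := \projK(x)$, the gradient step becomes
\[
  x_+ = x - \beta(x - p) = p + (1-\beta)(x-p),
\]
i.e., $x_+$ lies on the segment from $p$ to $x$ at fraction $(1-\beta)$ of the way from $p$. Immediately $\|x_+ - p\| = (1-\beta)\|x-p\| = (1-\beta)\distK(x)$, which gives the upper bound $\distK(x_+) \le (1-\beta)\distK(x)$.

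For the matching lower bound, I would argue that no other point of $\coneName$ can be closer to $x_+$ than $p$ is. The cleanest way is the triangle inequality: for any $y \in \coneName$,
\[
  \|x - y\| \le \|x - x_+\| + \|x_+ - y\| = \beta\distK(x) + \|x_+ - y\|.
\]
Since $\|x-y\| \ge \distK(x)$ by definition of the distance, rearranging yields $\|x_+ - y\| \ge (1-\beta)\distK(x) = \|x_+ - p\|$. Taking the infimum over $y \in \coneName$ gives $\distK(x_+) = (1-\beta)\distK(x)$, and strict inequality $(1-\beta)\distK(x) < \distK(x)$ follows from $\beta > 0$ together with the implicit assumption $\distK(x) > 0$ (if $x \in \coneName$ then $\nabla f(x) = 0$ and the update is trivial).

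There is essentially no obstacle: the whole argument boils down to recognizing that the gradient step moves $x$ along the ray toward its unique projection, so the projection of the new point is inherited from the old. The only subtlety is that $\coneName$ need not be convex, so one cannot invoke a variational inequality of the form $\langle x - p, y - p\rangle \le 0$; the triangle-inequality detour above avoids this entirely and uses only the definition of $\distK$.
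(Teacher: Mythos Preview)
Your proof is correct and follows the same idea as the paper's: both recognize that $x_+$ lies on the segment from $p = \projK(x)$ to $x$ and compute $\|x_+ - p\| = (1-\beta)\distK(x)$. In fact your argument is more careful than the paper's, which simply writes $\distK(x_+) = \|x_+ - p\|$ without justifying that $p$ remains a nearest point of $\coneName$ to $x_+$; your triangle-inequality step supplies exactly this missing lower bound.
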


Make the inductive hypothesis that $\dist(x_t) = \sqrt n \sigma_t$.
From the definition of DDIM~\eqref{eq:DDIM-scaled}, we have
\[
  x_{t-1} = x_t + (\frac{\sigma_{t-1}}{\sigma_t} - 1) \sigma_t \epsilon_\theta(x_t, \sigma_t).
\]
Under~\Cref{ass:denoiseIsProjection} and the inductive
hypothesis, we conclude
\begin{align*}
  x_{t-1} &= x_t + (\frac{\sigma_{t-1}}{\sigma_t} - 1) \nabla f(x_t) \\
          &= x_t - \beta_t \nabla f(x_t)
\end{align*}
Using \Cref{lem:DampedGradientStep} we have that
\begin{align*}
  \dist(x_{t-1}) = (1-\beta_t) \dist(x_t) =  \frac{\sigma_{t-1}}{\sigma_t} \dist(x_t) = \sqrt n \sigma_{t-1}
\end{align*}
The base case holds by assumption, proving the claim.

\subsection{Proof of \Cref{lem:DampedGradientStep}}
Letting $x_0 = \projK(x)$ and noting $\nabla f(x) = x - x_0$, we have
\begin{align*}
  \distK(x_+) &= \distK(x + \beta (x_0 - x))\\
              & = \norm{x + \beta (x_0 - x) - x_0} \\
              & = \norm{(x - x_0) (1-\beta)}\\
              & = (1-\beta) \distK(x)
\end{align*}

\subsection{Distance function bounds}
The distance function admits the following upper and lower bounds.
\begin{lem}~\label{lem:distBounds}
The distance function $\distK : \mathbb{R}^n \rightarrow \mathbb{R}$
for $\coneName \subseteq \mathbb{R}^n$ satisfies
\[
 \distK(u) - \|u-v\|  \le \distK(v) \le \distK(u) + \|u-v\|
\]
for all $u, v \in \mathbb{R}^n$.
\end{lem}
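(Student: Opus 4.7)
The plan is to show that $\distK$ is $1$-Lipschitz as a consequence of the triangle inequality on the underlying norm, which immediately implies both claimed bounds.

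First I would prove the upper bound $\distK(v) \le \distK(u) + \|u-v\|$. Fix an arbitrary $x_0 \in \coneName$. By the triangle inequality in $\mathbb{R}^n$,
\[
\|v - x_0\| \le \|v - u\| + \|u - x_0\|.
\]
Since $\distK(v) = \inf_{x_0 \in \coneName}\|v - x_0\|$, taking the infimum over $x_0 \in \coneName$ on both sides (noting that $\|v-u\|$ is a constant in $x_0$) yields
\[
\distK(v) \le \|v - u\| + \inf_{x_0 \in \coneName}\|u - x_0\| = \|v-u\| + \distK(u).
\]

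For the lower bound, I would simply swap the roles of $u$ and $v$ in the argument above to obtain $\distK(u) \le \|u-v\| + \distK(v)$, and rearrange to get $\distK(v) \ge \distK(u) - \|u-v\|$. Combining the two bounds finishes the proof.

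There is no real obstacle here: the only subtle point is justifying the exchange of $\inf$ with the additive constant $\|v-u\|$, which is immediate because $\|v-u\|$ does not depend on the optimization variable $x_0$. If $\coneName$ were empty, $\distK \equiv +\infty$ and the statement is vacuous; otherwise the infimum is well defined (possibly attained, possibly not) and the argument above goes through without requiring uniqueness of $\projK$ or any regularity of $\coneName$.
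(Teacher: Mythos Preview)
Your proof is correct and follows essentially the same route as the paper: both arguments amount to the $1$-Lipschitz property $|\distK(u)-\distK(v)|\le\|u-v\|$ followed by rearranging. The only difference is that the paper cites this Lipschitz bound from \cite{delfour2011shapes}, whereas you derive it directly from the triangle inequality, making your version more self-contained.
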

\begin{proof}
  By~\citep[Chapter 6, Theorem 2.1]{delfour2011shapes},
  $|\distK(u) - \distK(v)| \le \|u-v\|$, which is
  equivalent to
  \[
    \distK(u) - \distK(v) \le \|u-v\|,  \distK(v) - \distK(u) \le \|u-v\|.
  \]
  Rearranging proves the claim.
\end{proof}

\subsection{Proof of \Cref{lem:gradErrorBounds}}

We first restate the full version of \Cref{lem:gradErrorBounds}.
\begin{lem}~\label{lem:gradErrorBoundsRestate}
For $\coneName \subseteq \mathbb{R}^n$,
let $f(x) := \frac{1}{2} \distK(x)^2$.
The following statements hold.
\begin{enumerate}[label= (\alph*)]
\item\label{itm:singleGrad} If $x_{+} = x - \beta (\nabla f(x) + e)$
for $e$ satisfying $\|e\| \le \eta \distK(x)$ and $0 \le \beta \le 1$, then
\[
  (1- \beta(\eta + 1))  \distK(x)  \le \distK(x_+) \le (1+ \beta  (\eta-1))\distK(x).
\]
\item\label{itm:multstepGrad} If $x_{t-1} = x_t - \beta_t (\nabla f(x_t) + e_t)$
for $e_t$ satisfying $\|e_t\| \le \eta \distK(x_t)$ and $0 \le \beta_t \le 1$, then
\[
 \distK(x_N)\prod^{N}_{i=t} (1- \beta_i(\eta+1))  \le  \distK(x_{t-1}) \le \distK(x_N) \prod^{N}_{i=t}  (1+ \beta_i(\eta-1). )
\]
\end{enumerate}
\end{lem}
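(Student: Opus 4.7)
}
The plan is to first establish the single-step bounds in part \ref{itm:singleGrad}, and then obtain the multi-step bounds in part \ref{itm:multstepGrad} by a straightforward induction on the number of iterations, peeling off one gradient step at a time.

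For part \ref{itm:singleGrad}, let $\hat x := \projK(x)$, so that by \Cref{prop:distanceProperties} we have $\nabla f(x) = x - \hat x$ and hence
\[
 x_+ - \hat x \;=\; (1-\beta)(x-\hat x) - \beta e.
\]
The upper bound then follows immediately: since $\hat x \in \coneName$,
$\distK(x_+) \le \|x_+ - \hat x\| \le (1-\beta)\|x-\hat x\| + \beta\|e\| \le (1-\beta)\distK(x) + \beta \eta \distK(x) = (1+\beta(\eta-1))\distK(x)$.
For the lower bound I would instead apply the reverse triangle-type inequality from \Cref{lem:distBounds}, namely $\distK(x_+) \ge \distK(x) - \|x_+ - x\|$. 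Since $\|x_+ - x\| = \beta\|\nabla f(x)+e\| \le \beta(\|\nabla f(x)\| + \|e\|) = \beta(\distK(x)+\|e\|) \le \beta(1+\eta)\distK(x)$, we obtain $\distK(x_+) \ge (1-\beta(\eta+1))\distK(x)$. This gives both inequalities in \ref{itm:singleGrad}; note the lower bound is automatically valid even when the factor $1-\beta(\eta+1)$ is negative, since $\distK$ is nonnegative.

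For part \ref{itm:multstepGrad}, I would induct backward from $t=N$ down. The base case $t=N$ is exactly part \ref{itm:singleGrad} applied to $x_N \to x_{N-1}$ with step size $\beta_N$ and error $e_N$. For the inductive step, assume the product bounds hold at index $t$; apply \ref{itm:singleGrad} with $x = x_t$, $\beta = \beta_t$, $e = e_t$ to bound $\distK(x_{t-1})$ in terms of $\distK(x_t)$, then multiply by the inductive bounds on $\distK(x_t)$. The upper bound induction is clean since all factors $1+\beta_i(\eta-1) \in [0,1]$ (as $\eta<1$ and $\beta_i \in [0,1]$) are nonnegative.

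The only point requiring care is the lower bound induction when some factor $1-\beta_i(\eta+1)$ is negative; multiplying the inductive lower bound $\distK(x_t) \ge \distK(x_N)\prod_{i=t+1}^N(1-\beta_i(\eta+1))$ by a negative quantity would flip the inequality. I would handle this by arguing that once any factor is negative the stated lower bound becomes non-positive and hence vacuous (as $\distK(x_{t-1}) \ge 0$ trivially), while if all factors remain nonnegative the induction preserves the direction of the inequality. This is the one subtle bookkeeping step, but it is not a genuine obstacle since the conclusion is automatic in the vacuous case; the substantive content of the lemma is the single-step estimate in part \ref{itm:singleGrad}, which reduces to the triangle inequality plus the identity $\nabla f(x) = x - \projK(x)$.
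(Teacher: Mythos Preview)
Your proposal is correct and follows essentially the same approach as the paper: both reduce part \ref{itm:singleGrad} to the $1$-Lipschitz property of $\distK$ (\Cref{lem:distBounds}) combined with $\nabla f(x)=x-\projK(x)$, and then obtain \ref{itm:multstepGrad} by induction. The only cosmetic difference is that the paper applies \Cref{lem:distBounds} once, symmetrically, at the intermediate point $v=x-\beta\nabla f(x)$ (using $\distK(v)=(1-\beta)\distK(x)$ from \Cref{lem:DampedGradientStep}) to get both bounds simultaneously, whereas you use two different comparison points ($\hat x$ for the upper bound, $x$ for the lower); your extra care about negative factors in the inductive lower bound is a point the paper simply leaves implicit.
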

For \Cref{itm:singleGrad} we apply \Cref{lem:distBounds} at points $u = x_+$ and $v = x - \beta \nabla f(x)$.
We also use $\dist(v) = (1- \beta)  \distK(x)$, since $0 \le \beta \le 1$, to conclude that
\[
  (1- \beta)  \distK(x) - \beta \|e\| \le \distK(x_+) \le (1-\beta)\distK(x)  + \beta \|e\|.
\]
Using the assumption that $\|e\| \le \eta \distK(x)$ gives
\[
  (1- \beta - \eta \beta)  \distK(x)  \le \distK(x_+) \le (1-\beta + \eta\beta)\distK(x)
\]
Simplifying completes the proof.
\Cref{itm:multstepGrad} follows from \Cref{itm:singleGrad} and induction.

\subsection{Proof of \Cref{thm:DDIMWithError}}
We first state and prove an auxiliary theorem:
\begin{thm}\label{thm:gradWithError}
  Suppose \Cref{ass:main} holds for $\nu \ge 1$ and $\eta > 0$.  Given
  $x_N$ and $\{\beta_t, \sigma_t\}^N_{i=1}$, recursively define
  $x_{t-1} = x_t + \beta_t \sigma_t \epsilon_{\theta}(x_t, t)$ and suppose that
  $\projK(x_t)$ is a singleton for all $t$.  Finally, suppose that
  $\{\beta_t, \sigma_t\}^N_{i=1}$ satisfies
  $\frac{1}{\nu} \distK(x_N)\le \sqrt{n} \sigma_N \le \nu \distK(x_N)$ and
  \begin{align}\label{eq:parameterAssumption}
    \frac{1}{\nu} \distK(x_N) \prod^N_{i=t} (1+ \beta_i(\eta-1) )\le \sqrt{n} \sigma_{t-1}  \le \nu
    \distK(x_N)\prod^N_{i=t} (1- \beta_i(\eta+1)).
  \end{align}
  The following statements hold.
  \begin{itemize}
   \item  $\distK(x_N)\prod^N_{i=t} (1- \beta_i(\eta+1)) \le \distK(x_{t-1}) \le \distK(x_N) \prod^N_{i=t} (1+\beta_i (\eta-1) )$
   \item $\frac{1}{\nu} \distK(x_{t-1})  \le \sqrt{n} \sigma_{t-1} \le \nu \distK(x_{t-1})$
  \end{itemize}
  \begin{proof}
    Since $\projK(x_t)$ is a singleton, $\nabla f(x_t)$ exists.  Hence, the
    result will follow from \Cref{itm:multstepGrad} of
    \Cref{lem:gradErrorBoundsRestate} if we can show that
    $\|\beta_t \sigma_t \epsilon_{\theta}(x_t, t) - \nabla f(x_t)\| \le \eta
    \distK(x_t)$.  Under \Cref{ass:main}, it suffices to show that
    \begin{align}\label{eq:proofGradError}
      \frac{1}{\nu} \distK(x_t)  \le \sqrt{n}\sigma_t \le \nu \distK(x_t)
    \end{align}
    holds for all $t$.
    We use induction, noting that the base case $(t = N)$ holds by  assumption.
    Suppose then that~\eqref{eq:proofGradError} holds for all $t, t+1, \ldots, N$.
    By \Cref{lem:gradErrorBounds} and \Cref{ass:main},
    we have
    \[
      \distK(x_N)\prod^N_{i=t} (1- \beta_i(\eta+1)) \le \distK(x_{t-1}) \le \distK(x_N) \prod^N_{i=t} (1+(\eta-1)\beta_i )
    \]
    Combined with~\eqref{eq:parameterAssumption} shows
    \begin{align*}
      \frac{1}{\nu} \distK(x_{t-1})  \le \sqrt{n} \sigma_{t-1} \le \nu \distK(x_{t-1}),
    \end{align*}
    proving the claim.
  \end{proof}
\end{thm}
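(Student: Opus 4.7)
I would prove this by downward induction on $t$ from $N$ to $1$, carrying the ratio invariant $\tfrac{1}{\nu}\distK(x_t) \le \sqrt{n}\sigma_t \le \nu \distK(x_t)$. The base case $t = N$ is given by hypothesis. The central observation driving the argument is that this invariant is exactly the activation condition for \Cref{def:main}: once it holds at step $t$, the $(\eta,\nu)$-approximate-projection property of $\epsilon_\theta$ converts into a controlled relative error on $\nabla f(x_t)$, which is precisely the form required by \Cref{lem:gradErrorBoundsRestate}.

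\textbf{Inductive step.} Assuming the invariant at $t$, \Cref{def:main} gives $\|x_t - \sigma_t \epsilon_\theta(x_t,\sigma_t) - \projK(x_t)\| \le \eta \distK(x_t)$, and since $\projK(x_t)$ is a singleton, \Cref{prop:distanceProperties} rewrites this as $\|\sigma_t \epsilon_\theta(x_t,\sigma_t) - \nabla f(x_t)\| \le \eta \distK(x_t)$. The DDIM recursion then takes the form $x_{t-1} = x_t - \beta_t(\nabla f(x_t) + e_t)$ with $\|e_t\| \le \eta \distK(x_t)$, so \Cref{lem:gradErrorBoundsRestate} applied from $N$ down through $t$ yields the first bullet,
\[
\distK(x_N)\prod_{i=t}^N (1-\beta_i(\eta+1)) \le \distK(x_{t-1}) \le \distK(x_N)\prod_{i=t}^N (1+\beta_i(\eta-1)).
\]
For the second bullet I would combine this with \eqref{eq:parameterAssumption}, which sandwiches $\sqrt{n}\sigma_{t-1}$ by the same two products up to $\nu$-factors. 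Pairing the upper bound on $\sqrt{n}\sigma_{t-1}$ with the lower bound on $\distK(x_{t-1})$ (and vice versa) yields $\tfrac{1}{\nu}\distK(x_{t-1}) \le \sqrt{n}\sigma_{t-1} \le \nu \distK(x_{t-1})$, re-establishing the invariant and closing the induction.

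\textbf{Main obstacle.} The principal subtlety will be the lockstep nature of the argument: the relative-error hypothesis at step $t$ can only be invoked once the ratio invariant at step $t$ has been verified, so the distance bound and the ratio bound must be advanced in tandem rather than proved separately and then reconciled. The admissibility condition \eqref{eq:parameterAssumption} has been engineered exactly to absorb the multiplicative slack that a single approximate-gradient step introduces, allowing the invariant to propagate to the next index. The most error-prone piece of bookkeeping is matching each one-sided inequality from \Cref{lem:gradErrorBoundsRestate} with the correct side of \eqref{eq:parameterAssumption}, since a direction slip would lose the $\nu$-factor and break the induction.
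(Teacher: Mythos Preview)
Your proposal is correct and follows essentially the same approach as the paper: downward induction maintaining the ratio invariant $\tfrac{1}{\nu}\distK(x_t)\le\sqrt{n}\sigma_t\le\nu\distK(x_t)$, with the inductive step invoking the approximate-projection definition to obtain the relative-error bound, then applying \Cref{lem:gradErrorBoundsRestate} and combining with \eqref{eq:parameterAssumption}. Your write-up is in fact slightly cleaner than the paper's, which carries what appears to be a stray $\beta_t$ in the displayed error bound; your version $\|\sigma_t\epsilon_\theta(x_t,\sigma_t)-\nabla f(x_t)\|\le\eta\distK(x_t)$ is the correct one.
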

\Cref{thm:DDIMWithError} follows
by observing the admissibility assumption and the DDIM step-size rule,
which satisfies $\sigma_{t-1} = (1-\beta_t) \sigma_t$, implies~\eqref{eq:parameterAssumption}.

\subsection{Proof of \Cref{thm:DDIMStepSize}}
Assuming constant step-size $\beta_i = \beta$ and
dividing~\eqref{eq:DDIMParamCond} by $\prod^N_{i=1} (1-\beta)$  gives the
conditions
\[
 \paren{1 + \eta\frac{\beta}{1-\beta}}^N  \le \nu,
 \qquad \paren{1 - \eta\frac{\beta}{1-\beta}}^N  \ge \frac{1}{\nu}.
\]
Rearranging and defining $a =\eta\frac{\beta}{1-\beta}$ and $b = \nu^{\frac{1}{N}}$
gives
\[
   a \le b -1, \qquad a \le 1-b^{-1}.
\]
Since $b-1 - (1-b^{-1})  = b+b^{-1}-2 \ge 0$ for all $b >0$,
we conclude $a \le b -1$ holds if $a \le 1-b^{-1}$ holds.
We therefore consider the second
inequality $\eta\frac{\beta}{1-\beta} \le 1- \nu^{-1/N}$,
noting that it holds for all $0 \le \beta < 1$
if and only if $0\le \beta \le \frac{k}{1+k}$
for $k = \frac{1}{\eta}(1- \nu^{-1/N})$, proving the claim.

\subsection{Proof of \Cref{thm:DDIMStepSizeLimit}}
The value of $\sigma_0/\sigma_N$ follows from the definition of $\sigma_t$ and
and the upper bound for $\distK(x_0)/\distK(x_N)$ follows from
\Cref{thm:DDIMStepSize}. We introduce the parameter $\mu$ to get a general form
of the expression inside the limit:
\begin{align*}
  (1-\mu \beta_{*,N})^N = \paren{1-\mu \frac{1-\nu^{-1/N}}{\eta + 1 - \nu^{-1/N}}}^N.
\end{align*}
Next we take the limit using L'H\^opital's rule:
\begin{align*}
  \lim_{N \rightarrow \infty} \paren{1-\mu \frac{1-\nu^{-1/N}}{\eta + 1 - \nu^{-1/N}}}^N
  &= \exp\paren{\lim_{N \rightarrow \infty}
    \log \paren{1-\mu \frac{1-\nu^{-1/N}}{\eta + 1 - \nu^{-1/N}}}/(1/N)} \\
  &= \exp\paren{\lim_{N \rightarrow \infty}
    \frac{\eta\mu\log(\nu)}{(\nu^{-1/N} - \eta - 1)(\nu^{1/N} (\eta-\mu+1) + \mu-1)}} \\
  &= \exp\paren{-\frac{\mu\log(\nu)}{\eta}} \\
  &= \paren{1/\nu}^{\mu/\eta}.
\end{align*}
For the first limit, we set $\mu = 1$ to get
\begin{align*}
  \lim_{N\rightarrow \infty}(1-\beta_{*,N})^N = (1/\nu)^{1/\eta}.
\end{align*}
For the second limit, we set $\mu = 1-\eta$ to get
\begin{align*}
  \lim_{N\rightarrow \infty}(1+ (\eta-1)\beta_{*,N})^N = (1/\nu)^{\frac{1-\eta}{\eta}}.
\end{align*}

\subsection{Denoiser Error}
\Cref{ass:main} places a condition directly on the approximation of
$\nabla f(x)$, where $f(x):= \frac{1}{2} \distK(x)$, that is jointly obtained
from $\sigma_t$ and the denoiser $\epsilon_{\theta}$.  We prove this assumption
holds under a direct assumption on $\nabla \distK(x)$, which is easier to verify
in practice.
\begin{ass}\label{ass:denoiserError}
There exists $\nu \ge 1$ and $\eta > 0$ such that
if $ \frac{1}{\nu}\distK(x)  \le \sqrt{n}\sigma_t \le \nu \distK(x)$ then
$\|\epsilon_{\theta}( x, t) -  \sqrt{n}\nabla\distK(x)\| \le \eta$
\end{ass}
\begin{lem}\label{lem:denoiserErrorImpliesGrad}
If \Cref{ass:denoiserError} holds with $(\nu, \eta)$, then \Cref{ass:main}
holds with $(\hat \nu, \hat \eta)$, where $\hat \eta = \frac{1}{\sqrt n}\eta \nu  + \max(\nu-1, 1-\frac{1}{\nu})$
and $\hat \nu = \nu$.
\begin{proof}
Multiplying the error-bound on $\epsilon_{\theta}$ by $\sigma_t$
and using $\sqrt{n}\sigma_t \le \nu \distK(x)$ gives
\[
\|\sigma_t \epsilon_{\theta}( x, t) -  \sqrt{n}\sigma_t\nabla\distK(x)\| \le \eta \sigma_t \le \eta \nu \frac{1}{\sqrt n} \distK(x)
\]
Defining $C = \sqrt{n}\sigma_t - \distK(x)$ and simplifying gives
\begin{align*}
\eta \nu \frac{1}{\sqrt n} \distK(x) &\ge
\|\sigma_t \epsilon_{\theta}( x, t) -  \sqrt{n}\sigma_t\nabla\distK(x)\|\\
& =\|\sigma_t \epsilon_{\theta}( x, t) -  \nabla f(x) - C\nabla \distK(x)  \| \\
&\ge\|\sigma_t \epsilon_{\theta}( x, t) -  \nabla f(x)  \|
-  \|C\nabla \distK(x)\| \\
&=\|\sigma_t \epsilon_{\theta}( x, t) -  \nabla f(x)  \|
- |C| \\
\end{align*}
Since  $(\frac{1}{\nu} - 1) \distK(x)  \le C \le (\nu-1)\distK(x)$
and $\nu \ge 1$, the \Cref{ass:main} error
bound holds for the claimed $\hat \eta$.
\end{proof}
\end{lem}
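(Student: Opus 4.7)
The plan is to reduce Assumption \ref{ass:denoiserError}, which bounds the denoiser's deviation from $\sqrt{n}\nabla\distK(x)$ in absolute terms, to the relative-error model of Assumption \ref{ass:main}, which bounds the denoiser's deviation from $\nabla f(x) = \distK(x)\nabla\distK(x)$ in terms of $\distK(x)$. The key structural observation is that the two target vectors are parallel: by Proposition \ref{prop:distanceProperties}, $\nabla f(x) = \distK(x)\nabla\distK(x)$ with $\|\nabla\distK(x)\| = 1$, so $\sqrt{n}\sigma_t\nabla\distK(x)$ and $\nabla f(x)$ differ only by the scalar gap $C := \sqrt{n}\sigma_t - \distK(x)$, which the $\nu$-admissibility constraint forces to be $O(\distK(x))$.

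Concretely, I would first multiply the bound $\|\epsilon_\theta(x,t) - \sqrt{n}\nabla\distK(x)\| \le \eta$ through by $\sigma_t$, obtaining a bound on $\|\sigma_t\epsilon_\theta(x,t) - \sqrt{n}\sigma_t\nabla\distK(x)\|$. Using $\sqrt{n}\sigma_t \le \nu\distK(x)$ rewrites the right-hand side as at most $(\eta\nu/\sqrt{n})\distK(x)$. Next, I would decompose $\sqrt{n}\sigma_t\nabla\distK(x) = \nabla f(x) + C\nabla\distK(x)$, then apply the triangle inequality to peel off the correction, using $\|\nabla\distK(x)\| = 1$, to get $\|\sigma_t\epsilon_\theta(x,t) - \nabla f(x)\| \le (\eta\nu/\sqrt{n})\distK(x) + |C|$.

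Finally, the two-sided admissibility bound $\frac{1}{\nu}\distK(x) \le \sqrt{n}\sigma_t \le \nu\distK(x)$ gives $(\tfrac{1}{\nu}-1)\distK(x) \le C \le (\nu-1)\distK(x)$, hence $|C| \le \max(\nu-1, 1-\tfrac{1}{\nu})\distK(x)$. Adding the two contributions yields the claimed $\hat\eta = \tfrac{1}{\sqrt{n}}\eta\nu + \max(\nu-1, 1-\tfrac{1}{\nu})$. Since Assumption \ref{ass:denoiserError} and the approximate-projection clause of Assumption \ref{ass:main} impose the same hypothesis region on $\sigma_t$ (parameterized by $\nu$), no change is required there, so $\hat\nu = \nu$.

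There is no deep obstacle, but the bookkeeping should be presented transparently: the final $\hat\eta$ decomposes into two distinct sources, namely the $\eta/\sqrt{n}$ inherited from the denoiser (amplified by $\nu$ via the $\sigma_t$-to-$\distK(x)$ conversion), and the $\max(\nu-1, 1-\tfrac{1}{\nu})$ arising from the mismatch between $\sqrt{n}\sigma_t$ and $\distK(x)$. The only subtle point is recognizing that these two errors are genuinely additive through the triangle inequality applied along the shared direction $\nabla\distK(x)$, which is what makes the constants come out clean.
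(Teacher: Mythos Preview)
Your proposal is correct and matches the paper's proof essentially step for step: multiply the Assumption~\ref{ass:denoiserError} bound by $\sigma_t$, convert $\sigma_t$ to $\distK(x)$ via $\sqrt{n}\sigma_t \le \nu\distK(x)$, decompose $\sqrt{n}\sigma_t\nabla\distK(x) = \nabla f(x) + C\nabla\distK(x)$ with $C = \sqrt{n}\sigma_t - \distK(x)$, apply the triangle inequality using $\|\nabla\distK(x)\|=1$, and bound $|C|$ via the two-sided $\nu$-constraint. The only cosmetic difference is that the paper writes the triangle inequality in its reverse form before rearranging, whereas you apply it directly; the content is identical.
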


\section{Derivation of Gradient Estimation Sampler}\label{sec:second-order-weighting}
To choose $W$, we make two assumptions on the denoising error: the coordinates
$e_t(\epsilon)_i$ and $e_t(\epsilon)_j$ are uncorrelated for all $i \ne j$, and
$e_t(\epsilon)_i$ is only correlated with $e_{t+1}(\epsilon)_i$ for all $i$.  In
other words, we consider $W$ of the form
\begin{align}\label{eq:Wform}
  W = \bmat{aI & bI \\ bI & cI}
\end{align}
and next show that this choice leads to a simple rule for selecting $\bar\epsilon$.
From the optimality conditions of the quadratic optimization problem~\eqref{eq:estimatorGeneralForm}, we get that
\begin{align*}
  \bar{\epsilon}_t = \frac{a+b}{a+c+2b} \epsilon_{\theta}(x_t, \sigma_t)
  + \frac{c+b}{a+c+2b} \epsilon_{\theta}(x_{t+1}, \sigma_{t+1}).
\end{align*}
Setting $\gamma = \frac{a+b}{a+c+2b}$, we get the update rule \eqref{eq:eps-bar-gamma}.
When $b \ge 0$, the minimizer $\bar{\epsilon}_t$ is a simple convex combination of
denoiser outputs. When $b < 0$, we can have $\gamma < 0$ or $\gamma > 1$, i.e.,
the weights in \eqref{eq:eps-bar-gamma} can be negative (but still sum to
1). Negativity of the weights can be interpreted as cancelling positively
correlated error $(b < 0)$ in the denoiser outputs.
Also note we can implicitly search over $W$ by directly searching for $\gamma$.

\section{Further Experiments}\label{sec:futher-experiments}

\subsection{Denoising Approximates Projection}
We test our interpretation that denoising approximates projection on pretrained
diffusion models on the CIFAR-10 dataset. In these experiments, we take a
50-step DDIM sampling trajectory, extract $\epsilon(x_t, \sigma_t)$ for each $t$
and compute the cosine similarity for every pair of $t, t’ \in [1,50]$. The
results are plotted in \Cref{fig:denoising-approx-proj}. They show that the
direction of $\epsilon(x_t, \sigma_t)$ over the entire sampling trajectory is
close to the first step’s output $\epsilon(x_N, \sigma_N)$.  On average over
1000 trajectories, the minimum similarity (typically between the first step when
$t=50$ and last step when $t’=1$) is ~0.85, and for the vast majority (over
80\%) of pairs the similarity is $> 0.99$, showing that the denoiser outputs
approximately align in the same direction, validating our intuitive picture in
\Cref{fig:cartoons}.

\begin{figure}
  \centering
  \includegraphics[width=0.5\textwidth]{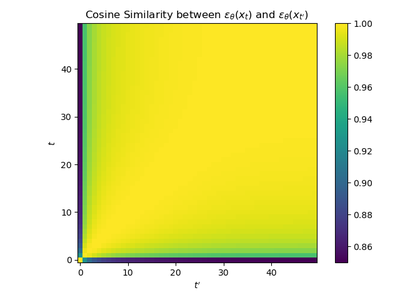}
  \caption{Plot of the cosine similarity between $\epsilon_\theta(x_t, t)$ and
    $\epsilon_\theta(x_{t'}, t')$ over $N=50$ steps of DDIM denoising on the
    CIFAR-10 dataset. Each cell is the average result of 1000 runs. }
  \label{fig:denoising-approx-proj}
\end{figure}

\begin{figure}
  \centering
  \begin{subfigure}[b]{0.45\textwidth}
    \caption{Plot of $\norm{\epsilon_\theta(x_t, \sigma_t)}/\sqrt{n}$ against $t$.}
    \includegraphics[width=\textwidth]{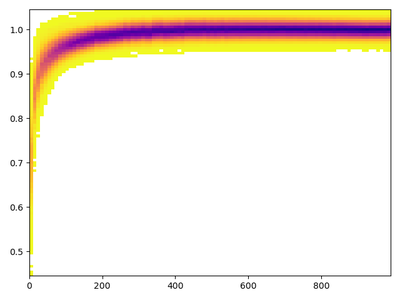}
    \label{fig:denoising-norm}
  \end{subfigure}
  \begin{subfigure}[b]{0.45\textwidth}
    \centering
    \caption{Plot of
      $\norm{\epsilon_\theta(x_0 + \sigma_t \epsilon, \sigma_t) -
        \epsilon}/\sqrt{n}$ against $t$.}
    \includegraphics[width=\textwidth]{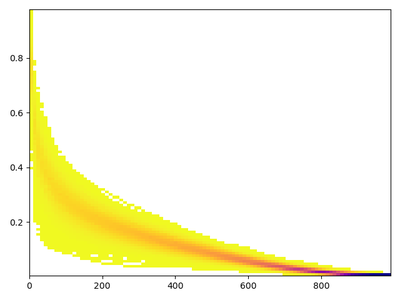}
    \label{fig:denoiser-error}
  \end{subfigure}
  \caption{Plots of the norm of the denoiser at different stages of denoising,
    as well as the ability of the denoiser to accurately predict the added noise
    as a function of noise added.}
  \label{fig:distance-fn-plots}
\end{figure}

\begin{table}[H]
  \centering
  \begin{tabular}[t]{lll}
    \toprule
                     & CIFAR-10  & CelebA \\
    \midrule
    DDIM (w/o both)  & 16.86 & 18.08 \\
    Ours (w/o sampler)  & 13.25 & 13.55 \\
    Ours (w/o schedule) & 8.30 & 11.87 \\
    Ours (with both) & \textbf{3.85} & \textbf{4.30} \\
    \bottomrule
  \end{tabular}
  \vspace{0.5cm}
  \caption{Ablation study of the effects of the schedule improvements in
    \Cref{sec:admissible-params} and the sampler improvements in
    \Cref{sec:newAlgorithms}, for $N=10$ steps.}
  \label{table:ablation}
\end{table}

\begin{table}[H]
  \centering
  \begin{tabular}[t]{lllllllll}
    \toprule
    & \multicolumn{4}{c}{CIFAR-10 FID} & \multicolumn{4}{c}{CelebA FID} \\
    DDIM Sampler      & $N=5$ & $N=10$ & $N=20$ & $N=50$ & $N=5$ & $N=10$ & $N=20$ & $N=50$ \\
    \midrule
    DDIM              & 47.21 & 16.86  & 8.28   & 4.81   & 32.21 & 18.08  & 11.81  & 7.39 \\
    DDIM Offset       & 36.09 & 14.19  & 7.51   & 4.69   & 27.79 & 15.38  & 10.05  & 6.80 \\
    EDM               & 61.63 & 20.85  & 9.25   & 5.39   & 33.00 & 16.72  & 9.78   & 6.53 \\
    Ours (Log-linear) & 40.33 & 13.37  & 6.88   & 4.71   & 28.07 & 13.63  & 8.80   & 6.79 \\
    \toprule
    & \multicolumn{4}{c}{CIFAR-10 FID} & \multicolumn{4}{c}{CelebA FID} \\
    Our Sampler       & $N=5$ & $N=10$ & $N=20$ & $N=50$ & $N=5$ & $N=10$ & $N=20$ & $N=50$ \\
    \midrule
    DDIM              & 51.65 & 8.30   & 4.96   & 3.33   & 28.64 & 11.87  & 8.00   & 4.33 \\
    DDIM Offset       & 14.95 & 7.50   & 4.58   & 3.29   & 12.19 & 9.49   & 6.58   & 4.80 \\
    EDM               & 34.26 & 4.87   & 3.64   & 3.67   & 18.68 & 5.30   & 3.95   & 4.11 \\
    Ours (Log-linear) & 12.57 & 3.79   & 3.32   & 3.41   & 10.76 & 4.79   & 4.57   & 5.01 \\
    \bottomrule
  \end{tabular}
  \vspace{0.5cm}
  \caption{Ablation of $\sigma_t$ schedules for both the DDIM and GE sampler.}
  \label{table:ablation-full}
\end{table}
We perform an ablation study on different sampling schedules. We use the four
different schedules as shown in \Cref{table:schedule-fid}:
\begin{itemize}
\item \textbf{DDIM} Default DDIM schedule with $\sigma_N = 157, \sigma_0 = 0.002$
\item \textbf{DDIM Offset} Truncated DDIM schedule starting with a smaller $\sigma$,
  with $\sigma_N = 40, \sigma_0 = 0.002$.
\item \textbf{EDM} Schedule used in \cite{karras2022elucidating} with
  $\sigma_N = 80, \sigma_0 = 0.002$.
\item \textbf{Linear} Log-linear schedule with $\sigma_N = 40$, $\sigma_1, \sigma_0$
  selected based on \Cref{sec:sigma-select}.
\end{itemize}
Our results are reported in \Cref{table:ablation-full}. Our gradient-estimation
sampler consistently outperforms the DDIM sampler for all schedules and $N$. The
\emph{DDIM Offset} schedule that starts at $\sigma_N = 40$ offers an improvement
over the \emph{DDIM} schedule for $N = 5,10,20$, but performs worse for
$N = 50$. This suggests starting from a higher $\sigma_N$ for larger $N$, which
we have done in our final evaluations.

\subsection{Distance Function Properties}
We test \Cref{ass:denoiseIsProjection} and \Cref{ass:main} on pretrained
networks. If \Cref{ass:denoiseIsProjection} is true, then
$\norm{\epsilon_{\theta}(x_t, \sigma_t)}\sqrt{n} = \norm{\nabla \distK(x_t)} =
1$ for every $x_t$ along the DDIM trajectory. In \Cref{fig:denoising-norm}, we
plot the distribution of norm of the denoiser $\epsilon_\theta(x_t, \sigma_t)$
over the course of many runs of the DDIM sampler on the CIFAR-10 model for
$N=100$ steps ($t=1000, 990, \ldots, 20, 10, 0$). This plot shows that
$\norm{\epsilon_\theta(x_t, \sigma_t)}/\sqrt{n}$ stays approximately constant
and is close to 1 until the end of the sampling process. We next test
\Cref{ass:denoiserError}, which implies \Cref{ass:main} by
\Cref{lem:denoiserErrorImpliesGrad}. We do this by first sampling a fixed noise
vector $\epsilon$, next adding different levels of noise $\sigma_t$, then using
the denoiser to predict $\epsilon_\theta(x_0 + \sigma_t \epsilon, \sigma_t)$. In
\Cref{fig:denoiser-error}, we plot the distribution of
$\norm{\epsilon_\theta(x_0 + \sigma_t \epsilon, \sigma_t) - \epsilon}/\sqrt{n}$
over different levels of $t$, as a measure of how well the denoiser predicts the
added noise.

\subsection{Choice of $\gamma$} \label{sec:gamma-choice} We motivate our choice
of $\gamma = 2$ in \Cref{alg:second-order} with the following experiment.  For
varying $\gamma$, \Cref{fig:fid-gamma} reports FID scores of our sampler on the
CIFAR-10 and CelebA models for $N=5,10,20$ timesteps using the $\sigma_t$
schedule described in \Cref{sec:sigma-select}. As shown, $\gamma \approx 2$
achieves the optimal FID score over different datasets for $N < 20$. For
sampling from the CelebA dataset, we found that setting $\gamma = 2.4$ for
$N=20$ and $\gamma=2.8$ for $N=50$ achieves the best FID results.

\begin{figure}
  \centering
  \includegraphics[width=0.5\textwidth]{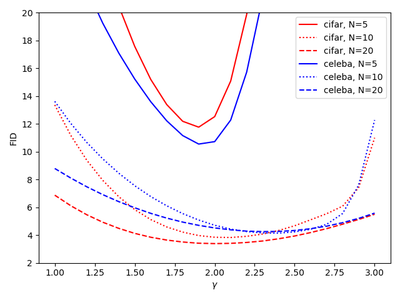}
  \caption{Plot of FID score against $\gamma$ for our second-order sampling
    algorithm on the CIFAR-10 and CelebA datasets for $N=5,10,20$ steps.}
  \label{fig:fid-gamma}
\end{figure}

\section{Experiment Details} \label{sec:exp-details}

\subsection{Pretrained Models}
The CIFAR-10 model and architecture were based on that in
\cite{ho2020denoising}, and the CelebA model and architecture were based on that
in \cite{song2020denoising}. The specific checkpoints we use are provided by
\cite{liu2022pseudo}. We also use Stable Diffusion 2.1 provided
in \url{https://huggingface.co/stabilityai/stable-diffusion-2-1}. For the
comparison experiments in \Cref{fig:sd-figures}, we implemented our gradient
estimation sampler to interface with the HuggingFace diffusers library and use
the corresponding implementations of UniPC, DPM++, PNDM and DDIM samplers with
default parameters.

\subsection{FID Score Calculation}
For the CIFAR-10 and CelebA experiments, we generate 50000 images using our
sampler and calculate the FID score using the library
in \url{https://github.com/mseitzer/pytorch-fid}. The statistics on the training
dataset were obtained from the files provided by \cite{liu2022pseudo}. For the
MS-COCO experiments, we generated images from 30k text captions drawn from the
validation set, and computed FID with respect to the 30k corresponding images.

\subsection{Our Selection of $\sigma_t$}
\label{sec:sigma-select}
Let $\sigma_1^{\text{DDIM}(N)}$ be the noise level at $t=1$ for the DDIM sampler
with $N$ steps.  For the CIFAR-10 and CelebA models, we choose
$\sigma_1 = \sqrt{\sigma_1^{\text{DDIM}(N)}}$ and $\sigma_0 = 0.01$. For
CIFAR-10 $N=5,10,20,50$ we choose $\sigma_N = 40$ and for CelebA $N=5,10,20,50$
we choose $\sigma_N = 40,80,100,120$ respectively. For Stable Diffusion, we use
the same sigma schedule as that in DDIM.

\subsection{Text Prompts}
For the text to image generation in \Cref{fig:sd-figures}, the text prompts used
are:
\begin{itemize}
\item ``A digital Illustration of the Babel tower, 4k, detailed, trending in artstation, fantasy vivid colors''
\item ``London luxurious interior living-room, light walls''
\item ``Cluttered house in the woods, anime, oil painting, high resolution, cottagecore, ghibli inspired, 4k''
\end{itemize}

\end{document}